\newcommand{\RN}[1]{%
  \textup{\uppercase\expandafter{\romannumeral#1}}%
}
\definecolor{ultramarine}{rgb}{0.,0.1,0.9}
\definecolor{darkgreen}{RGB}{1,130,32}
\let\oldthanks\thanks
\renewcommand{\thanks}[1]{\let\footnotemark\relax\oldthanks{#1}}
\theoremstyle{plain}
\title{Model-Based Reparameterization Policy Gradient Methods: Theory and Practical Algorithms}
\author{\normalsize
  Shenao Zhang\textsuperscript{\textnormal{1}} \thanks{\textsuperscript{\textnormal{1}}Northwestern University. \texttt{\{shenao,boyiliu2018\}@u.northwestern.edu,zhaoranwang@gmail.com}} \qquad
  Boyi Liu\textsuperscript{\textnormal{1}}  \qquad
  Zhaoran Wang\textsuperscript{\textnormal{1}\textdagger}  \qquad
  Tuo Zhao\textsuperscript{\textnormal{2}\textdagger} \thanks{\textsuperscript{\textnormal{2}}Georgia Tech. \texttt{tourzhao@gatech.edu}}\thanks{\textsuperscript{\textdagger} Equal advising.}
}
\begin{document}
\date{}
\maketitle

\begin{abstract}
ReParameterization (RP) Policy Gradient Methods (PGMs) have been widely adopted for continuous control tasks in robotics and computer graphics. However, recent studies have revealed that, when applied to long-term reinforcement learning problems, model-based RP PGMs may experience chaotic and non-smooth optimization landscapes with exploding gradient variance, which leads to slow convergence. This is in contrast to the conventional belief that reparameterization methods have low gradient estimation variance in problems such as training deep generative models. To comprehend this phenomenon, we conduct a theoretical examination of model-based RP PGMs and search for solutions to the optimization difficulties. Specifically, we analyze the convergence of the model-based RP PGMs and pinpoint the smoothness of function approximators as a major factor that affects the quality of gradient estimation. Based on our analysis, we propose a spectral normalization method to mitigate the exploding variance issue caused by long model unrolls. Our experimental results demonstrate that proper normalization significantly reduces the gradient variance of model-based RP PGMs. As a result, the performance of the proposed method is comparable or superior to other gradient estimators, such as the Likelihood Ratio (LR) gradient estimator. Our code is available at \url{https://github.com/agentification/RP_PGM}.
\end{abstract}

\section{Introduction}
Reinforcement Learning (RL) has seen tremendous success in a variety of sequential decision-making applications, such as strategy games \citep{silver2017mastering,vinyals2019alphastar} and robotics \citep{duan2016benchmarking,wang2019benchmarking}, by identifying actions that maximize long-term accumulated rewards. As one of the most popular methodologies, the policy gradient methods (PGM) \citep{sutton1999policy,kakade2001natural,silver2014deterministic} seek to search for the optimal policy by iteratively computing and following a stochastic gradient direction with respect to the policy parameters. Therefore, the quality of the stochastic gradient estimation is essential for the effectiveness of PGMs.

Two main categories have emerged in the realm of stochastic gradient estimation: (1) Likelihood Ratio (LR) estimators, which perform zeroth-order estimation through the sampling of function evaluations \citep{williams1992simple,konda1999actor,kakade2001natural}, and (2) ReParameterization (RP) gradient estimators, which harness the differentiability of the function approximation \citep{figurnov2018implicit,ruiz2016generalized,clavera2020model,suh2022differentiable}. Despite the wide adoption of both LR and RP PGMs in practice, the majority of the literature on the theoretical properties of PGMs focuses on LR PGMs. The optimality and approximation error of LR PGMs have been heavily investigated under various settings \citep{agarwal2021theory,wang2019neural,bhandari2019global}. Conversely, the theoretical underpinnings of RP PGMs remain to be fully explored, with a dearth of research on the quality of RP gradient estimators and the convergence of RP PGMs.
%Their convergence analysis has been established under various settings, and the estimation quality of the LR gradient estimators has been heavily investigated \citep{agarwal2021theory,wang2019neural,bhandari2019global}. Conversely, the theoretical underpinnings of RP PGMs remain to be fully explored, with a dearth of research on the convergence and properties of RP policy gradient methods.

RP gradient estimators have established themselves as a reliable technique for training deep generative models such as variational autoencoders \citep{figurnov2018implicit}. From a stochastic optimization perspective, previous studies \citep{ruiz2016generalized,mohamed2020monte} have shown that RP gradient methods enjoy small variance, which leads to better convergence and performance. However, recent research \citep{parmas2018pipps,metz2021gradients} has reported an opposite observation: When applied to long-horizon reinforcement learning problems, model-based RP PGMs tend to encounter chaotic optimization procedures and highly non-smooth optimization landscapes with exploding gradient variance, causing slow convergence. 

Such an intriguing phenomenon inspires us to delve deeper into the theoretical properties of RP gradient estimators in search of a remedy for the issue of exploding gradient variance in model-based RP PGMs. To this end, we present a unified theoretical framework for the examination of model-based RP PGMs and establish their convergence results. Our analysis implies that the smoothness and accuracy of the learned model are crucial determinants of the exploding variance of RP gradients: (1) both the gradient variance and bias exhibit a polynomial dependence on the Lipschitz continuity of the learned model and policy w.r.t. the input state, with degrees that increase linearly with the steps of model value expansion, and (2) the bias also depends on the error of the estimated model and value.

Our findings suggest that imposing smoothness on the model and policy can greatly decrease the variance of RP gradient estimators. To put this discovery into practice, we propose a spectral normalization method to enforce the smoothness of the learned model and policy. It's worth noting that this method can enhance the algorithm's efficiency without substantially compromising accuracy when the underlying transition kernel is smooth. However, if the transition kernel is not smooth, enforcing smoothness may lead to increased error in the learned model and introduce bias. In such cases, a balance should be struck between model bias and gradient variance. Nonetheless, our empirical study demonstrates that the reduced gradient variance when applying spectral normalization leads to a significant performance boost, even with the cost of a higher bias. Furthermore, our results highlight the potential of investigating model-based RP PGMs, as they demonstrate superiority over other model-based and Likelihood Ratio (LR) gradient estimator alternatives.

\section{Background}\label{sec_bg}
\paragraph{Reinforcement Learning.}
We consider learning to optimize an infinite-horizon $\gamma$-discounted Markov Decision Process (MDP) over repeated episodes of interaction. We denote by $\mathcal{S}\subseteq\mathbb{R}^{d_s}$ and $\mathcal{A}\subseteq\mathbb{R}^{d_a}$ the state and action space, respectively. When taking an action $a\in\mathcal{A}$ at a state $s\in\mathcal{S}$, the agent receives a reward $r(s,a)$ and the MDP transits to a new state $s'$ according to $s'\sim f(\cdot\,|\, s,a)$.

We aim to find a policy $\pi$ that maps a state to an action distribution to maximize the expected cumulative reward. We denote by $V^\pi: \mathcal{S}\rightarrow \mathbb{R}$ and $Q^\pi: \mathcal{S}\times\mathcal{A}\rightarrow \mathbb{R}$ the state value function and the state-action value function associated with $\pi$, respectively, which are defined as follows,
\$
Q^\pi(s, a) = (1 - \gamma)\cdot\EE_{\pi, f}\biggl[\sum_{i=0}^\infty \gamma^i\cdot r(s_i, a_i)\,\bigg|\, s_0 = s, a_0=a\biggr], \quad V^\pi(s) = \EE_{a\sim\pi}\biggl[Q^\pi(s, a)\biggr].
\$
Here $s\in\mathcal{S}$, $a\in\mathcal{A}$, and the expectation $\EE_{\pi, f}[\,\cdot\,]$ is taken with respect to the dynamic induced by the policy $\pi$ and the transition probability $f$. We denote by $\zeta$ the initial state distribution. Under policy $\pi$, the state and state-action visitation measure $\nu_\pi(s)$ over $\mathcal{S}$ and $\sigma_\pi(s, a)$ over $\mathcal{S}\times\mathcal{A}$ are defined as 
\$
\nu_\pi(s) = (1 - \gamma) \cdot\sum_{i=0}^\infty\gamma^i \cdot\mathbb{P}(s_i = s),\qquad
\sigma_\pi(s, a) =  (1 - \gamma)\cdot \sum_{i=0}^\infty\gamma^i \cdot\mathbb{P}(s_i = s, a_i = a),
\$
where the summations are taken with respect to the trajectory induced by $s_0\sim \zeta$, $a_i\sim\pi(\cdot\,|\,s_i)$, and $s_{i+1}\sim f(\cdot\,|\,s_i, a_i)$. The objective $J(\pi)$ of RL is defined as the expected policy value as follows,
\#\label{rl_obj}
J(\pi) = \mathbb{E}_{s_0\sim\zeta}\bigl[V^{\pi}(s_0)\bigr] = \EE_{(s, a)\sim\sigma_\pi}\bigl[r(s, a)\bigr].
\#

\paragraph{Stochastic Gradient Estimation.} The underlying problem of policy gradient, i.e., computing the gradient of an expectation with respect to the parameters of the sampling distribution, takes the form $\nabla_\theta\EE_{p(x;\theta)}[y(x)]$. To restore the RL objective, we can set $p(x;\theta)$ as the trajectory distribution conditioned on the policy parameter $\theta$ and $y(x)$ as the cumulative reward. In the sequel, we introduce two commonly used gradient estimators.

Likelihood Ratio (LR) Gradient (Zeroth-Order): By leveraging the \textit{score function}, LR gradients only require samples of the function values. Since $\nabla_\theta\log p(x;\theta) =\nabla_\theta p(x;\theta)/p(x;\theta)$, the LR gradient is
\#\label{lr_form}
\nabla_\theta\EE_{p(x;\theta)}\bigl[y(x)\bigr] = \EE_{p(x;\theta)}\bigl[y(x)\nabla_\theta\log p(x;\theta)\bigr].
\#

ReParameterization (RP) Gradient (First-Order): RP gradient benefits from the structural characteristics of the objective, i.e., how the overall objective is affected by the operations applied to the sources of randomness as they pass through the measure and into the cost function \citep{mohamed2020monte}. From the simulation property of continuous distribution, we have the following equivalent sampling processes:
\#\label{simu_cont}
\hat{x}\sim p(x;\theta)  \iff  \hat{x} = g(\hat{\epsilon}, \theta),\, \hat{\epsilon}\sim p(\epsilon),
\#
which states that an alternative way to generate a sample $\hat{x}$ from the distribution $p(x;\theta)$ is to sample from a simpler base distribution $p(\epsilon)$ and transform this sample using a deterministic function $g(\epsilon, \theta)$.
Derived from the \textit{law of the unconscious statistician} (LOTUS) \citep{grimmett2020probability}, i.e., $\EE_{p(x;\theta)}[y(x)] = \EE_{p(\epsilon)}[y(g(\epsilon; \theta))]$, the RP gradient can be formulated as $\nabla_\theta\EE_{p(x;\theta)}[y(x)] = \EE_{p(\epsilon)}[\nabla_\theta y(g(\epsilon; \theta))]$.

\section{Analytic RP Gradient in Reinforcement Learning}
\label{sec_rpg}
In this section, we present two fundamental \textit{analytic} forms of the RP gradient in RL. We first consider the Policy-Value Gradient (PVG) method, which is model-free and can be expanded sequentially to obtain the Analytic Policy Gradient (APG) method. Then we discuss potential obstacles that may arise when developing practical algorithms.

We consider a policy $\pi_\theta(s, \varsigma)$ with noise $\varsigma$ in continuous action spaces. To ensure that the first-order gradient through the value function is well-defined, we make the following continuity assumption.
\begin{assumption}[Continuous MDP]
\label{continuous}
We assume that $f(s'\,|\,s, a)$, $\pi_\theta(s,\varsigma)$\footnote{Due to the simulation property of continuous distributions in \eqref{simu_cont}, we interchangeably write $a\sim\pi_\theta(\cdot\,|\,s)$ with $a=\pi_\theta(s, \varsigma)$ and $s'\sim f(\cdot\,|\,s, a)$ with $s'=f(s, a, \xi^*)$, where $\xi^*$ is sampled from an unknown distribution.}, $r(s, a)$, and $\nabla_a r(s, a)$ are continuous in all parameters and variables $s$, $a$, $s'$.
\end{assumption}
\paragraph{Policy-Value Gradient.}
\label{mf_rp}
The reparameterization PVG takes the following general form,
\#\label{eq::general_rp}
\nabla_\theta J(\pi_\theta) =\EE_{s\sim\zeta, \varsigma\sim p}\Bigl[\nabla_\theta Q^{\pi_\theta}\bigl(s, \pi_\theta(s, \varsigma)\bigr)\Bigr].
\#

In sequential decision-making, any immediate action could lead to changes in all future states and rewards. Therefore, the value gradient $\nabla_\theta Q^{\pi_\theta}$ possesses a recursive structure. Adapted from the deterministic policy gradient theorem \citep{silver2014deterministic,lillicrap2015continuous} by considering  stochasticity, we rewrite \eqref{eq::general_rp} as
\$
\nabla_\theta J(\pi_{\theta}) &= \EE_{s\sim\nu_\pi, \varsigma}\Bigl[\nabla_\theta\pi_\theta(s, \varsigma)\cdot\nabla_a Q^{\pi_\theta}(s, a)\big|_{a = \pi_\theta(s, \varsigma)}\Bigr].
\$
Here, $\nabla_a Q^{\pi_\theta}$ can be estimated using a critic, which leads to model-free frameworks \citep{heess2015learning,amos2021model}. Notably, as a result of the recursive structure of $\nabla_\theta Q^{\pi_\theta}$, the expectation is taken over the state visitation $\nu_\pi$ instead of the initial distribution $\zeta$.

By sequentially expanding PVG, we obtain the analytic representation of the policy gradient.

\sloppy\paragraph{Analytic Policy Gradient.} From the Bellman equation $V^{\pi_\theta}(s) = \EE_{\varsigma}[(1 - \gamma) r(s, \pi_\theta(s, \varsigma)) + \gamma\EE_{\xi^*}[V^{\pi_\theta}(f(s,  \pi_\theta(s, \varsigma), \xi^*))]]$, we obtain the following backward recursions:
\#\label{value_grad}
\nabla_\theta V^{\pi_\theta}(s) &= \EE_{\varsigma}\Bigl[(1 - \gamma)\nabla_a r \nabla_\theta\pi_\theta+ \gamma\EE_{\xi^*}\bigl[\nabla_{s'} V^{\pi_\theta}(s') \nabla_a f\nabla_\theta\pi_\theta + \nabla_\theta V^{\pi_\theta}(s')\bigr]\Bigr],\\
\label{value_grad_2}\nabla_s V^{\pi_\theta}(s) &= \EE_{\varsigma}\Bigl[(1 - \gamma)(\nabla_s r + \nabla_a r\nabla_s\pi_\theta) + \gamma\EE_{\xi^*}\bigl[\nabla_{s'} V^{\pi_\theta}(s')(\nabla_s f + \nabla_a f\nabla_s\pi_\theta)\bigr]\Bigr].
\#
See \S\ref{apg_approx} for detailed derivations of \eqref{value_grad} and \eqref{value_grad_2}. Now we have the RP gradient backpropagated through the transition path starting at $s$. By taking an expectation over the initial state distribution, we obtain the Analytic Policy Gradient (APG) $\nabla_\theta J(\pi_{\theta}) = \EE_{s\sim\zeta}[\nabla_\theta V^{\pi_\theta}(s)]$.

There remain challenges when developing practical algorithms: (1) the above formulas require the gradient information of the transition function $f$. In this work, however, we consider a common RL setting where $f$ is unknown and needs to be fitted by a model. It is thus natural to ask how the properties of the model (e.g., prediction accuracy and model smoothness) affect the gradient estimation and the convergence of the resulting algorithms, and (2) even if we have access to an accurate model, unrolling it over full sequences faces practical difficulties. The memory and computational cost scale linearly with the unroll length. Long chains of nonlinear mappings can also lead to exploding or vanishing gradients and even worse, chaotic phenomenons \citep{bollt2000controlling} and difficulty in optimization \citep{pascanu2013difficulty,maclaurin2015gradient,vicol2021unbiased,metz2019understanding}. These difficulties demand some form of truncation when performing RP PGMs.

\section{Model-Based RP Policy Gradient Methods}
\label{sec_mb_rp_pgm}
Through the application of Model Value Expansion (MVE) for model truncation, this section unveils two RP policy gradient frameworks constructed upon MVE.
\subsection{$h$-Step Model Value Expansion}
To handle the difficulties inherent in full unrolls, many algorithms employ direct truncation, where the long sequence is broken down into short sub-sequences and backpropagation is applied accordingly, e.g., Truncated BPTT \cite{werbos1990backpropagation}. However, such an approach over-prioritizes short-term dependencies, which leads to biased gradient estimates.

In model-based RL (MBRL), one viable solution is to adopt the $h$-step Model Value Expansion \cite{feinberg2018model}, which decomposes the value estimation $\hat{V}^\pi(s)$ into the rewards gleaned from the learned model and a residual estimated by a critic function $\hat{Q}_\omega$, that is,
\$
\hat{V}^{\pi_\theta}(s) = (1 - \gamma)\cdot\Biggl(\sum_{i=0}^{h-1} \gamma^i\cdot r(\hat{s}_i, \hat{a}_i) + \gamma^h\cdot\hat{Q}_\omega(\hat{s}_{h}, \hat{a}_{h})\Biggr),
\$
where $\hat{s}_{0} = s$, $\hat{a}_{i}=\pi_\theta(\hat{s}_{i}, \varsigma)$, and $\hat{s}_{i+1}=\hat{f}_\psi(\hat{s}_i, \hat{a}_i, \xi)$. Here, the noise variables $\varsigma$ and $\xi$ can be sampled from the fixed distributions or inferred from the real samples, which we now discuss.

\subsection{Model-Based RP Gradient Estimation}
\label{sec::model_use}
Utilizing the pathwise gradient with respect to $\theta$, we present the following two frameworks.

\paragraph{Model Derivatives on Predictions (DP).} A straightforward way to compute the first-order gradient is to link the reward, model, policy, and critic together and backpropagate through them. Specifically, the differentiation is carried out on the trajectories simulated by the model $\hat{f}_\psi$, which serves as a tool for \textit{both} the prediction of states and the evaluation of derivatives. The corresponding RP-DP estimator of gradient $\nabla_\theta J(\pi_{\theta})$ is denoted as $\hat{\nabla}_\theta^\text{DP} J(\pi_{\theta})$, which takes the form of
\#\label{DP_ge}
\hat{\nabla}_\theta^\text{DP} J(\pi_{\theta}) &= \frac{1}{N}\sum_{n=1}^N \nabla_\theta\biggl(\sum_{i=0}^{h-1} \gamma^i\cdot r(\hat{s}_{i,n}, \hat{a}_{i,n}) + \gamma^h\cdot \hat{Q}_\omega(\hat{s}_{h,n}, \hat{a}_{h,n})\biggr),
\#
where $\hat{s}_{0,n}\sim\mu_{\pi_\theta}$, $\hat{a}_{i,n}=\pi_\theta(\hat{s}_{i,n}, \varsigma_n)$, and $\hat{s}_{i+1, n}=\hat{f}_\psi(\hat{s}_{i,n}, \hat{a}_{i, n}, \xi_n)$ with noises $\varsigma_n\sim p(\varsigma)$ and $\xi_n\sim p(\xi)$. Here, $\mu_{\pi_\theta}$ is the distribution where the initial states of the simulated trajectories are sampled. In Section \ref{sec_main_result}, we study a general form of $\mu_{\pi_\theta}$ that is a mixture of the initial state distribution $\zeta$ and the state visitation $\nu_{\pi_\theta}$.

Various algorithms can be instantiated from \eqref{DP_ge} with different choices of $h$. When $h=0$, the framework reduces to model-free policy gradients, such as RP($0$) \citep{amos2021model} and the variants of DDPG \citep{lillicrap2015continuous}, e.g., SAC \citep{haarnoja2018soft}. When $h\rightarrow\infty$, the resulting algorithm is BPTT \citep{grzeszczuk1998neuroanimator,degrave2019differentiable,bastani2020sample} where only the model is learned. Recent model-based approaches, such as MAAC \citep{clavera2020model} and related algorithms \citep{parmas2018pipps,amos2021model,li2021gradient}, require a carefully selected $h$.

\paragraph{Model Derivatives on Real Samples (DR).} An alternative approach is to use the learned differentiable model solely for the calculation of derivatives, with the aid of Monte-Carlo estimates obtained from \textit{real} samples. By replacing $\nabla_a f, \nabla_s f$ in \eqref{value_grad}-\eqref{value_grad_2} with $\nabla_a\hat{f}_\psi, \nabla_a\hat{f}_\psi$ and setting the termination of backpropagation at the $h$-th step as $\hat{\nabla} V^{\pi_\theta}(\hat{s}_{h, n}) = \nabla\hat{V}_\omega(\hat{s}_{h, n})$, we are able to derive a dynamic representation of $\hat{\nabla}_\theta V^{\pi_\theta}$, which we defer to \S\ref{apg_approx}. The corresponding RP-DR gradient estimator is
\#\label{DR_ge}
\hspace{-0.1cm}\hat{\nabla}_\theta^\text{DR} J(\pi_{\theta}) = \frac{1}{N}\sum_{n=1}^N \hat{\nabla}_\theta V^{\pi_\theta}(\hat{s}_{0,n}),
\#
where $\hat{s}_{0,n}\sim\mu_{\pi_\theta}$. Equation \eqref{DR_ge} can be specified as \eqref{app_eq_dr}, which is in the same format as \eqref{DP_ge}, but with the noise variables $\varsigma_n$, $\xi_n$ inferred from the real data sample $(s_{i}, a_{i}, s_{i+1})$ via the relation $a_{i} = \pi_\theta(s_{i}, \varsigma_n)$ and $s_{i+1} = \hat{f}_\psi(s_{i}, a_{i}, \xi_n)$ (see \S\ref{apg_approx} for details). Algorithms such as SVG \citep{heess2015learning} and its variants \citep{abbeel2006using,atkeson2012efficient} are examples of this RP-DR method.

\subsection{Algorithmic Framework}
The pseudocode of model-based RP PGMs is presented in Algorithm \ref{alg::RP}, where three update procedures are performed iteratively. In other words, the policy, model, and critic are updated at each iteration $t\in[T]$, generating sequences of $\{\pi_{\theta_t}\}_{t\in[T+1]}$, $\{\hat{f}_{\psi_t}\}_{t\in[T]}$, and $\{\hat{Q}_{\omega_t}\}_{t\in[T]}$, respectively.

\begin{algorithm}[H]
\caption{Model-Based Reparameterization Policy Gradient}
\begin{algorithmic}[1]\label{alg::RP}
\REQUIRE Number of iterations $T$, learning rate $\eta$, batch size $N$, empty dataset $\mathcal{D}$
\FOR{iteration $t \in [T]$}
\STATE \label{line::model}
Update the model parameter $\psi_{t}$ by MSE or MLE
\STATE \label{line::policy_eval}
Update the critic parameter $\omega_{t}$ by performing Temporal Difference
\STATE Sample states from $\mu_{\pi_t}$ and estimate $\hat\nabla_\theta J(\pi_{\theta_t})=\hat{\nabla}_\theta^\text{DP} J(\pi_{\theta_t})$ \eqref{DP_ge} or $\hat{\nabla}_\theta^\text{DR} J(\pi_{\theta_t})$ \eqref{DR_ge}
\STATE{Update the policy parameter $\theta_{t}$ by $
\theta_{t+1} \leftarrow\theta_t + \eta\cdot\hat{\nabla}_\theta J(\pi_{\theta_t})$
}
\STATE{Execute $\pi_{\theta_{t+1}}$ and save data to $\mathcal{D}$ to obtain $\mathcal{D}_{t+1}$}
\ENDFOR 
\end{algorithmic}
\end{algorithm}
  
\paragraph{Policy Update.} The update rule for the policy parameter $\theta\in\Theta$ with learning rate $\eta$ is as follows,
\#\label{eq_policy_upd}
\theta_{t+1} \leftarrow \theta_t + \eta\cdot\hat{\nabla}_\theta J(\pi_{\theta_t}),
\#
where $\hat{\nabla}_\theta J(\pi_{\theta_t})$ can be specified as $\hat{\nabla}_\theta^\text{DP} J(\pi_{\theta_t})$ or $\hat{\nabla}_\theta^\text{DR} J(\pi_{\theta_t})$.

\paragraph{Model Update.}
By predicting the mean of transition with minimized mean squared error (MSE) or fitting a probabilistic model with maximum likelihood estimation (MLE), e.g., $\psi_t = \argmax_{\psi\in\Psi}\EE_{\mathcal{D}_t}[\log \hat{f}_{\psi}(s_{i+1}|s_i, a_i)]$, canonical MBRL methods learn forward models that predict how the system evolves when an action is taken at a state. 

However, accurate state predictions do not imply accurate RP gradient estimation. Thus, we define $\epsilon_f(t)$ to denote the model (gradient) error at iteration $t$:
\#\label{error_model_def}
\epsilon_f(t) = \max_{i\in[h]}\,\EE_{\mathbb{P}(s_i, a_i), \mathbb{P}(\hat{s}_i, \hat{a}_i)}\biggl[\biggl\|\frac{\partial s_i}{\partial s_{i-1}} - \frac{\partial\hat{s}_i}{\partial\hat{s}_{i-1}}\biggr\|_2 + \biggl\|\frac{\partial s_i}{\partial a_{i-1}} - \frac{\partial\hat{s}_i}{\partial\hat{a}_{i-1}}\biggr\|_2\biggr],
\#
where $\mathbb{P}(s_i, a_i)$ is the true state-action distribution at the $i$-th timestep by following $s_0\sim \nu_{\pi_{\theta_t}}$, $a_j\sim\pi_{\theta_t}(\cdot\,|\,s_j)$, $s_{j+1}\sim f(\cdot\,|\,s_j, a_j)$, with policy and transition noise sampled from a fixed distribution. Similarly, $\mathbb{P}(\hat{s}_i, \hat{a}_i)$ is the model rollout distribution at the $i$-th timestep by following $\hat{s}_0\sim \nu_{\pi_{\theta_t}}$, $\hat{a}_j\sim\pi_{\theta_t}(\cdot\,|\,\hat{s}_j)$, $\hat{s}_{j+1}\sim \hat{f}_{\psi_t}(\cdot\,|\,\hat{s}_j, \hat{a}_j)$, where the noise is sampled when we use RP-DP gradient estimator and is inferred from real samples when we use RP-DR gradient estimator (in this case $\mathbb{P}(\hat{s}_i, \hat{a}_i) = \mathbb{P}(s_i, a_i)$).

In MBRL, it is common to learn a state-predictive model that can make multi-step predictions. However, this presents a challenge in reconciling the discrepancy between minimizing state prediction error and the gradient error of the model. Although it is natural to consider regularizing the models' directional derivatives to be consistent with the samples \citep{li2021gradient}, we contend that the use of state-predictive models does \textit{not} cripple our analysis of gradient bias based on $\epsilon_f$: For learned models that extrapolate beyond the visited regions, the gradient error can still be bounded via finite difference. In other words, $\epsilon_f$ can be expressed as the mean squared training error with an additional measure of the model class complexity to capture its generalizability.  This same argument can also be applied to the case of learning a critic through temporal difference.

\paragraph{Critic Update.}
For any policy $\pi$, its value function $Q^\pi$ satisfies the Bellman equation, which has a unique solution. In other words, $Q = \mathcal{T}^\pi Q$ if and only if $Q = Q^\pi$. The Bellman operator $\mathcal{T}^\pi$ is defined for any $(s, a)\in\mathcal{S}\times\mathcal{A}$ as
\$
\mathcal{T}^\pi Q(s, a) = \EE_{\pi, f}\bigl[(1 - \gamma)\cdot r(s, a) + \gamma\cdot Q(s', a')\bigr].
\$

We aim to approximate the state-action value function $Q^\pi$ with a critic $\hat{Q}_\omega$. Due to the solution uniqueness of the Bellman equation, it can be achieved by minimizing the mean-squared Bellman error $\omega_t = \argmin_{\omega\in\Omega}\EE_{\mathcal{D}_t}[(\hat{Q}_\omega(s, a) - \mathcal{T}^{\pi}\hat{Q}_\omega(s, a))^2]$ via Temporal Difference (TD) \citep{sutton1988learning,cai2019neural}. We define the critic error at the $t$-th iteration as follows,
\#\label{error_critic_def}
\epsilon_v(t) &= \alpha^2\cdot\EE_{\mathbb{P}(s_h, a_h), \mathbb{P}(\hat{s}_h, \hat{a}_h)}\biggl[\biggl\|\frac{\partial Q^{\pi_{\theta_t}}}{\partial s} - \frac{\partial\hat{Q}_{\omega_t}}{\partial\hat{s}}\biggr\|_2 + \biggl\|\frac{\partial Q^{\pi_{\theta_t}}}{\partial a} - \frac{\partial\hat{Q}_{\omega_t}}{\partial\hat{a}}\biggr\|_2\biggr],
\#
where $\alpha= (1 - \gamma) / \gamma^h$ and $\mathbb{P}(s_h, a_h)$, $\mathbb{P}(\hat{s}_h, \hat{a}_h)$ are distributions at timestep $h$ with the same definition as in \eqref{error_model_def}. The inclusion of $\alpha^2$ ensures that the critic error remains in alignment with the single-step model error $\epsilon_f$: (1) the critic estimates the tail terms that occur after $h$ steps in the model expansion, therefore the step-average critic error should be inversely proportional to the tail discount summation $\sum_{i=h}^\infty\gamma^i = 1 / \alpha$, and (2) the quadratic form shares similarities with the canonical MBRL analysis -- the cumulative error of the model trajectories scales linearly with the single-step prediction error and quadratically with the considered horizon (i.e., tail after the $h$-th step). This is because the cumulative error is linear in the considered horizon and the maximum state discrepancy, which is linear in the single-step error and, again, the horizon \citep{janner2019trust}.

\section{Main Results}\label{sec_main_result}
In what follows, we present our main theoretical results, whose detailed proofs are deferred to \S\ref{app_proof}. Specifically, we analyze the convergence of model-based RP PGMs and, more importantly, study the correlation between the convergence rate, gradient bias, variance, smoothness of the model, and approximation error. Based on our theory, we propose various algorithmic designs for MB RP PGMs.

To begin with, we impose a common regularity condition on the policy functions following previous works \citep{xu2019sample,pirotta2015policy,zhang2020global,agarwal2021theory}. The assumption below essentially ensures the smoothness of the objective $J(\pi_\theta)$, which is required by most existing analyses of policy gradient methods \citep{wang2019neural,bastani2020sample,agarwal2020optimality}.
\begin{assumption}[Lipschitz and Bounded Score Function]
\label{assu_smooth_policy}
We assume that the score function of policy $\pi_\theta$ is Lipschitz continuous and has bounded norm $\forall (s, a)\in\mathcal{S}\times\mathcal{A}$, that is,
\$
\big\|\log\pi_{\theta_1}(a\,|\,s) - \log\pi_{\theta_2}(a\,|\,s)\big\|_2\leq L_1\cdot\|\theta_1 - \theta_2\|_2,\quad
\big\|\log\pi_{\theta}(a\,|\,s)\big\|_2\leq B_\theta.
\$
\end{assumption}

We characterize the convergence of RP PGMs by first providing the following proposition.

\begin{proposition}[Convergence to Stationary Point]
\label{prop1}
We define the gradient bias $b_t$ and variance $v_t$ as
\$
b_t = \big\| \nabla_{\theta}J(\pi_{\theta_t}) - \EE\bigl[\hat{\nabla}_{\theta} J(\pi_{\theta_t})\bigr] \big\|_2,\quad
v_t = \EE\bigl[\big\|\hat{\nabla}_{\theta} J(\pi_{\theta_t}) - \EE\bigl[\hat{\nabla}_{\theta} J(\pi_{\theta_t})\bigr]\big\|_2^2\bigr].
\$
Suppose the absolute value of the reward $r(s,a)$ is bounded by $|r(s,a)|\leq r_\text{m}$ for $(s, a)\in\mathcal{S}\times\mathcal{A}$. Let $\delta = \sup\|\theta\|_2$, $L = r_\text{m}\cdot L_1 / (1 - \gamma)^2 + (1 + \gamma)\cdot r_\text{m}\cdot B_\theta^2 / (1 - \gamma)^3$, and $c = (\eta - L\eta^2)^{-1}$. It then holds for $T \geq 4L^2$ that
\$
\min_{t\in[T]}\EE\Bigl[\big\|\nabla_{\theta}J(\pi_{\theta_t})\big\|_2^2\Bigr]\leq \frac{4c}{T}\cdot\EE\big[J(\pi_{\theta_T}) - J(\pi_{\theta_1})\big] + \frac{4}{T} \biggl(\sum_{t=0}^{T-1} c(2\delta\cdot b_{t} +  \frac{\eta}{2}\cdot v_{t}) + b_t^2 + v_t\biggr).
\$
\end{proposition}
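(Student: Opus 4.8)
The plan is to treat the policy update \eqref{eq_policy_upd} as a \emph{biased} stochastic gradient ascent scheme on the objective $J(\pi_\theta)$ and to run the textbook non-convex analysis (smoothness $\Rightarrow$ one-step ascent $\Rightarrow$ telescoping), the only twist being that the bias $b_t$ does not vanish and so has to be carried through every step. The first and main ingredient is to show that $J(\pi_\theta)$ is $L$-smooth with exactly the stated constant $L = r_\text{m} L_1/(1-\gamma)^2 + (1+\gamma)\,r_\text{m}\,B_\theta^2/(1-\gamma)^3$. Starting from the policy-gradient (likelihood-ratio) representation $\nabla_\theta J(\pi_\theta) = \EE_{(s,a)\sim\sigma_{\pi_\theta}}[Q^{\pi_\theta}(s,a)\,\nabla_\theta\log\pi_\theta(a\,|\,s)]$, one differentiates once more in $\theta$ and decomposes the variation of $\nabla_\theta J$ into two channels: (i) the variation of the score, controlled by its Lipschitz constant $L_1$ from Assumption \ref{assu_smooth_policy}, which after using $|Q^{\pi_\theta}|\le r_\text{m}$ (from $|r|\le r_\text{m}$ and the $(1-\gamma)$-normalization) and $\sum_{i\ge 0}\gamma^i=(1-\gamma)^{-1}$ gives the $r_\text{m} L_1/(1-\gamma)^2$ term; and (ii) the variation in $\theta$ of $Q^{\pi_\theta}$ and of the visitation measure $\sigma_{\pi_\theta}$, each Lipschitz with a factor $B_\theta/(1-\gamma)$ because the score is bounded by $B_\theta$, which gives the $r_\text{m} B_\theta^2/(1-\gamma)^3$ term (the $(1+\gamma)$ absorbing the two sub-contributions). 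The same estimates also yield a uniform bound on $\|\nabla_\theta J(\pi_\theta)\|_2$ over $\theta$, which is used together with $\delta=\sup\|\theta\|_2$ in the cross term below.

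Next, $L$-smoothness gives the one-step inequality $J(\pi_{\theta_{t+1}}) \ge J(\pi_{\theta_t}) + \eta\langle\nabla_\theta J(\pi_{\theta_t}),\hat\nabla_\theta J(\pi_{\theta_t})\rangle - \tfrac{L\eta^2}{2}\|\hat\nabla_\theta J(\pi_{\theta_t})\|_2^2$. Conditioning on $\theta_t$ and writing $g_t := \EE[\hat\nabla_\theta J(\pi_{\theta_t})\mid\theta_t]$, one uses $\EE[\|\hat\nabla_\theta J(\pi_{\theta_t})\|_2^2\mid\theta_t] = \|g_t\|_2^2 + v_t$ and $\|g_t - \nabla_\theta J(\pi_{\theta_t})\|_2 = b_t$, then Cauchy--Schwarz in the form $\langle\nabla_\theta J(\pi_{\theta_t}),g_t\rangle \ge \|\nabla_\theta J(\pi_{\theta_t})\|_2^2 - \|\nabla_\theta J(\pi_{\theta_t})\|_2 b_t$ and $\|g_t\|_2^2 \le 2\|\nabla_\theta J(\pi_{\theta_t})\|_2^2 + 2b_t^2$. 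The coefficient of $\|\nabla_\theta J(\pi_{\theta_t})\|_2^2$ that survives is $\eta - L\eta^2 = c^{-1}$; bounding $\|\nabla_\theta J(\pi_{\theta_t})\|_2$ in the cross term by the uniform gradient bound (this is where $\delta$ enters), using $L\eta^2\le\eta$ to absorb the remaining $b_t^2$ and $v_t$ pieces, and multiplying through by $c$ yields the per-iteration estimate
\[
\EE\big[\|\nabla_\theta J(\pi_{\theta_t})\|_2^2 \,\big|\, \theta_t\big] \;\le\; c\big(\EE[J(\pi_{\theta_{t+1}})\mid\theta_t] - J(\pi_{\theta_t})\big) + c\big(2\delta\,b_t + \tfrac{\eta}{2} v_t\big) + b_t^2 + v_t .
\]

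Taking full expectations, summing $t$ from $0$ to $T-1$, telescoping the first term to $\EE[J(\pi_{\theta_T}) - J(\pi_{\theta_1})]$, dividing by $T$, and using $\min_{t\in[T]}\EE[\|\nabla_\theta J(\pi_{\theta_t})\|_2^2]\le \tfrac1T\sum_t \EE[\|\nabla_\theta J(\pi_{\theta_t})\|_2^2]$ gives the claimed bound. The hypothesis $T\ge 4L^2$ (with the step size taken of order $T^{-1/2}$) is precisely what guarantees $\eta - L\eta^2>0$, so that $c$ is well-defined and positive and the absorptions such as $L\eta^2\le\eta$ are legitimate; the global factor $4$ is the accumulated slack from these steps and from $\min\le$ average.

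The step I expect to be the real work is the smoothness bound: producing the explicit constant $L$ requires differentiating the policy gradient through both $Q^{\pi_\theta}$ and the visitation measure $\sigma_{\pi_\theta}$ and tracking how boundedness and Lipschitzness of the score compound over the discounted horizon — the two $\gamma$-dependent terms in $L$ are exactly the signatures of those two channels. Once smoothness is available, the remainder is the standard biased stochastic-gradient template, the only care needed being to keep $b_t$ inside the inner-product and second-moment estimates rather than discarding it as one would for an unbiased estimator.
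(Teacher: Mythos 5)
Your overall template — establish $L$-smoothness of $J(\pi_\theta)$, run the one-step ascent inequality for a biased stochastic gradient, telescope, and pass to the minimum — is the same family of argument the paper relies on; note, though, that the paper does not carry this out in-house: it quotes the smoothness constant $L$ from Lemma 3.2 of \cite{zhang2020global} and the entire convergence argument from Theorem 4.2 of \cite{zhang2023adaptive}, so re-deriving the smoothness constant (including the $(1+\gamma)$ bookkeeping you only gesture at) is extra work your sketch does not actually complete.

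The concrete gap is in how you produce the $2\delta\cdot b_t$ term. You write that you bound $\|\nabla_\theta J(\pi_{\theta_t})\|_2$ in the cross term ``by the uniform gradient bound (this is where $\delta$ enters),'' but $\delta=\sup\|\theta\|_2$ bounds the \emph{parameter} norm, not the gradient norm: any uniform gradient bound obtainable from your smoothness estimates is expressed through $r_\text{m}$, $B_\theta$, and $(1-\gamma)$, and there is no step in your sketch that converts $\|\nabla_\theta J(\pi_{\theta_t})\|_2\, b_t$ into $2\delta\, b_t$ after multiplying by $c$. To get the stated constant you must pair the bias with the diameter of the parameter set (e.g.\ through an inner product of the bias with a difference of iterates or a reference point, $\langle \nabla_\theta J - \EE[\hat\nabla_\theta J], \theta-\theta'\rangle \le 2\delta\, b_t$), which is a different mechanism from the one you describe. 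Relatedly, your closing claim that the global factor $4$ is just ``accumulated slack'' is not automatic: your per-iteration bound has factor $1$ on the telescoped term $\EE[J(\pi_{\theta_{t+1}})-J(\pi_{\theta_t})]$, and since the telescoped sum $\EE[J(\pi_{\theta_T})-J(\pi_{\theta_1})]$ need not be nonnegative for a biased ascent scheme, you cannot simply inflate it by $4$; the factor has to come out of the actual chain of inequalities (e.g.\ from $\|a\|_2^2\le 2\|a-b\|_2^2+2\|b\|_2^2$ applied to $\|\nabla_\theta J\|_2^2$ itself), as it does in the cited proof. Until the cross-term mechanism and the constant bookkeeping are fixed, your sketch does not establish the inequality as stated, even though the high-level route is the right one.
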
\vspace{-0.2cm}
Proposition \ref{prop1} illustrates the interdependence between the convergence and the variance, bias of the gradient estimators. In order for model-based RP PGMs to converge, it is imperative to maintain both the variance and bias at sublinear growth rates. Prior to examining the upper bound of $b_t$ and $v_t$, we make the following Lipschitz assumption, which has been implemented in a plethora of preceding studies \citep{pirotta2015policy,clavera2020model,li2021gradient}.

\begin{assumption}[Lipschitz Continuity]
\label{lip_assumption}
We assume that $r(s, a)$ and $f(s, a, \xi^*)$ are $L_r$ and $L_f$ Lipschitz continuous, respectively. Formally, for any $s_1, s_2\in\mathcal{S}$, $a_1, a_2\in\mathcal{A}$, and $\xi^*$,
\$
\big|r(s_1, a_1) - r(s_2, a_2)\big| & \leq L_r\cdot \big\|(s_1 - s_2, a_1 - a_2)\big\|_2,\\
\big\|f(s_1, a_1, \xi^*) - f(s_2, a_2, \xi^*)\big\|_2& \leq L_f\cdot \big\|(s_1 - s_2, a_1 - a_2)\big\|_2.
\$
\end{assumption}

Let $\Tilde{L}_{g} = \max\{L_g, 1\}$, where $L_g$ is the Lipschitz of function $g$. We have the following result for gradient variance.

\begin{proposition}[Gradient Variance]
\label{prop::grad_var}
Under Assumption \ref{lip_assumption}, for any $t\in[T]$, the gradient variance of the estimator $\hat{\nabla}_\theta J(\pi_{\theta})$, which can be specified as $\hat{\nabla}_\theta^\text{DP} J(\pi_{\theta})$ or $\hat{\nabla}_\theta^\text{DR} J(\pi_{\theta})$, can be bounded by
\$
v_t = O\biggl(h^4\biggl(\frac{1-\gamma^h}{1-\gamma}\biggr)^2 \Tilde{L}_{\hat{f}}^{4h}\Tilde{L}_\pi^{4h} / N + h^4\gamma^{2h} \Tilde{L}_{\hat{f}}^{4h}\Tilde{L}_\pi^{4h} / N\biggr),
\$
where 
\$
L_{\hat{f}} = \sup_{\substack{\psi\in\Psi, s_1, s_2\in\mathcal{S},\\ a_1, a_2\in\mathcal{A}}}\frac{\|\hat{f}_\psi(s_1, a_1, \xi) - \hat{f}_\psi(s_2, a_2, \xi)\|_2}{\|(s_1 - s_2, a_1 - a_2)\|_2}, \quad L_{\pi} = \sup_{\theta\in\Theta, s_1, s_2\in\mathcal{S}, \varsigma}\frac{\|\pi_\theta(s_1, \varsigma) - \pi_\theta(s_2, \varsigma)\|_2}{\|s_1 - s_2\|_2}.
\$
\end{proposition}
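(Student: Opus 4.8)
\textbf{Proof Proposal for Proposition \ref{prop::grad_var}.}

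The plan is to bound the variance by bounding the second moment of the gradient estimator, using the fact that $v_t = \EE[\|\hat\nabla_\theta J - \EE\hat\nabla_\theta J\|_2^2] \leq \EE[\|\hat\nabla_\theta J\|_2^2]$, and then invoking independence of the $N$ samples to extract the $1/N$ factor (so it suffices to bound the per-sample second moment). The estimator $\hat\nabla_\theta J(\pi_\theta)$ is a sum of $h+1$ terms: the reward gradients $\nabla_\theta[\gamma^i r(\hat s_{i,n}, \hat a_{i,n})]$ for $i = 0, \dots, h-1$, and the terminal critic term $\nabla_\theta[\gamma^h \hat Q_\omega(\hat s_{h,n}, \hat a_{h,n})]$. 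First I would use the chain rule to expand each $\nabla_\theta r(\hat s_i, \hat a_i)$ along the simulated trajectory: this produces a sum over $j \le i$ of products of Jacobians $\nabla_\theta \pi_\theta$ at step $j$, followed by a chain of $\nabla_s \hat f_\psi$ and $\nabla_a \hat f_\psi$ (composed with $\nabla_s \pi_\theta$) factors propagating from step $j$ to step $i$, capped by $\nabla_s r$ or $\nabla_a r$. This is exactly the structure recorded in \eqref{value_grad}–\eqref{value_grad_2} with $f$ replaced by $\hat f_\psi$ and truncated at step $h$.

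The core estimates are then operator-norm bounds on each factor. Under Assumption \ref{lip_assumption} and the definitions of $L_{\hat f}, L_\pi$, each transition Jacobian block $\nabla_s \hat f_\psi$, $\nabla_a \hat f_\psi$ has norm $\le L_{\hat f}$, each $\nabla_s \pi_\theta$ has norm $\le L_\pi$, the reward gradients $\nabla_s r, \nabla_a r$ have norm $\le L_r$, and $\nabla_\theta \pi_\theta$ is bounded by a constant absorbed into the $O(\cdot)$. So a single chain from step $j$ to step $i$ contributes at most $\sim (\Tilde L_{\hat f} \Tilde L_\pi)^{i-j}$ (the tildes handle the case where a Lipschitz constant is below $1$, ensuring the bound is monotone in path length); summing over the $\le i+1 \le h$ choices of starting index $j$ and over the $h$ reward steps $i$, discounting by $\gamma^i$, gives a contribution of order $h \cdot \sum_{i=0}^{h-1}\gamma^i (\Tilde L_{\hat f}\Tilde L_\pi)^h = O\bigl(h \cdot \frac{1-\gamma^h}{1-\gamma}(\Tilde L_{\hat f}\Tilde L_\pi)^h\bigr)$ to the norm of the reward part, using that the path lengths are at most $h$. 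The terminal critic term is handled identically: $\nabla_s \hat Q_\omega, \nabla_a \hat Q_\omega$ are bounded by a constant (the critic lies in a fixed class $\Omega$), it is discounted by $\gamma^h$, and the chain back to $\theta$ through up to $h$ model steps gives $O\bigl(h \gamma^h (\Tilde L_{\hat f}\Tilde L_\pi)^h\bigr)$. Adding the two parts and squaring yields the per-sample second-moment bound $O\bigl(h^2\bigl((\frac{1-\gamma^h}{1-\gamma})^2 + \gamma^{2h}\bigr)(\Tilde L_{\hat f}\Tilde L_\pi)^{2h}\bigr)$; one extra factor of $h^2$ (for a total of $h^4$) appears when one is careful that the squared norm of a sum of $\Theta(h^2)$ path-terms incurs a Cauchy–Schwarz/triangle-inequality blow-up, or equivalently because the Jacobian products themselves are sums whose length must be counted. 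Dividing by $N$ from the i.i.d. averaging gives the claimed rate. The RP-DR case is structurally the same — the only difference is that the noises $\varsigma_n, \xi_n$ are inferred from real data rather than freshly sampled, which changes the sampling distribution $\mathbb{P}(\hat s_i, \hat a_i)$ but not the Lipschitz/chain-rule bookkeeping — so the identical argument applies.

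The main obstacle I anticipate is \emph{bookkeeping the combinatorial explosion of chain-rule terms correctly} to land on the precise power $h^4$ rather than something larger like $h^6$: one must track that $\nabla_\theta V^{\pi_\theta}(\hat s)$ via \eqref{value_grad} contains a nested recursion (the $\nabla_\theta V^{\pi_\theta}(s')$ term on the right-hand side), so unrolling it to depth $h$ produces $O(h)$ summands each of which is a product of $O(h)$ Jacobians — and then squaring the norm of this $O(h)$-term sum, together with the outer sum over $h$ reward steps, is what generates the $h^4$. A secondary subtlety is that taking the expectation over the trajectory distribution must be done carefully so that the worst-case Lipschitz bounds (which are suprema over $\mathcal S, \mathcal A, \Psi, \Theta$) genuinely dominate the integrand pointwise; since all the relevant constants are uniform suprema by hypothesis, this is routine but should be stated. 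I would also need the boundedness of $\nabla_\theta \pi_\theta$ over $\Theta$ and of the critic gradients over $\Omega$, which follow from the compactness/regularity assumptions on the parameter classes implicit in Assumptions \ref{assu_smooth_policy}–\ref{lip_assumption} and the critic construction.
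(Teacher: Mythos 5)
Your proposal is correct in substance, but it proves the bound by a genuinely different route than the paper. You bound the variance of a single-sample estimator by its \emph{second moment}, $v_t \leq \EE[\|\hat{\nabla}_\theta J(\pi_{\theta_t})\|_2^2]/N$, and then control the norm of one pathwise gradient by unrolling the chain rule, which is exactly the recursion the paper records in \eqref{s_theta_var}--\eqref{K_d}: $\|\ud\hat{x}_{i,n}/\ud\theta\|_2 \leq \hat{K}(i) = O(i\,L_\theta \Tilde{L}_{\hat{f}}^{i+1}\Tilde{L}_\pi^{i+1})$. Summing the discounted reward terms and the critic term and squaring gives a per-sample bound of order $h^2\bigl(\tfrac{1-\gamma^h}{1-\gamma}\bigr)^2\Tilde{L}_{\hat{f}}^{2h}\Tilde{L}_\pi^{2h} + h^2\gamma^{2h}\Tilde{L}_{\hat{f}}^{2h}\Tilde{L}_\pi^{2h}$, which (since $\Tilde{L}_{\hat{f}},\Tilde{L}_\pi\geq 1$) implies the stated $O(h^4(\cdot)\Tilde{L}^{4h})$ bound after dividing by $N$; so your argument does establish the proposition, and in fact a tighter version of it. The paper instead bounds $v_t$ by the squared deviation of an arbitrary trajectory's pathwise gradient from that of a fixed worst-case trajectory, and controls $\EE[\|\ud\hat{s}_{i,n}/\ud\theta - \ud\overline{s}_i/\ud\theta\|_2]$ through the coupled recursion of Lemma \ref{lemma_pathwise}; it is this difference recursion (a factor $(\Tilde{L}_{\hat{f}}\Tilde{L}_\pi)^{i}$ multiplying $\hat{K}(i-1)$, then squared) that produces the exponents $h^4$ and $\Tilde{L}^{4h}$ in the statement. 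Two remarks on your write-up: (i) your explanation of where the ``extra $h^2$'' comes from (a Cauchy--Schwarz blow-up over $\Theta(h^2)$ path terms) is spurious --- your own accounting of the $j\le i$ starting indices and the discounted sum over $i$ already counts all path terms, so no further factor arises; the $h^4$ in the paper comes from its different (looser) deviation-based decomposition, not from anything missing in yours; and (ii) like the paper, you need $L_\theta = \sup\|\nabla_\theta\pi_\theta\|_2$ and a Lipschitz constant $L_{\hat{Q}}$ for the critic, which you correctly flag; the $1/N$ step also rests on treating the $N$ rollouts as i.i.d., which matches the level of rigor in the paper's own proof.
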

We observe that the variance upper bound exhibits a polynomial dependence on the Lipschitz continuity of the model and policy, where the degrees are linear in the model unroll length. This makes sense intuitively, as the transition can be highly chaotic when $L_{\hat{f}}>1$ and $L_\pi>1$. This can result in diverging trajectories and variable gradient directions during training, leading to significant variance in the gradients.

\begin{remark}
Model-based RP PGMs with non-smooth models and policies can suffer from large variance and highly non-smooth loss landscapes, which can lead to slow convergence or failure during training even in simple toy examples \citep{parmas2018pipps,metz2021gradients,suh2022differentiable}. Proposition \ref{prop::grad_var} suggests that one can add smoothness regularization to avoid exploding gradient variance. See our discussion at the end of this section for more details.\hspace{-0.5cm}
\end{remark}

\textit{Model-based} RP PGMs possess unique advantages by utilizing proxy models for variance reduction. By enforcing the smoothness of the model, the gradient variance is reduced without a burden when the underlying transition is smooth. However, in cases of non-smooth dynamics, doing so may introduce additional bias due to increased model estimation error. This necessitates a trade-off between the model error and gradient variance. Nevertheless, our empirical study demonstrates that smoothness regularization improves performance in robotic locomotion tasks, despite the cost of increased bias.

Next, we study the gradient bias. We consider the case where the state distribution $\mu_\pi$, which is used for estimating the RP gradient, is a mixture of the initial distribution $\zeta$ of the MDP and the state visitation $\nu_\pi$. In other words, we consider $\mu_\pi = \beta\cdot\nu_\pi + (1 - \beta)\cdot\zeta$, where $\beta\in[0, 1]$. This form is of particular interest as it encompasses various state sampling schemes that can be employed, such as when $h=0$ and $h\rightarrow\infty$: When not utilizing a model, such as in SVG(0) \citep{heess2015learning,amos2021model} and DDPG \citep{lillicrap2015continuous}, states are sampled from $\nu_\pi$; while when unrolling the model over full sequences, as in BPTT, states are sampled from the initial distribution.

Given that the effects of policy actions extend to all future states and rewards, unless we know the exact policy value function, its gradient $\nabla_\theta Q^{\pi_\theta}$ cannot be simply represented by quantities in any finite timescale. Hence, the differentiation of the critic function does not align with the true value gradient that has recursive structures. To tackle this issue, we provide the gradient bias bound that is based on the measure of discrepancy between the initial distribution $\zeta$ and the state visitation $\nu_\pi$.

\begin{proposition}[Gradient Bias]
\label{prop_grad_bias}
We denote $\kappa = \sup_{\pi}\EE_{\nu_{\pi}}[(\ud\zeta / \ud\nu_{\pi}(s))^2]^{1/2}$, where $\ud\zeta / \ud\nu_{\pi}$ is the Radon-Nikodym derivative of $\zeta$ with respect to $\nu_{\pi}$. Let $\kappa' = \beta + \kappa\cdot(1 - \beta)$. Under Assumption \ref{lip_assumption}, for any $t\in[T]$, the gradient bias is bounded by
\$
b_t = O\Bigl(\kappa\kappa'h^2(1 - \gamma^h) \Tilde{L}_{\hat{f}}^{h} \Tilde{L}_{f}^{h} \Tilde{L}_\pi^{2h}\epsilon_{f,t} / (1 - \gamma) + \kappa' h\gamma^{3h}\Tilde{L}_{\hat{f}}^{h}\Tilde{L}_\pi^h\epsilon_{v,t} / (1 -\gamma)^2\Bigr),
\$
where $\epsilon_{f,t}$ and $\epsilon_{v,t}$ are the shorthand notations of $\epsilon_{f}(t)$ defined in \eqref{error_model_def} and $\epsilon_v(t)$ in \eqref{error_critic_def}, respectively.
\end{proposition}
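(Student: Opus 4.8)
The plan is to decompose the bias $b_t = \|\nabla_\theta J(\pi_{\theta_t}) - \EE[\hat{\nabla}_\theta J(\pi_{\theta_t})]\|_2$ into two sources of error: (i) the mismatch between the true sampling measure demanded by the analytic policy gradient (the state visitation $\nu_\pi$, which arises from the recursive structure of $\nabla_\theta Q^{\pi_\theta}$ as noted after \eqref{eq::general_rp}) and the actual mixture measure $\mu_\pi = \beta\nu_\pi + (1-\beta)\zeta$ used in the estimator; and (ii) the error incurred by replacing the true model derivatives $\nabla_s f, \nabla_a f$ and the true value gradients with the learned model $\hat f_\psi$ and critic $\hat Q_\omega$, truncating the unroll at step $h$. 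For part (i) I would write the discrepancy between an expectation under $\zeta$ and under $\nu_\pi$ as $\EE_{\nu_\pi}[(\ud\zeta/\ud\nu_\pi)(s)\cdot g(s)] - \EE_{\nu_\pi}[g(s)]$ and apply Cauchy–Schwarz to pull out $\kappa = \sup_\pi \EE_{\nu_\pi}[(\ud\zeta/\ud\nu_\pi)^2]^{1/2}$; mixing with weight $\beta$ then produces the factor $\kappa' = \beta + \kappa(1-\beta)$. This explains why the bias bound carries $\kappa\kappa'$ in the model-error term (one $\kappa$ from the analytic gradient's own visitation-measure structure via the recursion in \eqref{value_grad}, one $\kappa'$ from the sampling mixture) and only $\kappa'$ in the critic-error term.

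For part (ii), the core is a telescoping/backward-induction argument on the recursions \eqref{value_grad}–\eqref{value_grad_2}. I would unroll $\nabla_\theta V^{\pi_\theta}(s)$ exactly for $h$ steps and compare term-by-term against the truncated estimator $\hat\nabla_\theta V^{\pi_\theta}$ defined via $\nabla_a\hat f_\psi, \nabla_s\hat f_\psi$ with terminal condition $\nabla\hat V_\omega$. At each step $i \le h$, the per-step Jacobian of the state-to-state map is a product of factors of the form $(\nabla_s f + \nabla_a f\nabla_s\pi_\theta)$, which is bounded in operator norm by roughly $\tilde L_f\tilde L_\pi$ (true) or $\tilde L_{\hat f}\tilde L_\pi$ (model); the difference of a length-$i$ product of such matrices, one true and one learned, telescopes into a sum of $i$ terms, each containing one factor $\|\nabla f - \nabla\hat f\|$ (which is exactly $\epsilon_{f,t}$ after taking expectations as in \eqref{error_model_def}) sandwiched between products of bounded Jacobians. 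Summing over $i=1,\dots,h$ and weighting by the discount $\gamma^i$ and the reward-gradient bound (from $L_r$) gives the $h^2(1-\gamma^h)/(1-\gamma)$ prefactor and the $\tilde L_{\hat f}^h\tilde L_f^h\tilde L_\pi^{2h}$ Lipschitz dependence on the model-error term. The critic term arises from the terminal residual $\gamma^h(\nabla Q^{\pi_{\theta_t}} - \nabla\hat Q_{\omega_t})$ propagated back through $h$ model-Jacobian factors; with the normalization $\alpha = (1-\gamma)/\gamma^h$ absorbed into $\epsilon_v(t)$ in \eqref{error_critic_def}, this produces the $h\gamma^{3h}\tilde L_{\hat f}^h\tilde L_\pi^h\epsilon_{v,t}/(1-\gamma)^2$ contribution (the extra $\gamma^h$ powers and $(1-\gamma)^{-2}$ come from the discount bookkeeping and from re-expressing things in terms of the rescaled $\epsilon_v$). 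I should also handle the DP-versus-DR distinction: for RP-DR the rollout measure satisfies $\mathbb P(\hat s_i,\hat a_i) = \mathbb P(s_i,a_i)$ so the state-distribution-shift part of the model contribution vanishes and only the derivative-mismatch part survives, whereas for RP-DP an additional simulation-lemma-style bound on $\|\mathbb P(s_i,a_i) - \mathbb P(\hat s_i,\hat a_i)\|$ in terms of the one-step model error is needed — but this is dominated by the same $\epsilon_{f,t}$ up to the same Lipschitz and horizon factors.

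The main obstacle I anticipate is the bookkeeping in the telescoping step: carefully expanding the difference of two ordered matrix products $\prod_{j=1}^i M_j - \prod_{j=1}^i \hat M_j = \sum_{k=1}^i \big(\prod_{j<k}\hat M_j\big)(M_k - \hat M_k)\big(\prod_{j>k} M_j\big)$, bounding each sandwiched factor in the correct operator norm, and then making sure the accumulated powers of $\tilde L_{\hat f}, \tilde L_f, \tilde L_\pi$ and the accumulated $h$-polynomial and discount factors match the claimed bound exactly — in particular tracking why the model term gets $\tilde L_\pi^{2h}$ (since $\nabla_s\pi$ appears in the Jacobian and $\nabla_\theta\pi$ appears when differentiating w.r.t.\ $\theta$) while the critic term gets only $\tilde L_\pi^{h}$. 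A secondary subtlety is justifying the "finite difference" remark from the Model Update paragraph, i.e.\ that $\epsilon_{f,t}$ as defined via exact Jacobians legitimately captures the error of a state-predictive model that was trained only to minimize prediction MSE; I would either invoke that argument as given or fold a model-class-complexity term into $\epsilon_{f,t}$ and note the bound is stated in terms of the resulting quantity. Everything else — Cauchy–Schwarz for the measure change, geometric-series summation of the discount factors, and the triangle inequality combining parts (i) and (ii) — is routine.
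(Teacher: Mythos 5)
Your proposal follows essentially the same route as the paper's proof: the split into an $h$-step reward-expansion error plus a critic tail term, the Cauchy--Schwarz/Radon--Nikodym treatment of the mixture $\mu_\pi=\beta\nu_\pi+(1-\beta)\zeta$ producing the factors $\kappa$ and $\kappa'$ (with $\kappa\kappa'$ on the model-error term and $\kappa'$ on the critic term), and the telescoping/recursive bound on differences of path Jacobians with one per-step factor $\epsilon_{f,t}$ are exactly the steps carried out in the paper via its auxiliary lemmas. The only ingredients your sketch leaves implicit are precisely what those lemmas supply and what your ``terminal residual propagated through $h$ Jacobians'' step would need when executed: the visitation-measure representation of the recursive tail $\gamma^h\,\EE\bigl[\nabla_\theta V^{\pi_\theta}(s_h)\bigr]$ as an expectation of $\partial Q^{\pi_\theta}$ contracted with the step-$h$ path derivatives, and the Lipschitz bound $\|\partial Q^{\pi_\theta}/\partial s\|_2,\|\partial Q^{\pi_\theta}/\partial a\|_2\le L_Q$ (valid when $\gamma L_f(1+L_\pi)<1$) used to separate the Jacobian mismatch from the $\epsilon_{v,t}$ residual.
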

\vspace{-0.1cm}
The analysis above yields the identification of an optimal model expansion step $h^*$ that achieves the best convergence rate, whose form is presented by the following proposition.

\begin{proposition}[Optimal Model Expansion Step]
\label{prop::optimal_h}
Given $L_f\leq 1$, if we regularize the model and policy so that $L_{\hat{f}}\leq 1$ and $L_\pi\leq 1$, then when $\gamma\approx 1$, the optimal model expansion step $h^*$ at iteration $t$ that minimizes the convergence rate upper bound satisfies $h^* = \max\{h'^*, 0\}$, where $h'^* = O(\epsilon_{v,t} / ((1-\gamma)(\epsilon_{f,t} + \epsilon_{v,t})))$ scales linearly with $\epsilon_{v,t} / (\epsilon_{f,t} + \epsilon_{v,t})$ and the effective task horizon $1 / (1-\gamma)$.
\end{proposition}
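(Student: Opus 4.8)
The plan is to feed the simplified variance and bias bounds into the convergence rate of Proposition~\ref{prop1} and then minimize the resulting expression over $h$. Under the regularization hypotheses $L_f\le 1$, $L_{\hat f}\le 1$, $L_\pi\le 1$ we have $\Tilde L_f=\Tilde L_{\hat f}=\Tilde L_\pi=1$, so Proposition~\ref{prop::grad_var} gives $v_t=O\bigl(h^4((1-\gamma^h)/(1-\gamma))^2/N+h^4\gamma^{2h}/N\bigr)$ and Proposition~\ref{prop_grad_bias} gives $b_t=O\bigl(\kappa\kappa' h^2(1-\gamma^h)\epsilon_{f,t}/(1-\gamma)+\kappa' h\gamma^{3h}\epsilon_{v,t}/(1-\gamma)^2\bigr)$. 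Substituting into the right-hand side of Proposition~\ref{prop1}, the only $h$-dependence sits in the per-iteration terms $c(2\delta b_t+\frac{\eta}{2}v_t)+b_t^2+v_t$; since $b_t,v_t$ are small and $v_t=O(1/N)$, the linear-in-$b_t$ contribution dominates the rest in the regime of interest (one keeps the variance term anyway — it is monotone increasing in $h$ and only reinforces the trade-off below). I would therefore define $U_t(h)$ to be this $h$-dependent part, relax $h$ to a continuous nonnegative variable, and minimize $U_t$.

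The key point is the competing monotonicities inside $U_t(h)$. The model-error piece $h^2(1-\gamma^h)\epsilon_{f,t}/(1-\gamma)$ (and the variance piece) is increasing in $h$: unrolling the learned model longer accumulates more gradient error. The critic-error piece $h\gamma^{3h}\epsilon_{v,t}/(1-\gamma)^2$ increases for small $h$ but, because the residual value handled by $\hat Q_\omega$ sits behind a $\gamma^{\Theta(h)}$ discount, it turns decreasing once $h$ exceeds a $\Theta(1/(1-\gamma))$ threshold — a longer model expansion shrinks the contribution of the inaccurate critic. A stationary point of $U_t$ must balance these. To make this tractable when $\gamma\approx 1$, I would pass to the rescaled variable $x=(1-\gamma)h$ (the number of effective horizons unrolled) and use $1-\gamma^h\approx h(1-\gamma)=x$ together with $\gamma^{ch}\approx e^{-cx}$; then $U_t$ becomes $(1-\gamma)^{-3}$ times an elementary function of $x$ of the shape $a\,x^2(1-e^{-x})+b\,x e^{-3x}$ (plus the variance contribution), with $a\propto\kappa\epsilon_{f,t}$ and $b\propto\epsilon_{v,t}$, and the first-order condition $\partial U_t/\partial h=0$ reduces to a relation among $x$, $e^{-x}$, and the ratio $\epsilon_{v,t}/\epsilon_{f,t}$.

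Solving this first-order condition asymptotically yields the claimed $h'^*$: the balance forces $x^*=(1-\gamma)h'^*$ to scale like $\epsilon_{v,t}/(\epsilon_{f,t}+\epsilon_{v,t})$, i.e.\ $h'^*=O\bigl(\epsilon_{v,t}/((1-\gamma)(\epsilon_{f,t}+\epsilon_{v,t}))\bigr)$, which is linear in the effective horizon $1/(1-\gamma)$ and in $\epsilon_{v,t}/(\epsilon_{f,t}+\epsilon_{v,t})$. When $\epsilon_{v,t}$ is small relative to $\epsilon_{f,t}$ the critic is accurate enough that any model unroll only adds error and variance, so the minimizer over $h\ge 0$ is pushed to the boundary $0$; the $\max\{h'^*,0\}$ records this clipping. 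Finally, since the expansion step must be an integer, one rounds $h'^*$ to the nearest nonnegative integer, which affects only the hidden constants. The main obstacle is the first-order condition itself: it is transcendental (it mixes a polynomial in $h$ with $\gamma^{h}$), so there is no exact closed form, and the real work is the $\gamma\to1$ asymptotic reduction together with verifying that it is the model-error bias — not the $O(1/N)$ variance nor the quadratic terms — that balances the critic-error bias over the relevant range $h\lesssim 1/(1-\gamma)$; a secondary subtlety is the boundary-case analysis that produces the $\max\{\cdot,0\}$.
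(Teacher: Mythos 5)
Your overall strategy matches the paper's: substitute the simplified bias and variance bounds into Proposition \ref{prop1}, isolate the $h$-dependent per-iteration term, balance the (increasing) model-error bias against the critic-error bias, and clip the resulting stationary point at $0$. The gap is that the one step that actually constitutes the proposition --- extracting $h'^* = O\bigl(\epsilon_{v,t}/((1-\gamma)(\epsilon_{f,t}+\epsilon_{v,t}))\bigr)$ from the first-order condition --- is only asserted ("solving this first-order condition asymptotically yields the claimed $h'^*$"), not carried out. Worse, under your own parametrization it does not come out that way: with $x=(1-\gamma)h$ and $\gamma^{ch}\approx e^{-cx}$ your objective is $a\,x^2(1-e^{-x})+b\,x e^{-3x}$ with $a\propto\epsilon_{f,t}$, $b\propto\epsilon_{v,t}$, whose stationarity condition reads $a\bigl(2x(1-e^{-x})+x^2e^{-x}\bigr)=b\,e^{-3x}(3x-1)$. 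An interior local minimum exists only when $b/a$ exceeds an absolute constant, and in the regime $b\gg a$ the balance forces $e^{-3x^*}\sim a/b$ up to polynomial factors, i.e.\ $x^*\sim\tfrac13\log(\epsilon_{v,t}/\epsilon_{f,t})$, which grows without bound while $\epsilon_{v,t}/(\epsilon_{f,t}+\epsilon_{v,t})\le 1$; so the claimed linear-in-ratio scaling does not follow from the transcendental equation you set up.

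The paper avoids the transcendental equation entirely by a different $\gamma\approx 1$ reduction: it replaces $(1-\gamma^h)/(1-\gamma)\approx h$ and $\gamma^h/(1-\gamma)\approx H-h$ with $H=1/(1-\gamma)$ (a linearization of $\gamma^h$, not an exponential), keeps the variance leading term as an additive constant $c'$ next to $\epsilon_{f,t}$, and minimizes the polynomial surrogate $g_1(h)=h^3(\epsilon_{f,t}+c')+h(H-h)^2\epsilon_{v,t}$. Its derivative is the explicit quadratic $3h^2(\epsilon_{f,t}+c')+(3h^2-4Hh+H^2)\epsilon_{v,t}=0$, whose larger root is computed in closed form, verified to be a local minimum via the second derivative, and read off as $O\bigl(H\,\epsilon_{v,t}/(\epsilon_{f,t}+\epsilon_{v,t})\bigr)$; when the discriminant is negative there is no real stationary point and $h^*$ is set to $0$, which is exactly the $\max\{h'^*,0\}$ clipping. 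To repair your argument you should either adopt this polynomial approximation (valid in the regime the statement targets) so that the first-order condition becomes an exactly solvable quadratic, or genuinely solve your transcendental condition and show the resulting order matches the statement --- which, as written, it does not.
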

\vspace{-0.1cm}
In Proposition \ref{prop::optimal_h}, the Lipschitz condition of the underlying dynamics, i.e., $L_f\leq 1$, ensures the stability of the system. This can be seen in the linear system example, where the transitions are determined by the eigenspectrum of the family of transformations, leading to exponential divergence of trajectories w.r.t. the largest eigenvalue. In cases where this condition is not met in practical control systems, finding the best model unroll length may require trial and error. Fortunately, we have observed through experimentation that enforcing smoothness offers a much wider range of unrolling lengths that still provide satisfactory results.

\begin{remark}
As the error scale $\epsilon_{v,t} / (\epsilon_{f,t} + \epsilon_{v,t})$ increases, so too does the value of $h^*$. This finding can inform the practical algorithms to rely more on the model by performing longer unrolls when the model error $\epsilon_{f, t}$ is small, while avoiding long unrolls when the critic error $\epsilon_{v,t}$ is small.
\end{remark}

\paragraph{A Spectral Normalization Method.} To ensure a smooth transition and faster convergence, we propose using a Spectral Normalization (SN) \citep{miyato2018spectral} model-based RP PGM that applies SN to all layers of the deep model network and policy network. While other techniques, such as adversarial regularization \citep{shen2020deep}, exist, we focus primarily on SN as it directly regulates the Lipschitz constant of the function. Specifically, the Lipschitz constant $L_g$ of a function $g$ satisfies $L_g = \sup_x \sigma_{\text{max}}(\nabla g(x))$, where $\sigma_{\text{max}}(W)$ denotes the largest singular value of the matrix $W$, defined as $\sigma_{\text{max}}(W) = \max_{\|x\|_2\leq 1}\|Wx\|_2$. For neural network $f$ with linear layers $g(x) = W_i x$ and $1$-Lipschitz activation (e.g., ReLU and leaky ReLU), we have $L_g = \sigma_{\text{max}}(W_i)$ and $L_{\hat{f}}\leq \prod_i\sigma_{\text{max}}(W_i)$. By normalizing the spectral norm of $W_i$ with $W^{\text{SN}}_i = W_i / \sigma_{\text{max}}(W_i)$, SN guarantees that the Lipschitz of $f$ is upper-bounded by $1$.

Finally, we characterize the algorithm convergence rate.
\begin{corollary}[Convergence Rate]
\label{coro_con}
Let $\varepsilon(T) = \sum_{t=0}^{T-1}b_t$. We have for $T \geq 4L^2$ that
\$
\min_{t\in[T]}\EE\Bigl[\big\|\nabla_{\theta}J(\pi_{\theta_t})\big\|_2^2\Bigr] \leq 16\delta\cdot\varepsilon(T) / \sqrt{T}+ 4\varepsilon^2(T) / T + O\bigl(1 / \sqrt{T}\bigr).
\$
\end{corollary}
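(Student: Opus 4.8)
The plan is to substitute the decaying step size $\eta = 1/\sqrt{T}$ into the recursive bound of Proposition~\ref{prop1} and then control each of its terms separately, matching two of them exactly to the asserted expression and folding the rest into the $O(1/\sqrt{T})$ remainder. First I would record the two elementary estimates on $c = (\eta - L\eta^2)^{-1}$ that the choice $\eta = 1/\sqrt{T}$ buys: since $T \geq 4L^2$ forces $L\eta \leq 1/2$, we get $\eta - L\eta^2 = \eta(1 - L\eta) \geq \eta/2$, hence $\sqrt{T} \leq c \leq 2\sqrt{T}$, and moreover the coefficient $c\eta/2 = (2(1 - L\eta))^{-1} \leq 1$. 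This is precisely where the hypothesis $T \geq 4L^2$ in the corollary is used.

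Next I would dispatch the initial-gap and bias terms. Because $|r(s,a)| \leq r_\text{m}$ and $J(\pi) = \EE_{(s,a)\sim\sigma_\pi}[r(s,a)]$ is an average of the reward against the probability measure $\sigma_\pi$, every policy satisfies $|J(\pi)| \leq r_\text{m}$, so $\EE[J(\pi_{\theta_T}) - J(\pi_{\theta_1})] \leq 2 r_\text{m}$ and thus $\frac{4c}{T}\EE[J(\pi_{\theta_T}) - J(\pi_{\theta_1})] \leq 16 r_\text{m}/\sqrt{T} = O(1/\sqrt{T})$. For the first-order bias term, $c \leq 2\sqrt{T}$ and the fact that $c$ is constant in $t$ give $\frac{4}{T}\sum_{t=0}^{T-1} 2\delta c\, b_t = \frac{8\delta c}{T}\varepsilon(T) \leq 16\delta\varepsilon(T)/\sqrt{T}$, the first claimed term; for the quadratic bias term, nonnegativity of the $b_t$ yields $\sum_{t=0}^{T-1} b_t^2 \leq (\sum_{t=0}^{T-1} b_t)^2 = \varepsilon^2(T)$, so $\frac{4}{T}\sum_{t=0}^{T-1} b_t^2 \leq 4\varepsilon^2(T)/T$, the second claimed term.

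It remains to absorb the variance contribution $\frac{4}{T}\sum_{t=0}^{T-1}(c\eta/2 + 1)v_t$, which by $c\eta/2 \leq 1$ is at most $\frac{8}{T}\sum_{t=0}^{T-1} v_t$. Invoking Proposition~\ref{prop::grad_var} under the smoothness regularization $L_{\hat f} \leq 1$ and $L_\pi \leq 1$ (so the Lipschitz powers $\Tilde{L}_{\hat{f}}^{4h}\Tilde{L}_\pi^{4h}$ stay bounded), the per-iteration variance is of order $1/N$ up to a polynomial factor in $h$; choosing the batch size $N$ large enough that $\frac1T\sum_{t} v_t = O(1/\sqrt T)$ (for instance $N$ of order $\sqrt T$) makes this term $O(1/\sqrt T)$ as well, and summing the four contributions gives the stated inequality.

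The main obstacle is exactly this last step. Unlike the initial-gap term, once $\eta$ is of order $1/\sqrt T$ the variance enters Proposition~\ref{prop1} with an $O(1)$ prefactor, so its smallness cannot come from the step size alone; it must be supplied by the model/policy smoothness control of Proposition~\ref{prop::grad_var} together with a suitable batch size, or else retained as an explicit additive $O(\frac1T\sum_{t} v_t)$ term in the conclusion. Everything else — the choice $\eta = 1/\sqrt T$, the two-sided bound $\sqrt T \le c \le 2\sqrt T$, the estimate $c\eta/2 \le 1$, and the identification of the two $\varepsilon(T)$-dependent terms — is a direct substitution into Proposition~\ref{prop1}.
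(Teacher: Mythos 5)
Your proposal is correct and follows essentially the same route as the paper's proof: substitute $\eta = 1/\sqrt{T}$ into Proposition \ref{prop1}, use $T \geq 4L^2$ to get $c \leq 2\sqrt{T}$ and $c\eta/2 \leq 1$, bound the initial-gap term by boundedness of $J$, bound $\sum_t b_t^2 \leq \varepsilon^2(T)$, and absorb the variance contribution into $O(1/\sqrt{T})$ by taking $N = O(\sqrt{T})$. Your explicit remark that the variance absorption additionally relies on the Lipschitz factors in Proposition \ref{prop::grad_var} being controlled is a fair point that the paper leaves implicit, but it does not change the argument.
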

The convergence rate can be further clarified by determining how quickly the errors of model and critic approach zero, i.e., $\sum_{t=0}^{T-1}\epsilon_f(t) + \epsilon_v(t)$. Such results can be accomplished by conducting a more fine-grained investigation of the model and critic function classes, such as utilizing overparameterized neural nets with width scaling with $T$ to bound the training error, as done in \cite{cai2019neural,liu2019neural}, and incorporating complexity measures of the model and critic function classes to bound $\epsilon_f(t)$ and $\epsilon_v(t)$. Their forms, however, are beyond the scope of this paper.

\section{Related Work}
\paragraph{Policy Gradient Methods.} Within the RL field, the LR estimator is the basis of most policy gradient algorithms, e.g., REINFORCE \citep{williams1992simple} and actor-critic methods \citep{sutton1999policy,kakade2001natural,kakade2002approximately,degris2012off}. Recent works \citep{agarwal2021theory,wang2019neural,bhandari2019global,liu2019neural} have shown the global convergence of LR policy gradient under certain conditions, while less attention has been focused on RP PGMs. Remarkably, the analysis in \citep{li2021gradient} is based on the strong assumptions on the \textit{chained} gradient and ignores the impact of value approximation, which oversimplifies the problem by reducing the $h$-step model value expansion to single-step model unrolls. Besides, \cite{clavera2020model} \textit{only} focused on the gradient bias while still neglecting the necessary visitation analysis. Despite the utilization in our method of Spectral Normalization on the learned model to control the gradient variance, SN has also been applied in deep RL to \textit{value} functions in order to enable deeper neural nets \citep{bjorck2021towards} or regulate the value-aware model error \citep{zheng2022model}.

\paragraph{Differentiable Simulation.} This paper delves into the model-based setting \citep{chua2018deep,janner2019trust,kaiser2019model,zhang2022conservative,hafner2023mastering}, where a learned model is employed to train a control policy. Recent approaches \citep{mora2021pods,suh2022differentiable,suh2022bundled,xu2022accelerated} based on differentiable simulators \citep{freeman2021brax,heiden2021neuralsim} assume that gradients of simulation outcomes w.r.t. actions are explicitly given. To deal with the discontinuities and empirical bias phenomenon in the differentiable simulation caused by contact dynamics, previous works proposed smoothing the gradient adaptively with a contact-aware central-path parameter \citep{zhang2023adaptive}, using penalty-based contact formulations \citep{geilinger2020add,xu2021end} or adopting randomized smoothing for hard-contact dynamics \citep{suh2022differentiable,suh2022bundled}. However, these are not in direct comparison to our analysis, which relies on model function approximators.
\section{Experiments}
\label{sec::exp}
\subsection{Evaluation of Reparameterization Policy Gradient Methods}
To gain a deeper understanding and support the theoretical findings, we evaluate several algorithms originating from our RP PGM framework and compare them with various baselines in Figure \ref{fig_overall}. Specifically, RP-DP-SN is the proposed SN-based algorithm; RP-DP, as described in \eqref{DP_ge}, is implemented as MAAC \citep{clavera2020model} with entropy regularization \citep{amos2021model}; RP-DR, as described in \eqref{DR_ge}, is implemented as SVG \citep{heess2015learning}; the model-free RP($0$) is described in Section \ref{sec::model_use}. More details and discussions are deferred to Appendix \ref{more_exp}.

\begin{figure}[H]
\centering
\subfigure{
    \begin{minipage}[t]{0.31\linewidth}
        \centering
        \includegraphics[width=1\textwidth]{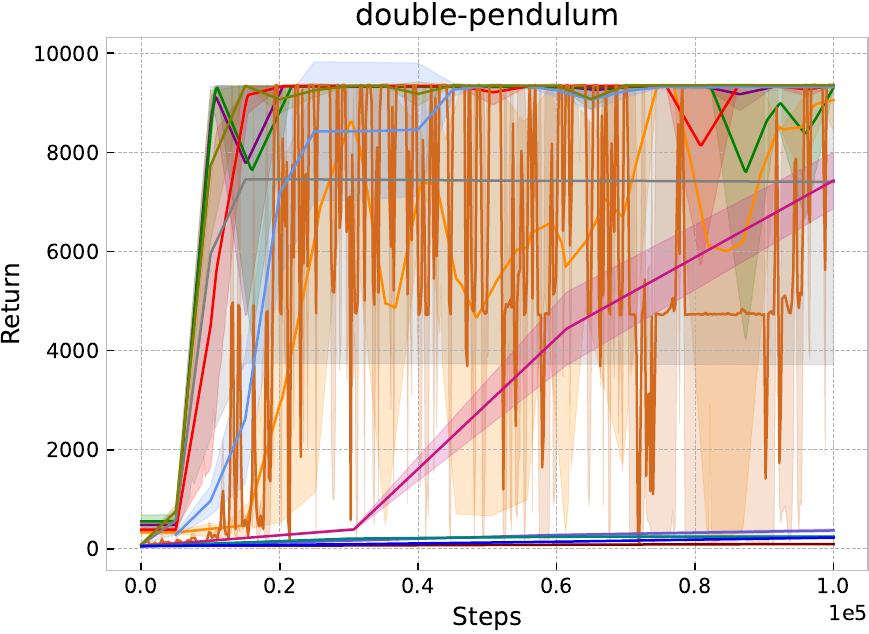}\\
    \end{minipage}
}
\subfigure{
    \begin{minipage}[t]{0.31\linewidth}
        \centering
        \includegraphics[width=1\textwidth]{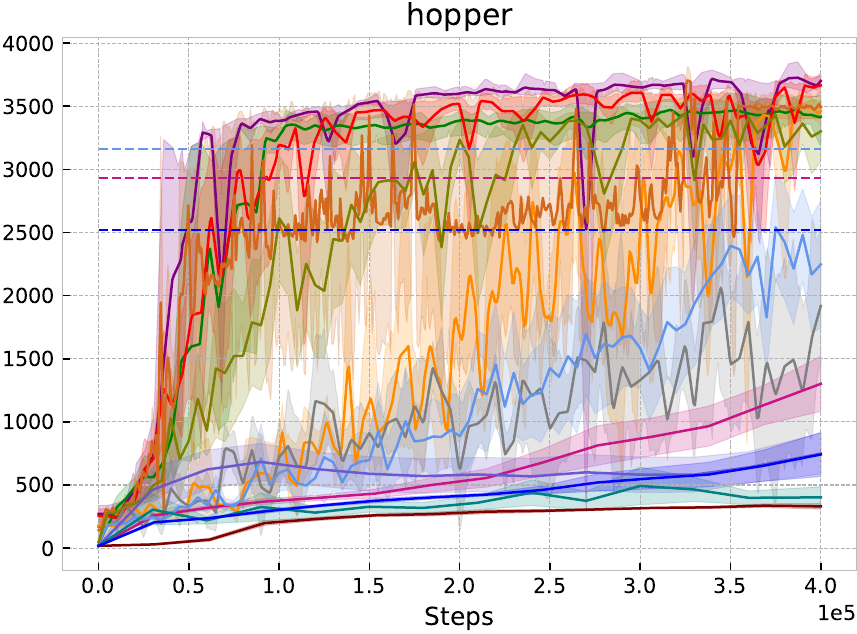}\\
    \end{minipage}%
}%
\subfigure{
    \begin{minipage}[t]{0.31\linewidth}
        \centering
        \includegraphics[width=1\textwidth]{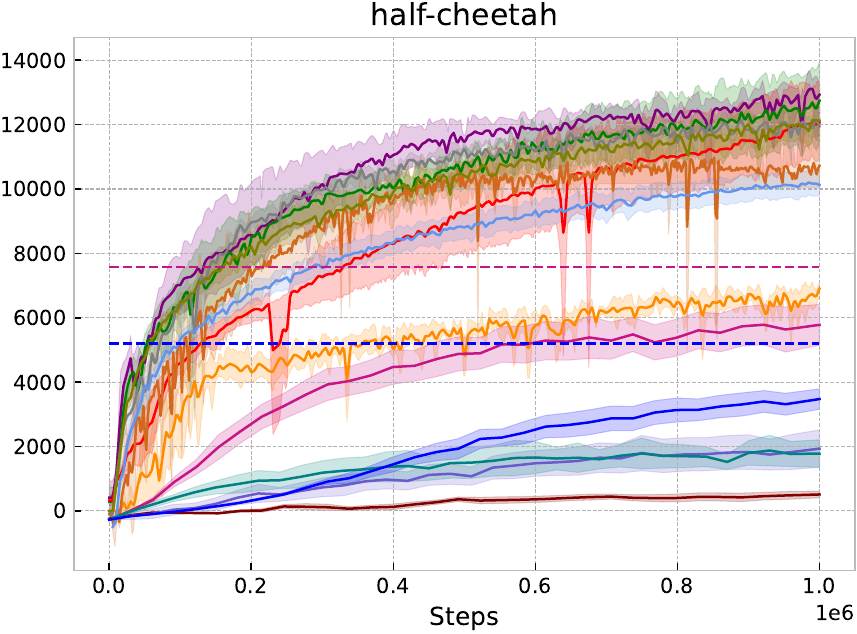}\\
    \end{minipage}
}\\
\subfigure{
    \begin{minipage}[t]{0.31\linewidth}
        \centering
        \includegraphics[width=1\textwidth]{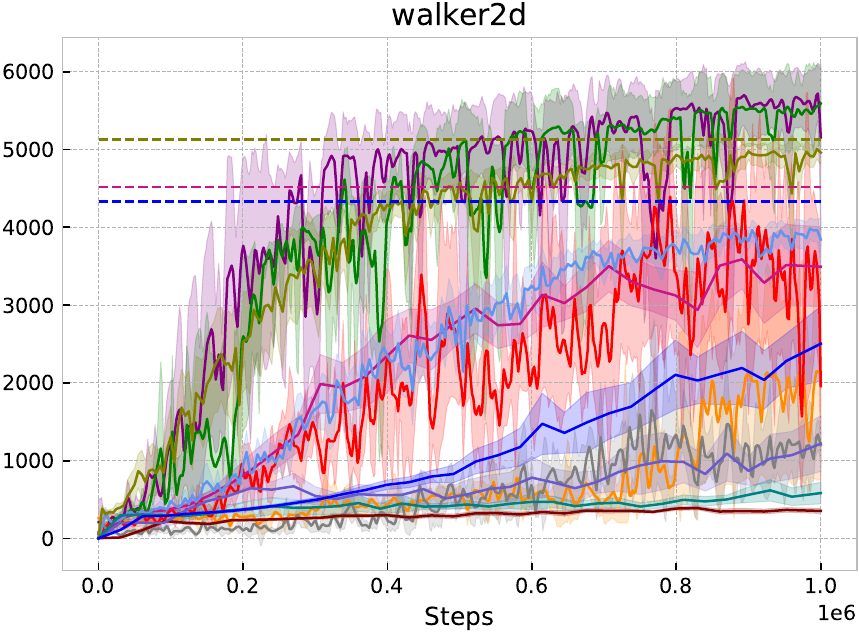}\\
    \end{minipage}
}%
\subfigure{
    \begin{minipage}[t]{0.31\linewidth}
        \centering
        \includegraphics[width=1\textwidth]{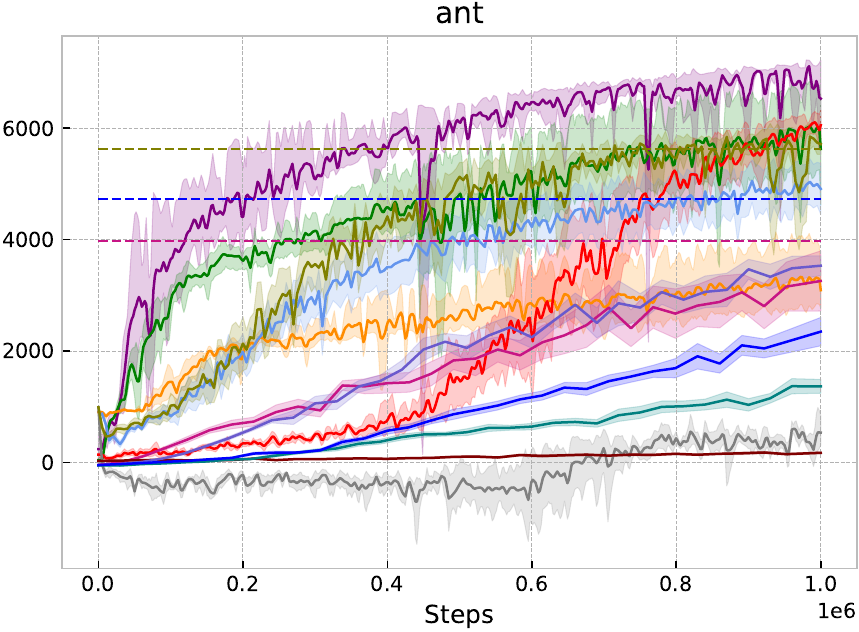}\\
    \end{minipage}
}\vspace{-0.1cm}
\subfigure{
    \begin{minipage}[t]{0.9\linewidth}
        \centering
        \includegraphics[width=1\textwidth]{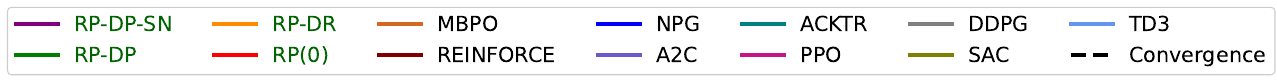}\\
    \end{minipage}
}
\caption{Comparisons between RP PGMs (the \textcolor{darkgreen}{green} labels) and MF/MB baselines (the black labels) in the MuJoCo \citep{todorov2012mujoco} tasks.\vspace{0.03cm}}
\label{fig_overall}
\end{figure}

The results indicate that RP-DP consistently outperforms or matches the performance of existing methods such as MBPO \citep{janner2019trust}  and LR PGMs, including REINFORCE \citep{sutton1999policy}, NPG \citep{kakade2001natural}, ACKTR \citep{wu2017scalable}, and PPO \citep{schulman2017proximal}. This highlights the significance and potential of model-based RP PGMs. 

\subsection{Gradient Variance and Loss Landscape}
Our prior investigations have revealed that vanilla MB RP PGMs tend to have highly non-smooth landscapes due to the significant increase in gradient variance. 

We now conduct experiments to validate this phenomenon in practice. In Figure \ref{fig::mean_var}, we plot the mean gradient variance of the vanilla RP-DP algorithm during training. To visualize the loss landscapes, we plot in Figure \ref{fig:vis_surface} the negative value estimate along two directions that are randomly selected in the policy parameter space of a training policy.

\begin{figure}[H]
\centering
\subfigure{
    \begin{minipage}[t]{0.45\linewidth}
        \centering
        \includegraphics[width=0.8\textwidth]{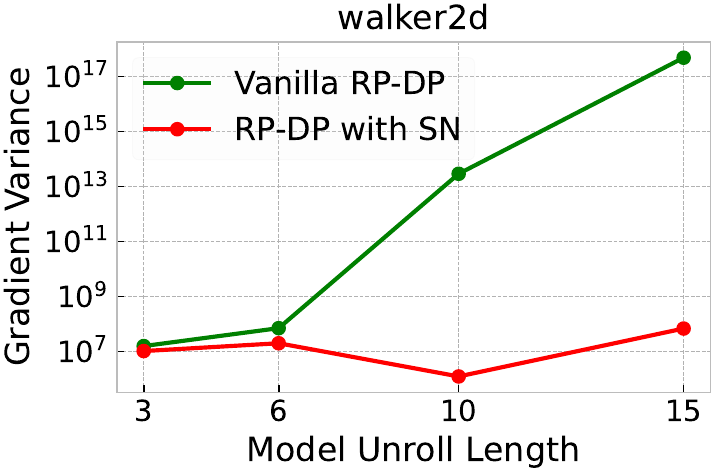}\\
    \end{minipage}
}
\subfigure{
    \begin{minipage}[t]{0.45\linewidth}
        \centering
        \includegraphics[width=0.8\textwidth]{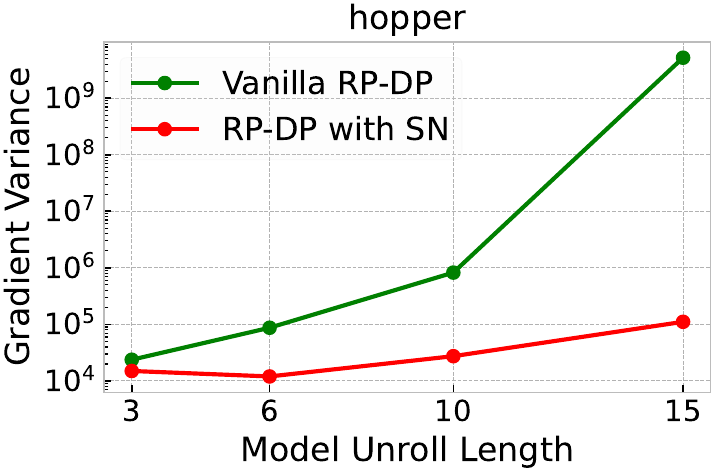}\\
    \end{minipage}
}
\caption{Gradient variance of the vanilla RP-DP explodes while adding spectral normalization solves this issue.}
\label{fig::mean_var}
\end{figure}

\begin{figure}[H]
\centering
\subfigure[RP-DP with $h=3$.]{
\centering
\includegraphics[width=0.3\linewidth]{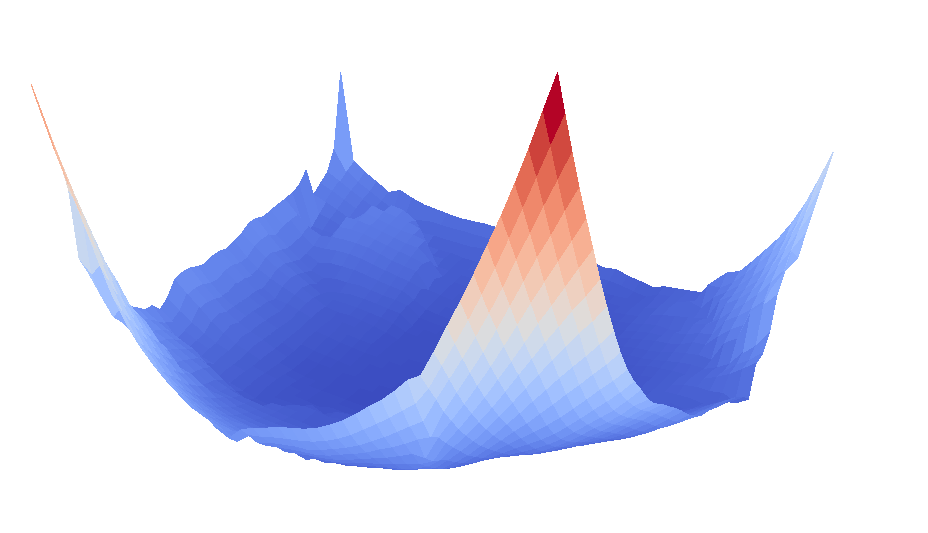}}
\subfigure[RP-DP with $h=15$.]{
\centering
\includegraphics[width=0.3\linewidth]{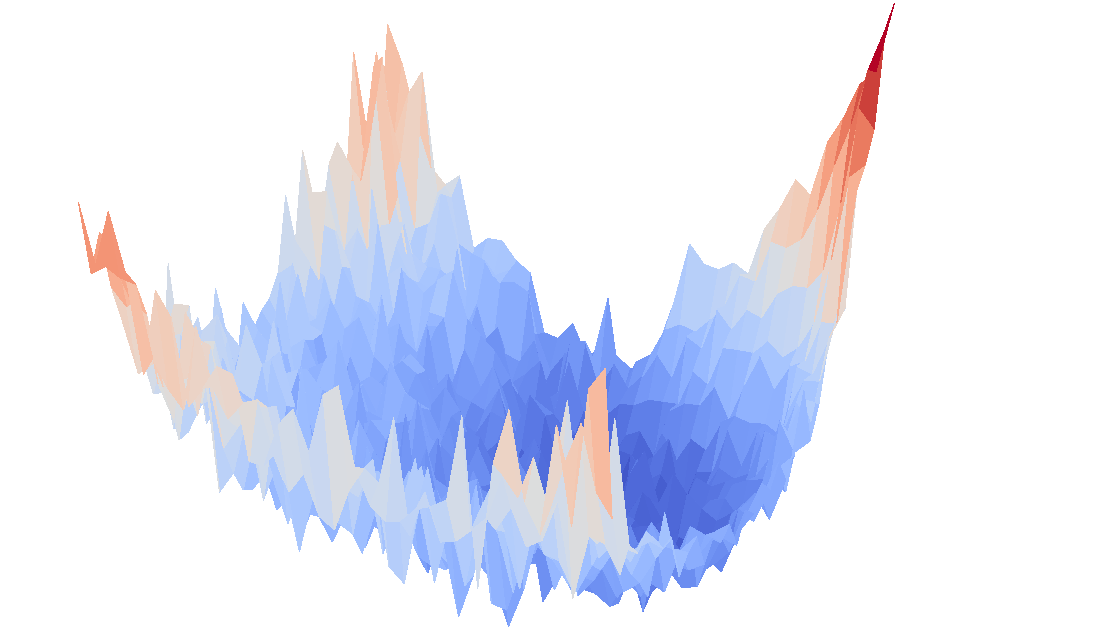}}
\subfigure[RP-DP-SN with $h=15$.]{
\centering
\includegraphics[width=0.3\linewidth]{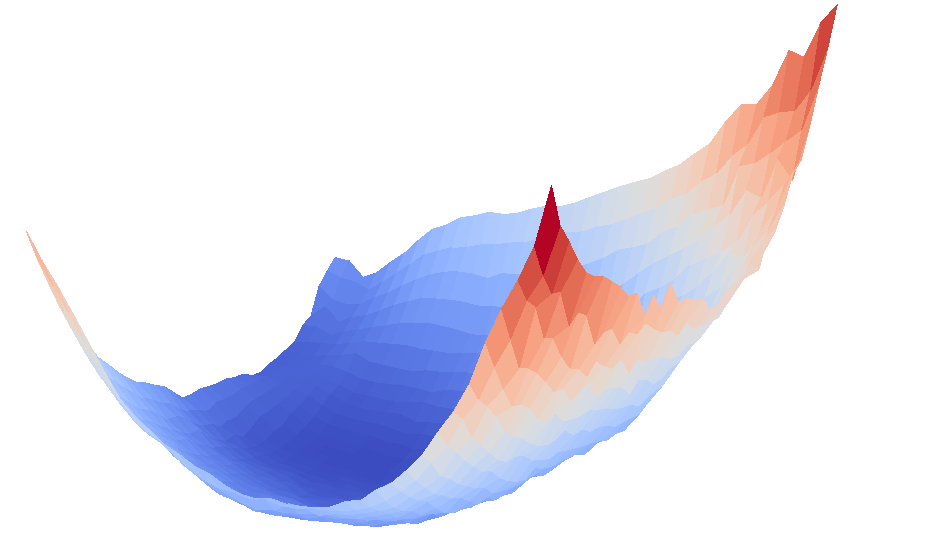}}
\caption{The loss surfaces of RP-DP with different model unroll $h$ in the hopper task.}
\label{fig:vis_surface}
\end{figure}

We can observe that for vanilla RP policy gradient algorithms, the gradient variance explodes in exponential rate with respect to the model unroll length. This results in a loss landscape that is highly non-smooth for larger unrolling steps. This renders the importance of smoothness regularization. Specifically, incorporating Spectral Normalization (SN) \citep{miyato2018spectral} in the model and policy neural nets leads to a marked reduction in mean gradient variance for all unroll length settings, resulting in a much smoother loss surface compared to the vanilla implementation.

\subsection{Benefit of Smoothness Regularization}
In this section, we investigate the effect of smoothness regularization to support our claim: The gradient variance has polynomial dependence on the Lipschitz continuity of the model and policy, which is a contributing factor to training. Our results in Figure \ref{fig:sn_return} show that SN-based RP PGMs achieve equivalent or superior performance compared to the vanilla implementation. Importantly, for longer model unrolls (e.g., $10$ in walker2d and $15$ in hopper), vanilla RP PGMs fail to produce reliable performance. SN-based methods, on the other hand, significantly boost training.

\begin{figure}[H]
\subfigure{
    \begin{minipage}[t]{0.31\linewidth}
        \centering
        \includegraphics[width=1\textwidth]{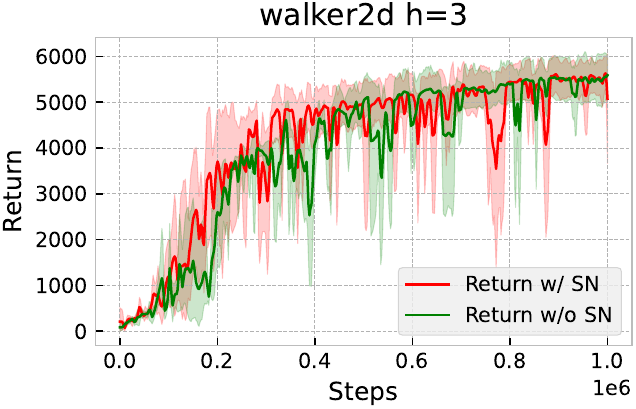}
    \end{minipage}
}
\subfigure{
    \begin{minipage}[t]{0.31\linewidth}
        \centering
        \includegraphics[width=1\textwidth]{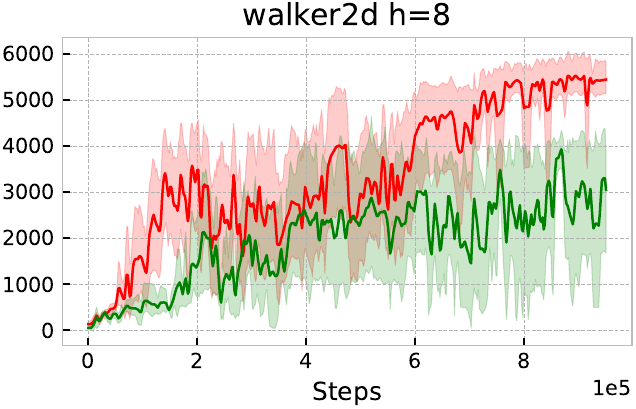}
    \end{minipage}
}
\subfigure{
    \begin{minipage}[t]{0.31\linewidth}
        \centering
        \includegraphics[width=1\textwidth]{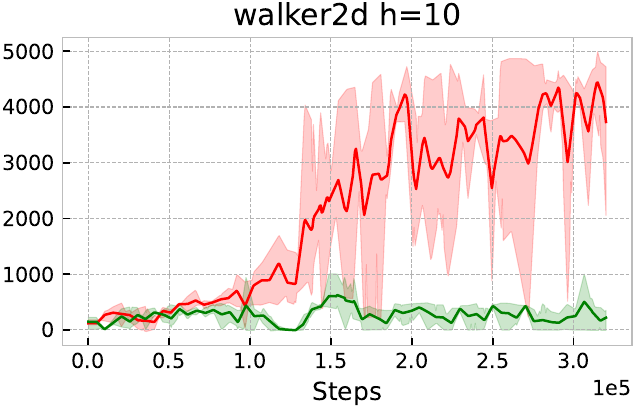}
    \end{minipage}
}
\subfigure{
    \begin{minipage}[t]{0.31\linewidth}
        \centering
        \includegraphics[width=1\textwidth]{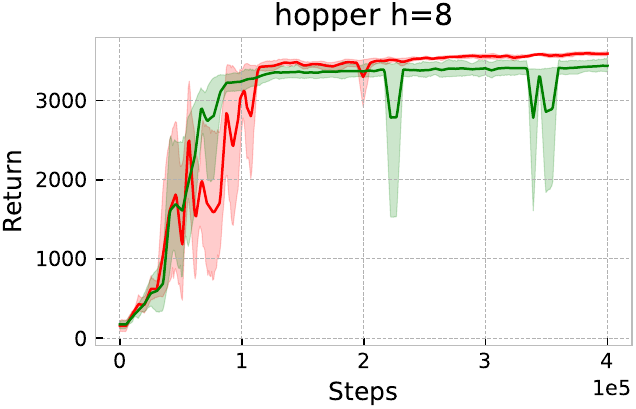}
    \end{minipage}
}
\subfigure{
    \begin{minipage}[t]{0.31\linewidth}
        \centering
        \includegraphics[width=1\textwidth]{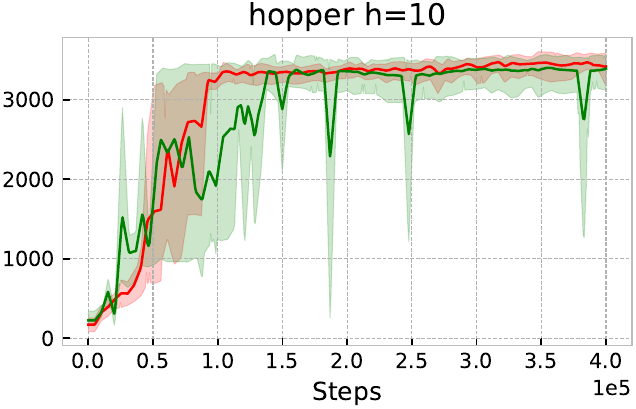}
    \end{minipage}
}
\subfigure{
    \begin{minipage}[t]{0.31\linewidth}
        \centering
        \includegraphics[width=1\textwidth]{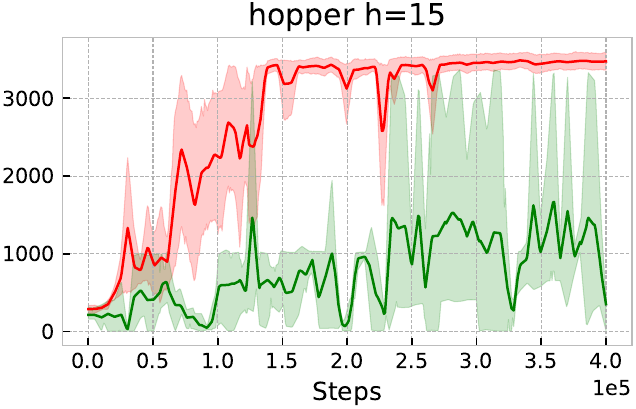}
    \end{minipage}%
}
\caption{Performance of vanilla and SN-based MB RP PGMs with varying $h$. The vanilla method only works with a small $h$ and fails when $h$ increases, while the SN-based method enables a larger $h$.}
\label{fig:sn_return}
\end{figure}

Additionally, we explore different choices of model unroll lengths and examine the impact of spectral normalization, with results shown in Figure \ref{fig:sn_abl}. We find that by utilizing SN, the curse of chaos can be mitigated, allowing for longer model unrolls. This is crucial for practical algorithmic designs: The most popular model-based RP PGMs such as \cite{clavera2020model,amos2021model} often rely on a carefully chosen (small) $h$ (e.g., $h=3$). When the model is good enough, a small $h$ may not fully leverage the accurate gradient information. As evidence, approaches \citep{xu2022accelerated,mora2021pods} based on differentiable simulators typically adopt longer unrolls compared to model-based approaches. Therefore, with SN, more accurate multi-step predictions should enable more efficient learning without making the underlying optimization process harder. SN-based approaches also provide more robustness since the return is insensitive to $h$ and the variance of return is smaller compared to the vanilla implementation when $h$ is large. 

\begin{figure}[H]
\centering
\subfigure{
    \begin{minipage}{0.45\linewidth}
        \centering
        \includegraphics[width=0.8\textwidth]{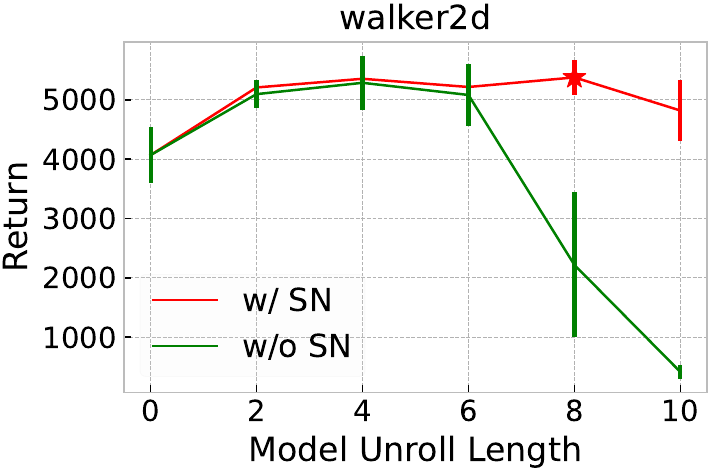}
    \end{minipage}%
}
\subfigure{
    \begin{minipage}{0.45\linewidth}
        \centering
        \includegraphics[width=0.8\textwidth]{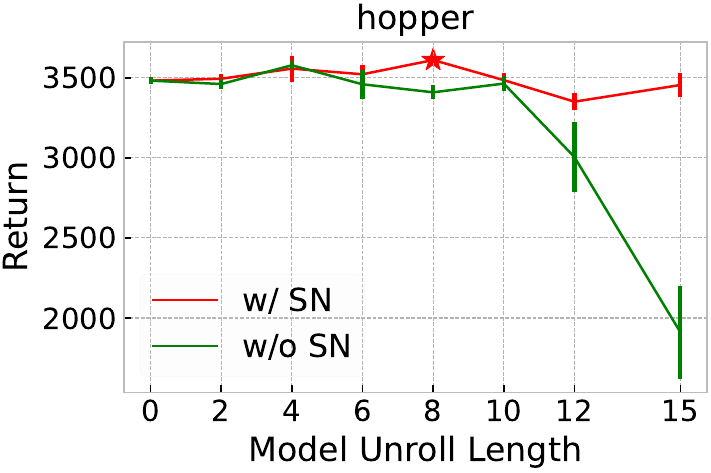}
    \end{minipage}%
}
\caption{Performance with and without SN under different $h$ in the walker2d and hopper tasks.}
\label{fig:sn_abl}
\end{figure}

\paragraph{Ablation on Variance.} By plotting the gradient variance of RP-DP during training in Figure \ref{fig:sn_grad_var}, we can discern that for walker $h=10$ and hopper $h=15$, a key contributor to the failure of vanilla RP-DP is the exploding gradient variances. On the contrary, the SN-based approach excels in training performance as a result of the drastically reduced variance.

\begin{figure}[H]
\subfigure{
    \begin{minipage}[t]{0.31\linewidth}
        \centering
        \includegraphics[width=1\textwidth]{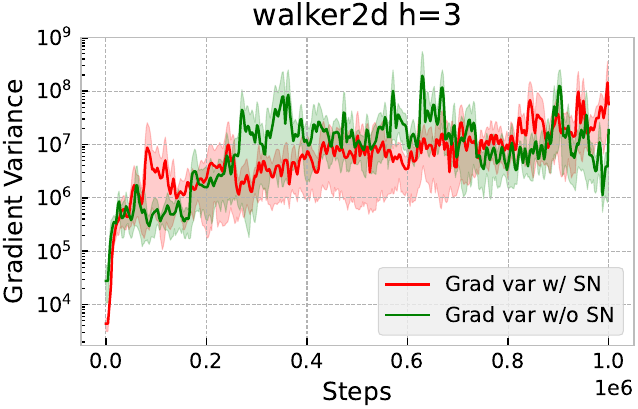}\\
    \end{minipage}
}
\subfigure{
    \begin{minipage}[t]{0.31\linewidth}
        \centering
        \includegraphics[width=1\textwidth]{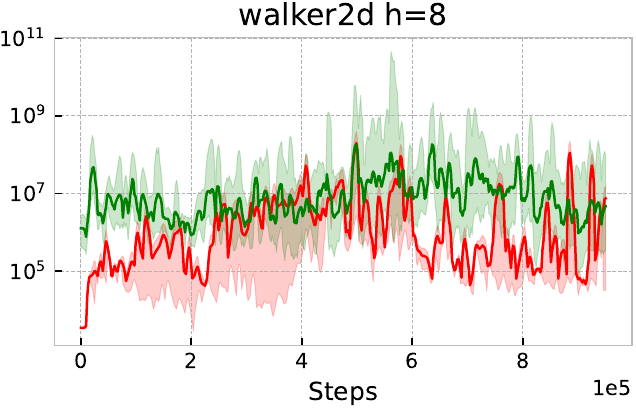}\\
    \end{minipage}
}
\subfigure{
    \begin{minipage}[t]{0.31\linewidth}
        \centering
        \includegraphics[width=1\textwidth]{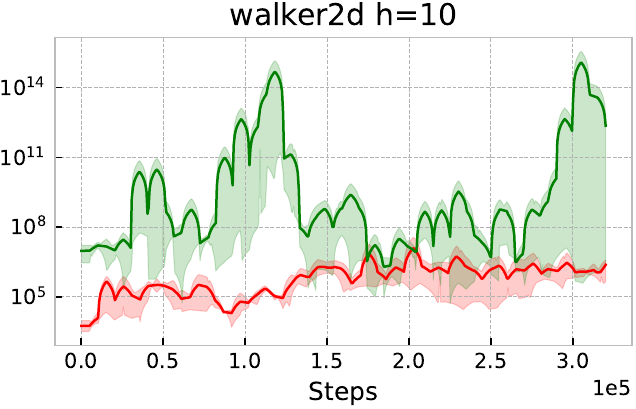}\\
    \end{minipage}
}
\subfigure{
    \begin{minipage}[t]{0.31\linewidth}
        \centering
        \includegraphics[width=1\textwidth]{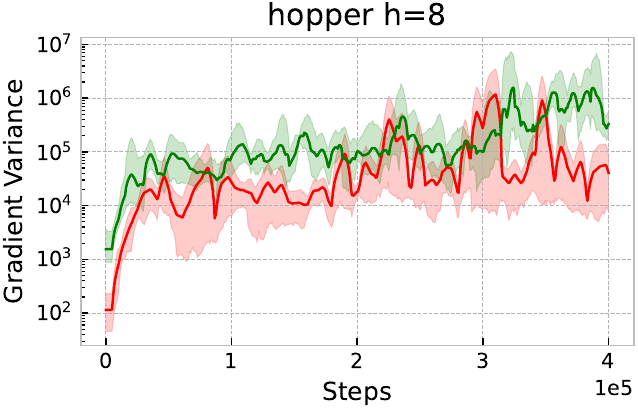}\\
    \end{minipage}
}
\subfigure{
    \begin{minipage}[t]{0.31\linewidth}
        \centering
        \includegraphics[width=1\textwidth]{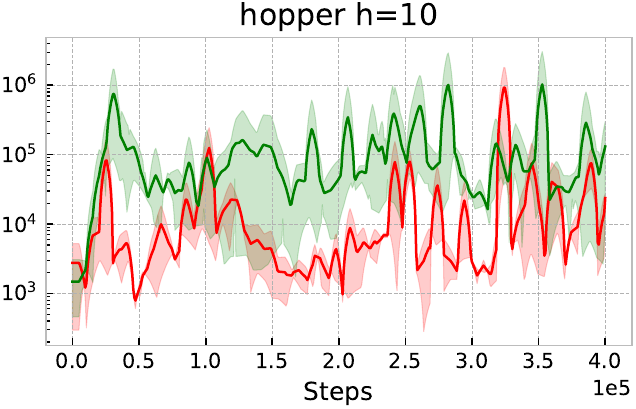}\\
    \end{minipage}
}
\subfigure{
    \begin{minipage}[t]{0.31\linewidth}
        \centering
        \includegraphics[width=1\textwidth]{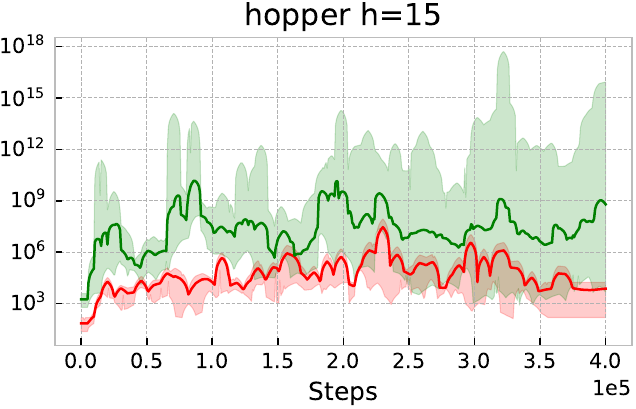}\\
    \end{minipage}
}
\caption{Gradient variance of RP PGMs is significantly lower with SN when $h$ is large.}
\label{fig:sn_grad_var}
\end{figure}

\paragraph{Ablation on Bias.} When the underlying MDP is itself contact-rich and has non-smooth or even discontinuous dynamics, explicitly regularizing the Lipschitz of the transition model may lead to large error $\epsilon_f$ and thus large gradient bias. Therefore, it is also important to study if SN causes such a negative effect and if it does, how to trade off between the model bias and gradient variance. To efficiently obtain an accurate first-order gradient (instead of via finite difference in MuJoCo), we conduct ablation based on the \textit{differentiable} simulator dFlex \citep{heiden2021disect,xu2022accelerated}, where Analytic Policy Gradient (APG) described in Section \ref{sec_rpg} can be implemented.

Figure \ref{fig:sn_grad_bias} illustrates the crucial role SN plays in locomotion tasks. It is worth noting that the higher bias of the SN method does \textit{not} impede performance, but rather improves it, indicating that the primary obstacle in training RP PGMs is the large variance in gradients. Therefore, even if the simulation is differentiable, learning a smooth proxy model can be beneficial when the dynamics have bumps or discontinuous jumps, which is usually the case in robotics systems, sharing similarities with the gradient smoothing techniques \citep{suh2022differentiable,suh2022bundled,zhang2023adaptive} for APG.

\begin{figure}[H]
\subfigure{
    \begin{minipage}[h]{0.31\linewidth}
        \centering
        \includegraphics[width=1\textwidth]{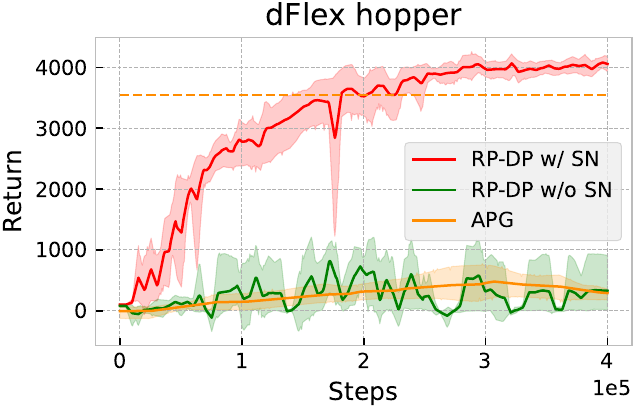}\\
    \end{minipage}
}
\subfigure{
    \begin{minipage}[h]{0.31\linewidth}
        \centering
        \includegraphics[width=1\textwidth]{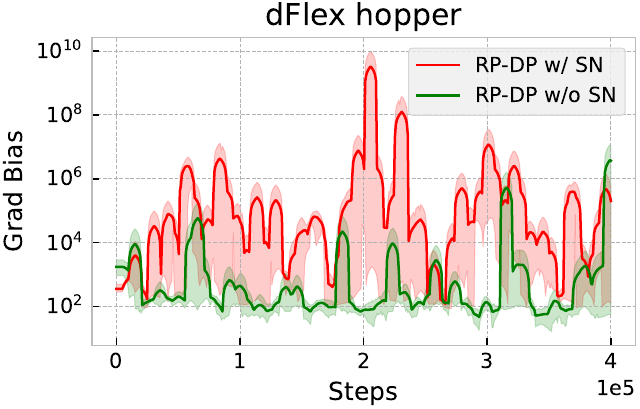}\\
    \end{minipage}
}
\subfigure{
    \begin{minipage}[h]{0.31\linewidth}
        \centering
        \includegraphics[width=1\textwidth]{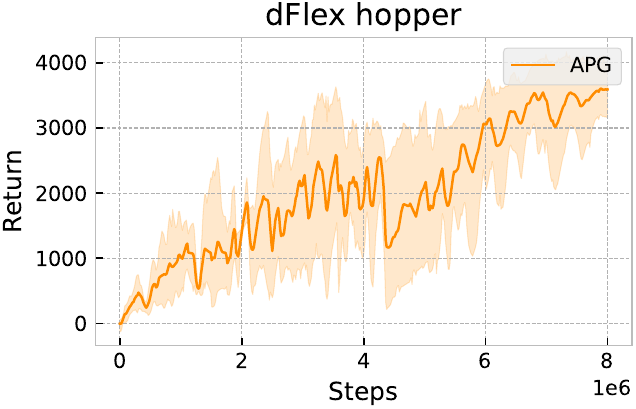}\\
    \end{minipage}
}
\subfigure{
    \begin{minipage}[h]{0.31\linewidth}
        \centering
        \includegraphics[width=1\textwidth]{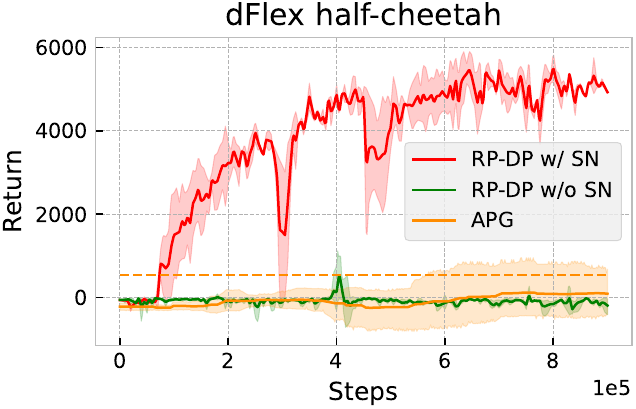}\\
    \end{minipage}
}
\subfigure{
    \begin{minipage}[h]{0.31\linewidth}
        \centering
        \includegraphics[width=1\textwidth]{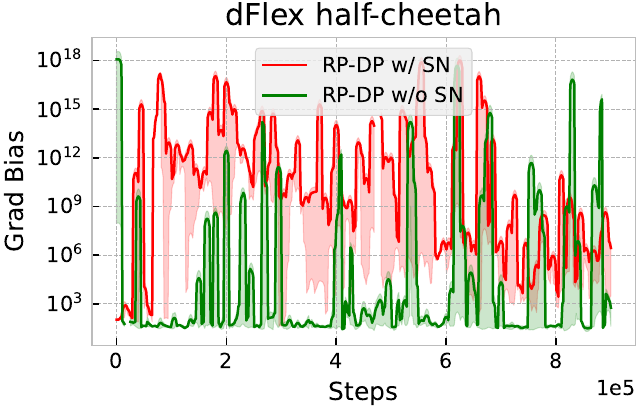}\\
    \end{minipage}
}
\subfigure{
    \begin{minipage}[h]{0.31\linewidth}
        \centering
        \includegraphics[width=1\textwidth]{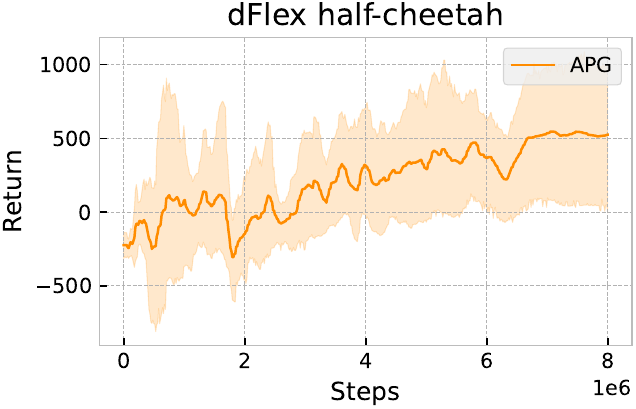}\\
    \end{minipage}
}
\caption{Performance and gradient bias in differentiable simulation. Figures in the third column are the full training curves of APG, which need $20$ times more steps than RP-DP-SN to reach a comparable return in the hopper task and fail in the half-cheetah task, respectively.}
\label{fig:sn_grad_bias}
\end{figure}

\section{Conclusion \& Future Work}
In this work, we study the convergence of model-based reparameterization policy gradient methods and identify the determining factors that affect the quality of gradient estimation. Based on our theory, we propose a spectral normalization (SN) method to mitigate the exploding gradient variance issue. Our experimental results also support the proposed theory and method. Since SN-based RP PGMs allow longer model unrolls without introducing additional optimization hardness, learning more accurate multi-step models to fully leverage their gradient information should be a fruitful future direction. It will also be interesting to explore different smoothness regularization designs and apply them to a broader range of algorithms, such as using proxy models in differentiable simulation to obtain smooth policy gradients, which we would like to leave as future work.

\section*{Acknowledgements}
The authors would like to thank Yan Li for his valuable insights and discussions during the early stages of this paper. Zhaoran Wang acknowledges National Science Foundation (Awards 2048075, 2008827, 2015568, 1934931), Simons Institute (Theory of Reinforcement Learning), Amazon, J.P. Morgan, and Two Sigma for their supports. 

\bibliographystyle{ims}
\bibliography{main}

\newpage
\appendix
\section{Recursive Expression of Analytic Policy Gradient}
\label{apg_approx}
In what follows, we interchangeably write $\nabla_a x$ and $\ud x / \ud a$ as the gradient and denote by $\partial x / \partial a$ the partial derivative.

\paragraph{Derivation of Analytic Policy Gradient.} First of all, we provide the derivation of \eqref{value_grad} and \eqref{value_grad_2}, i.e., the backward recursions of the gradient in APG.

Following \cite{heess2015learning}, we define the operator 
\#\label{eq_app_oper}
\nabla_\theta^i = \sum_{j\geq i}\frac{\ud a_j}{\ud\theta}\cdot\frac{\partial}{\partial a_j} + \sum_{j>i}\frac{\ud s_j}{\ud\theta}\cdot\frac{\partial}{\partial s_j}.
\#

We begin by expanding the total derivative operator by chain rule as
\#\label{eq_app_total_der}
\frac{\ud}{\ud\theta} &= \sum_{i\geq 0} \frac{\ud a_i}{\ud\theta}\cdot\frac{\partial}{\partial a_i} + \sum_{i> 0} \frac{\ud s_i}{\ud\theta}\cdot\frac{\partial}{\partial s_i}\notag\\
&= \frac{\ud a_0}{\ud\theta}\cdot\frac{\partial}{\partial a_0} + \frac{\ud s_1}{\ud\theta}\cdot\frac{\partial}{\partial s_1} + \sum_{i\geq 1} \frac{\ud a_i}{\ud\theta}\cdot\frac{\partial}{\partial a_i} + \sum_{i> 1} \frac{\ud s_i}{\ud\theta}\cdot\frac{\partial}{\partial s_i}.
\#
Here, the expansion holds when $\ud / \ud\theta$ operates on policies and models that are differentiable with respect to all states $s_i$ and actions $a_i$.

Plugging \eqref{eq_app_total_der} into \eqref{eq_app_oper}, we obtain the following recursive formula for $\nabla_\theta^i$,
\#\label{td_recursion}
\nabla_\theta^i &= \frac{\ud a_i}{\ud\theta}\cdot\frac{\partial}{\partial a_i} + \frac{\ud a_t}{\ud\theta}\cdot\frac{\ud s_{i+1}}{\ud a_t}\cdot\frac{\partial}{\partial s_{i+1}} + \nabla_\theta^{i+1}.
\#

By the Bellman equation, we have
\#\label{bm_eq}
V^{\pi_\theta}(s) = \EE_{\varsigma}\biggl[(1 - \gamma)\cdot r(s, \pi_\theta(s, \varsigma)) + \gamma\cdot\EE_{\xi^*}\Bigl[V^{\pi_\theta}\Bigl(f\bigr(s,  \pi_\theta(s, \varsigma), \xi^*\bigr)\Bigr)\Bigr]\biggr].
\#

Combining \eqref{td_recursion} and \eqref{bm_eq} gives
\#\label{eq_app_rec_theta}
\nabla_\theta V^{\pi_\theta}(s) &= \frac{\ud V^{\pi_\theta}(s)}{\ud\theta} = \frac{\ud}{\ud\theta}\EE_{\varsigma}\biggl[(1 - \gamma)\cdot r(s, \pi_\theta(s, \varsigma)) + \gamma\cdot\EE_{\xi^*}\Bigl[V^{\pi_\theta}\Bigl(f\bigr(s,  \pi_\theta(s, \varsigma), \xi^*\bigr)\Bigr)\Bigr]\biggr]\notag\\
&= \EE_{\varsigma}\biggl[(1 - \gamma)\cdot\frac{\partial r}{\partial a}\cdot\frac{\ud a}{\ud\theta} + \gamma\cdot\EE_{\xi^*}\biggl[\frac{\ud a}{\ud\theta}\cdot\frac{\partial s'}{\partial a}\cdot\frac{\ud V^{\pi_\theta}(s')}{\ud s'} + \frac{\ud V^{\pi_\theta}(s')}{\ud\theta}\biggr]\biggr],
\#
which corresponds to \eqref{value_grad}.

For the $\ud V^\pi(s) / \ud s$ term on the right-hand side of \eqref{eq_app_rec_theta}, we have the following recursion,
\#\label{eq_app_rec_s}
\nabla_s V^{\pi_\theta}(s) &= \frac{\ud V^{\pi_\theta}(s)}{\ud s} = \frac{\ud}{\ud s}\EE_{\varsigma}\biggl[(1 - \gamma)\cdot r(s, \pi_\theta(s, \varsigma)) + \gamma\cdot\EE_{\xi^*}\Bigl[V^{\pi_\theta}\Bigl(f\bigr(s,  \pi_\theta(s, \varsigma), \xi^*\bigr)\Bigr)\Bigr]\biggr]\notag\\
&= \EE_{\varsigma}\biggl[(1 - \gamma)\cdot \biggl(\frac{\partial r}{\partial s} + \frac{\partial r}{\partial a}\cdot\frac{\partial a}{\partial s}\biggr) + \gamma\cdot\EE_{\xi^*}\biggl[\frac{\partial s'}{\partial s}\cdot\frac{\ud V^{\pi_\theta}(s')}{\ud s'} + \frac{\partial s'}{\partial a}\cdot\frac{\partial a}{\partial s}\cdot\frac{\ud V^{\pi_\theta}(s')}{\ud s'}\biggr]\biggr],
\#
which corresponds to \eqref{value_grad_2}.

Therefore, we complete the derivative of \eqref{value_grad} and \eqref{value_grad_2}.

\paragraph{Derivation of RP-DR Policy Gradient.} By the same arguments in \eqref{eq_app_rec_theta} and \eqref{eq_app_rec_s} with $\nabla_a f$ (or $\partial s' / \partial a$) and $\nabla_s f$ (or $\partial s' / \partial s$) replaced with $\nabla_a\hat{f}_\psi$ and $\nabla_s\hat{f}_\psi$, we obtain
\#
\label{eq_app_theta_model}\hat{\nabla}_\theta V^{\pi_\theta}(\hat{s}_{i,n}) &= (1 - \gamma)\nabla_a r(\hat{s}_{i,n}, \hat{a}_{i,n}) \nabla_\theta\pi_\theta(\hat{s}_{i,n}, \varsigma_n)\notag\\
&\qquad + \gamma\hat{\nabla}_{s} V^{\pi_\theta}(\hat{s}_{i+1,n}) \nabla_a \hat{f}_\psi(\hat{s}_{i,n}, \hat{a}_{i,n},\xi_n)\nabla_\theta\pi_\theta(\hat{s}_{i,n}, \varsigma_n) + \gamma\hat{\nabla}_\theta V^{\pi_\theta}(\hat{s}_{i+1,n}),\\
\label{eq_app_s_model}\hat{\nabla}_s V^{\pi_\theta}(\hat{s}_{i,n}) &= (1 - \gamma)\bigl(\nabla_s r(\hat{s}_{i,n}, \hat{a}_{i,n}) + \nabla_a r(\hat{s}_{i,n}, \hat{a}_{i,n})\nabla_s\pi_\theta(\hat{s}_{i,n}, \varsigma_n)\bigr)\notag\\
&\qquad + \gamma\hat{\nabla}_{s} V^{\pi_\theta}(\hat{s}_{i+1,n})\bigl(\nabla_s \hat{f}_\psi(\hat{s}_{i,n}, \hat{a}_{i,n}, \xi_n) + \nabla_a \hat{f}_\psi(\hat{s}_{i,n}, \hat{a}_{i,n}, \xi_n)\nabla_s\pi_\theta(\hat{s}_{i,n}, \varsigma_n)\bigr),
\#
where the termination of backpropagation at the $h$-th step is $\hat{\nabla} V^{\pi_\theta}(\hat{s}_{h, n}) = \nabla\hat{V}_\omega(\hat{s}_{h, n})$.

Combining \eqref{eq_app_theta_model} and \eqref{eq_app_s_model}, we obtain the RP-DR policy gradient estimator as follows,
\#\label{app_eq_dr}
\hspace{-0.1cm}\hat{\nabla}_\theta^\text{DR} J(\pi_{\theta}) = \frac{1}{N}\sum_{n=1}^N \hat{\nabla}_\theta V^{\pi_\theta}(\hat{s}_{0,n}) = \frac{1}{N}\sum_{n=1}^N \nabla_\theta\biggl(\sum_{i=0}^{h-1} \gamma^i r(\hat{s}_{i,n}, \hat{a}_{i,n}) +\gamma^h \hat{Q}_\omega(\hat{s}_{h,n}, \hat{a}_{h,n})\biggr),
\#
where $\hat{s}_{0,n}\sim\mu_{\pi_\theta}$, $\hat{a}_{i,n}=\pi_\theta(\hat{s}_{i,n}, \varsigma_n)$, and $\hat{s}_{i+1,n} = \hat{f}_\psi(\hat{s}_{i,n}, \hat{a}_{i,n}, \xi_n)$. Here, $\varsigma_n$ and $\xi_n$ are inferred by solving $a_{i} = \pi_\theta(s_{i}, \varsigma_n)$ and $s_{i+1} = \hat{f}_\psi(s_{i}, a_{i}, \xi_n)$, respectively, where $(s_{i}, a_{i}, s_{i+1})$ is the real data sample. For example, for a state $s_{i+1}$ sampled from a one-dimensional Gaussian transition model $s_{i+1}\sim\mathcal{N}(\phi(s_{i}, a_{i}), \sigma^2)$, where the variance is $\sigma$ and the mean $\phi(s_{i}, a_{i})$ is the output of some function parameterized by $\phi$, the noise $\xi_n$ can be inferred as $\xi_n = (s_{i+1} - \phi(s_{i}, a_{i})) / \sigma$.

\section{Proofs}
\label{app_proof}
\subsection{Proof of Proposition \ref{prop1}}
As a preparation before proving Proposition \ref{prop1}, we first present the following lemma stating that the objective in \eqref{rl_obj} is Lipschitz smooth under Assumption \ref{assu_smooth_policy}.
\begin{lemma}[Smooth Objective, \cite{zhang2020global} Lemma 3.2]
The objective $J(\pi_\theta)$ is $L$-smooth in $\theta$, such that $\|\nabla_\theta J(\pi_{\theta_1}) - \nabla_\theta J(\pi_{\theta_2})\|_2 \leq L\|\theta_1 - \theta_2\|_2$, where
\$
L = \frac{r_\text{m}\cdot L_1}{(1 - \gamma)^2} + \frac{(1 + \gamma)\cdot r_\text{m}\cdot B_\theta^2}{(1 - \gamma)^3}.
\$
\end{lemma}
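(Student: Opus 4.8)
The plan is to bound the operator norm of the Hessian $\nabla^{2}_{\theta}J(\pi_{\theta})$ uniformly in $\theta$; the asserted $L$-smoothness then follows from the mean value inequality $\nabla_{\theta}J(\pi_{\theta_{1}})-\nabla_{\theta}J(\pi_{\theta_{2}})=\int_{0}^{1}\nabla^{2}_{\theta}J(\pi_{\theta_{2}+t(\theta_{1}-\theta_{2})})(\theta_{1}-\theta_{2})\,\ud t$, the Hessian existing almost everywhere thanks to the Lipschitz-score part of Assumption \ref{assu_smooth_policy}. I would start from the causal (baseline-subtracted) form of the policy gradient, $\nabla_{\theta}J(\pi_{\theta})=\EE_{\tau\sim p_{\theta}}\big[\sum_{t\ge 0}\gamma^{t}\nabla_{\theta}\log\pi_{\theta}(a_{t}|s_{t})\,g_{t}(\tau)\big]$, where $p_{\theta}$ is the trajectory law induced by $\pi_{\theta}$ and $f$ and $g_{t}(\tau)=(1-\gamma)\sum_{k\ge 0}\gamma^{k}r(s_{t+k},a_{t+k})$ is the discounted return-to-go; this is the trajectory counterpart of the recursion \eqref{value_grad}, and $|g_{t}|$ is bounded by the same ``value scale'' that bounds $V^{\pi}$ and $Q^{\pi}$. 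The crucial structural point is that $g_{t}$ is a deterministic functional of the sampled trajectory with no explicit dependence on $\theta$, so differentiating once more in $\theta$ produces only two families of terms: a \emph{score-curvature} term $\EE_{\tau}\big[\sum_{t}\gamma^{t}\nabla^{2}_{\theta}\log\pi_{\theta}(a_{t}|s_{t})\,g_{t}\big]$ (from differentiating the integrand) and a \emph{trajectory-law} term $\EE_{\tau}\big[\big(\sum_{t}\gamma^{t}\nabla_{\theta}\log\pi_{\theta}(a_{t}|s_{t})\,g_{t}\big)\big(\sum_{s}\nabla_{\theta}\log\pi_{\theta}(a_{s}|s_{s})\big)^{\!\top}\big]$ (from differentiating $p_{\theta}$, using $\nabla_{\theta}\log p_{\theta}(\tau)=\sum_{s}\nabla_{\theta}\log\pi_{\theta}(a_{s}|s_{s})$).

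The score-curvature term is immediate: Assumption \ref{assu_smooth_policy} gives $\|\nabla^{2}_{\theta}\log\pi_{\theta}\|\le L_{1}$, and $|g_{t}|$ is bounded by the value scale $r_\text{m}/(1-\gamma)$, so $\sum_{t}\gamma^{t}$ summation yields the $r_\text{m}L_{1}/(1-\gamma)^{2}$ summand of $L$. The trajectory-law term is where the work lies: the bare score $\sum_{s}\nabla_{\theta}\log\pi_{\theta}(a_{s}|s_{s})$ is an unbounded infinite sum, so it cannot be estimated termwise; instead I would expand the double sum over $(s,t)$ and split it according to whether $s<t$, $s=t$, or $s>t$. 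For $s>t$ the policy baseline identity $\EE[\nabla_{\theta}\log\pi_{\theta}(a_{s}|s_{s})\mid s_{s}]=0$ annihilates the contributions of $g_{t}$ coming from steps before $s$, and conditioning on $(s_{s},a_{s})$ turns the surviving tail into $Q^{\pi_{\theta}}(s_{s},a_{s})$, of magnitude at most the value scale; for $s<t$ one conditions on the history through $s_{t}$ (making all past scores measurable) and integrates out $a_{t}$ and the future, collapsing $\EE[\nabla_{\theta}\log\pi_{\theta}(a_{t}|s_{t})\,g_{t}\mid\text{past}]$ into $\EE_{a\sim\pi_{\theta}(\cdot|s_{t})}[\nabla_{\theta}\log\pi_{\theta}(a|s_{t})\,Q^{\pi_{\theta}}(s_{t},a)]$, of norm at most $B_{\theta}$ times the value scale; the diagonal $s=t$ is bounded directly by $B_{\theta}^{2}$ times the value scale. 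The remaining sums are geometric, $\sum_{t}\gamma^{t}=1/(1-\gamma)$ (diagonal), and arithmetico-geometric, $\sum_{t}t\gamma^{t}=\gamma/(1-\gamma)^{2}$ (the $s<t$ and $s>t$ pieces), and adding them gives a bound proportional to $\big(2\gamma/(1-\gamma)^{2}+1/(1-\gamma)\big)=(1+\gamma)/(1-\gamma)^{2}$ times $B_{\theta}^{2}$ and the value scale, i.e.\ the $(1+\gamma)r_\text{m}B_{\theta}^{2}/(1-\gamma)^{3}$ summand of $L$. Summing the two contributions bounds $\|\nabla^{2}_{\theta}J(\pi_{\theta})\|_{\mathrm{op}}$ by $L$.

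I expect the trajectory-law term to be the main obstacle: one must recognize that the unbounded score sum cannot be handled head-on and instead exploit the causal/martingale (policy-baseline) structure to cancel the acausal cross terms and collapse the survivors into bounded $Q$-type quantities, then be careful with the arithmetico-geometric summation to land on the sharp constant $1+\gamma$ rather than a looser one. A secondary, purely bookkeeping point is the normalization: the stated $L$ corresponds to the value scale $r_\text{m}/(1-\gamma)$, exactly as in \cite{zhang2020global}; under the $(1-\gamma)$-normalized convention for $V^{\pi}$ and $Q^{\pi}$ adopted in the present paper every estimate above loses one factor of $1/(1-\gamma)$, so the genuinely optimal smoothness constant here is smaller than $L$ by a factor $(1-\gamma)$ --- harmless, since any larger constant is still a valid smoothness bound. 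In practice one may simply invoke \cite{zhang2020global}, Lemma~3.2; the sketch above reconstructs it.
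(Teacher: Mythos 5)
Your proposal is correct, but note that the paper itself does not prove this lemma at all: it is imported verbatim by citation, and the ``proof'' in the appendix is literally a pointer to \cite{zhang2020global}, Lemma 3.2. What you have written is a faithful reconstruction of the standard argument behind that cited result: bound the operator norm of the Hessian by differentiating the score-function form of the gradient, split the result into the score-curvature term (giving $r_\text{m}L_1/(1-\gamma)^2$) and the score outer-product term, and tame the unbounded score sum by the causal/baseline cancellation $\EE[\nabla_\theta\log\pi_\theta(a_s|s_s)\,|\,s_s]=0$, after which the diagonal and the two off-diagonal pieces contribute $1/(1-\gamma)$ and $2\gamma/(1-\gamma)^2$ respectively, recombining to the sharp factor $(1+\gamma)/(1-\gamma)^2$ and hence to $(1+\gamma)r_\text{m}B_\theta^2/(1-\gamma)^3$ --- exactly the stated $L$. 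Your bookkeeping remark about normalization is also on point: the constant as stated matches the unnormalized convention of \cite{zhang2020global}, whereas under this paper's $(1-\gamma)$-scaled value functions the true smoothness constant is smaller by a factor of $(1-\gamma)$, which only makes the stated $L$ a (valid) overestimate; the same reading of Assumption \ref{assu_smooth_policy} as a bounded, Lipschitz \emph{score} (rather than log-policy) is the intended one and is what both you and the cited lemma use.
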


Then we are ready to prove Proposition \ref{prop1}.
\begin{proof}[Proof of Proposition \ref{prop1}]
See the proof of Theorem 4.2 in \cite{zhang2023adaptive}.
\end{proof}

\subsection{Proof of Proposition \ref{prop::grad_var}}
\begin{proof}
Since the RP-DP gradient $\hat{\nabla}_\theta J(\pi_{\theta}) = \hat{\nabla}_\theta^\text{DP} J(\pi_{\theta})$ in \eqref{DP_ge} and the RP-DR gradient $\hat{\nabla}_\theta J(\pi_{\theta}) = \hat{\nabla}_\theta^\text{DR} J(\pi_{\theta})$ in \eqref{DR_ge} share the same state transition $\hat{s}_{i+1,n}=\hat{f}(\hat{s}_{i,n}, \xi_n)$, where recall that the only difference lies in the source of noise $\xi_n$, our subsequent analysis holds for both RP-DP and RP-DR.

To upper-bound the gradient variance $v_t = \EE[\|\hat{\nabla}_{\theta} J(\pi_{\theta_t}) - \EE[\hat{\nabla}_{\theta} J(\pi_{\theta_t})]\|_2^2]$, we characterize the norm inside the outer expectation.

We start with the case where the sample size $N=1$, which naturally generalizes to $N>1$. Specifically, we consider an \textit{arbitrary} 
$h$-step trajectory obtained by unrolling the model under policy $\pi_{\theta_t}$. We denote the pathwise gradient $\hat{\nabla}_{\theta} J(\pi_{\theta_t})$ of this trajectory as $g'$. Then we have
\$
v_t \leq \max_{g'}\Big\|g' - \EE\bigl[\hat{\nabla}_{\theta} J(\pi_{\theta_t})\bigr] \Big\|_2^2 = \Big\|g - \EE\bigl[\hat{\nabla}_{\theta} J(\pi_{\theta_t})\bigr] \Big\|_2^2 = \Big\|\EE\bigl[g - \hat{\nabla}_{\theta} J(\pi_{\theta_t})\bigr]\Big\|_2^2,
\$
where $g$ is the pathwise gradient $\hat{\nabla}_{\theta} J(\pi_{\theta_t})$ of a \textit{fixed} (but unknown) trajectory $(\hat{s}_{0,n}, \hat{a}_{0,n}, \hat{s}_{1,n}, \hat{a}_{1,n}, \cdots)$ such that the maximum is achieved. 

Using the fact that $\|\EE[\cdot]\|_2 \leq \EE[\|\cdot\|_2]$, we further obtain
\#\label{var_square}
v_t \leq \EE\Bigl[\big\|g - \hat{\nabla}_{\theta} J(\pi_{\theta_t})\big\|_2\Bigr]^2.
\#

Let $\hat{x}_{i,n} = (\hat{s}_{i,n}, \hat{a}_{i,n})$. By the triangular inequality, we have
\#\label{var_1}
\EE\Bigl[\big\|g - \hat{\nabla}_{\theta} J(\pi_{\theta_t}) \big\|_2\Bigr]&\leq \sum_{i=0}^{h-1}\gamma^i\cdot\EE_{\overline{x}_i}\Bigl[\big\|\nabla_\theta r(\hat{x}_{i,n}) - \nabla_\theta r(\overline{x}_i)\big\|_2\Bigr]\notag\\
&\qquad+ \gamma^h\cdot\EE_{\overline{x}_h}\Bigl[\big\|\nabla \hat{Q}(\hat{x}_{h,n})\nabla_\theta\hat{x}_{h,n} - \nabla\hat{Q}(\overline{x}_h)\nabla_\theta\overline{x}_h\big\|_2\Bigr].
\#

By the chain rule, we have for any $i\geq 1$ that
\#\label{a_theta_dyn}
\frac{\ud\hat{a}_{i,n}}{\ud \theta} &= \frac{\partial \hat{a}_{i,n}}{\partial\hat{s}_{i,n}}\cdot\frac{\ud\hat{s}_{i,n}}{\ud\theta} + \frac{\partial\hat{a}_{i,n}}{\partial\theta},\\
\label{s_theta_dyn}\frac{\ud\hat{s}_{i,n}}{\ud \theta} &= \frac{\partial \hat{s}_{i,n}}{\partial\hat{s}_{i-1,n}}\cdot\frac{\ud\hat{s}_{i-1,n}}{\ud\theta} + \frac{\partial\hat{s}_{i,n}}{\partial\hat{a}_{i-1,n}}\cdot\frac{\ud\hat{a}_{i-1, n}}{\ud\theta}.
\#

Denote by $L_\theta = \sup_{\theta\in\Theta, s\in\mathcal{S}, \varsigma}\|\nabla_{\theta} \pi_\theta(s, \varsigma)\|_2$.

Plugging $\ud\hat{a}_{i-1, n} / \ud\theta$ in \eqref{a_theta_dyn} into \eqref{s_theta_dyn}, we get
\#\label{s_theta_}
\bigg\|\frac{\ud\hat{s}_{i,n}}{\ud \theta}\bigg\|_2 &= \bigg\|\biggl(\frac{\partial \hat{s}_{i,n}}{\partial\hat{s}_{i-1,n}} + \frac{\partial \hat{s}_{i,n}}{\partial\hat{a}_{i-1,n}}\cdot\frac{\partial \hat{a}_{i-1,n}}{\partial\hat{s}_{i-1,n}}\biggr)\cdot\frac{\ud\hat{s}_{i-1,n}}{\ud\theta} + \frac{\partial\hat{s}_{i,n}}{\partial\hat{a}_{i-1,n}}\cdot\frac{\partial\hat{a}_{i-1, n}}{\partial\theta}\bigg\|_2\notag\\
&\leq L_{\hat{f}}\Tilde{L}_\pi\cdot\bigg\|\frac{\ud\hat{s}_{i-1, n}}{\ud \theta}\bigg\|_2 + L_{\hat{f}}L_\theta,
\#
where the inequality follows from Assumption \ref{lip_assumption} and the Cauchy-Schwarz inequality.

Recursively applying \eqref{s_theta_}, we obtain for any $i\geq 1$ that
\#\label{s_theta_var}
\bigg\|\frac{\ud\hat{s}_{i,n}}{\ud \theta}\bigg\|_2 &\leq L_{\hat{f}}L_\theta\cdot\sum_{j=0}^{i-1}L_{\hat{f}}^j\Tilde{L}_\pi^j\leq i\cdot L_\theta L_{\hat{f}}^{i+1}\Tilde{L}_\pi^i,
\#
where the first inequality follows from the induction
\#\label{eg_seq}
z_i = a z_{i-1} + b = a\cdot(a z_{i-2} + b) + b = a^i\cdot z_0 + b\cdot\sum_{j=0}^{i-1}a^j.
\#
In \eqref{eg_seq}, $\{z_j\}_{0 \leq j \leq i}$ is the real sequence satisfying $z_j = a z_{j-1} + b$. For $\ud\hat{a}_{i,n} / \ud\theta$ defined in \eqref{a_theta_dyn}, we further have
\#\label{a_theta_var}
\bigg\|\frac{\ud\hat{a}_{i,n}}{\ud \theta}\bigg\|_2 &\leq L_\pi\cdot\bigg\|\frac{\ud\hat{s}_{i,n}}{\ud \theta}\bigg\|_2  + L_\theta\leq i\cdot L_\theta L_{\hat{f}}^{i+1}\Tilde{L}_\pi^{i+1} + L_\theta.
\#

Combining \eqref{s_theta_var} and \eqref{a_theta_var}, we obtain
\#\label{K_d}
\biggl\|\frac{\ud\hat{x}_{i,n}}{\ud \theta}\biggr\|_2 = \biggl\|\frac{\ud\hat{s}_{i,n}}{\ud \theta}\biggr\|_2 + \biggl\|\frac{\ud\hat{a}_{i,n}}{\ud \theta}\biggr\|_2 \leq \underbrace{2i\cdot L_\theta L_{\hat{f}}^{i+1}\Tilde{L}_\pi^{i+1} + L_\theta}_{\displaystyle \hat{K}(i)}.
\#

Therefore, we bound the second term on the right-hand side of \eqref{var_1} as follows,
\#\label{alternate}
&\EE_{\overline{x}_h}\Bigl[\big\|\nabla \hat{Q}(\hat{x}_{h,n})\nabla_\theta\hat{x}_{h,n} - \nabla\hat{Q}(\overline{x}_h)\nabla_\theta\overline{x}_h\big\|_2\Bigr] \notag\\
&\quad \leq \EE_{\overline{x}_h}\Bigl[\big\|\nabla \hat{Q}(\hat{x}_{h,n})\nabla_\theta\hat{x}_{h,n} - \nabla\hat{Q}(\overline{x}_h)\nabla_\theta\hat{x}_{h,n}\big\|_2\Bigr] + \EE_{\overline{x}_h}\Bigl[\big\|\nabla\hat{Q}(\overline{x}_h)\nabla_\theta\hat{x}_{h,n} - \nabla\hat{Q}(\overline{x}_h)\nabla_\theta\overline{x}_h\big\|_2\Bigr] \notag\\
&\quad\leq 2L_{\hat{Q}}\cdot\hat{K}(i) + L_{\hat{Q}}\cdot\biggl(\EE_{\overline{s}_i}\biggl[\biggl\|\frac{\ud\hat{s}_{i,n}}{\ud \theta} - \frac{\ud\overline{s}_{i}}{\ud \theta} \biggr\|_2\biggr] + \EE_{\overline{a}_i}\biggl[\biggl\|\frac{\ud\hat{a}_{i,n}}{\ud \theta} - \frac{\ud\overline{a}_{i}}{\ud \theta} \biggr\|_2\biggr]\biggr),
\#
where the last inequality follows from the Cauchy-Schwartz inequality and Assumption \ref{lip_assumption}, and $L_{\hat{Q}} = \sup_{\omega\in\Omega, s_1, s_2\in\mathcal{S}, a_1, a_2\in\mathcal{A}}|\hat{Q}_\omega(s_1, a_1) - \hat{Q}_\omega(s_2, a_2)| / \|(s_1 - s_2, a_1 - a_2)\|_2.$.

By the chain rule, we bound the first term on the right-hand side of \eqref{var_1} as follows,
\#\label{rew_chain}
&\EE_{\overline{x}_i}\Bigl[\big\|\nabla_\theta r(\hat{x}_{i,n}) - \nabla_\theta r(\overline{x}_i)\big\|_2\Bigr] \notag\\
&\quad = \EE_{\overline{x}_i}\Bigl[\big\|\nabla r(\hat{x}_{i,n})\nabla_\theta\hat{x}_{i,n} - \nabla r(\overline{x}_i)\nabla_\theta\overline{x}_i\big\|_2\Bigr] \notag\\
&\quad\leq \EE_{\overline{x}_i}\Bigl[\big\|\nabla r(\hat{x}_{i,n})\nabla_\theta\hat{x}_{i,n} - \nabla r(\hat{x}_{i,n})\nabla_\theta\overline{x}_i\big\|_2\Bigr] + \EE\Bigl[\big\|\nabla r(\hat{x}_{i,n})\nabla_\theta\overline{x}_i - \nabla r(\overline{x}_i)\nabla_\theta\overline{x}_i\big\|_2\Bigr]\notag\\
&\quad\leq L_r \cdot\biggl(\EE_{\overline{s}_i}\biggl[\biggl\|\frac{\ud\hat{s}_{i,n}}{\ud \theta} - \frac{\ud\overline{s}_{i}}{\ud \theta} \biggr\|_2\biggr] + \EE_{\overline{a}_i}\biggl[\biggl\|\frac{\ud\hat{a}_{i,n}}{\ud \theta} - \frac{\ud\overline{a}_{i}}{\ud \theta} \biggr\|_2\biggr]\biggr) + 2L_r\cdot\hat{K}(i).
\#

Plugging \eqref{alternate} and \eqref{rew_chain} into \eqref{var_1} and \eqref{var_square}, we obtain
\#
\label{eq::v_t_bound_DP}
v_t &\leq \biggl[\biggl(L_r\cdot\sum_{i=0}^{h-1}\gamma^i + \gamma^h\cdot L_{\hat{Q}}\biggr)\cdot\biggl(\EE_{\overline{s}_h}\biggl[\biggl\|\frac{\ud\hat{s}_{h,n}}{\ud \theta} - \frac{\ud\overline{s}_{h}}{\ud \theta} \biggr\|_2\biggr] + \EE_{\overline{a}_h}\biggl[\biggl\|\frac{\ud\hat{a}_{h,n}}{\ud \theta} - \frac{\ud\overline{a}_{h}}{\ud \theta} \biggr\|_2\biggr] + 2\hat{K}(h)\biggr)\biggr]^2\notag\\
&= O\Bigl(h^4\Bigl(\frac{1-\gamma^h}{1-\gamma}\Bigr)^2 \Tilde{L}_{\hat{f}}^{4h}\Tilde{L}_\pi^{4h} + \gamma^{2h}h^4 \Tilde{L}_{\hat{f}}^{4h}\Tilde{L}_\pi^{4h}\Bigr),
\#
where the inequality follows from Lemma \ref{lemma_pathwise} and by plugging the definition of $\hat{K}(i)$ in \eqref{K_d}. 

Note that the variance $v_t$ scales with the batch size $N$ at the rate of $1 / N$. Since the analysis above is established for $N = 1$, the bound of the gradient variance $v_t$ is established by dividing the right-hand side of \eqref{eq::v_t_bound_DP} by $N$, which concludes the proof of Proposition \ref{prop::grad_var}.
\end{proof}

\begin{lemma}
\label{lemma_pathwise}
Denote  $e = \sup\EE_{\overline{s}_{0}}[\|\ud\hat{s}_{0,n} / \ud \theta - \ud\overline{s}_{0} / \ud \theta \|_2]$, which is a constant that only depends
on the initial state distribution\footnote{We define $e$ to account for the stochasticity of the initial state distribution. $e=0$ when the initial state is deterministic.}. For any timestep $i\geq 1$ and the corresponding state, action, we have the following results,
\$
\EE_{\overline{s}_{i}}\biggl[\biggl\|\frac{\ud\hat{s}_{i,n}}{\ud \theta} - \frac{\ud\overline{s}_{i}}{\ud \theta}\biggr\|_2\biggr] &\leq \Tilde{L}_{\hat{f}}^i\Tilde{L}_\pi^i\Bigl(e + 4i\cdot \Tilde{L}_{\hat{f}}\Tilde{L}_\pi\cdot\hat{K}(i-1) + 2i\cdot \Tilde{L}_{\hat{f}}L_\theta\Bigr),\\
\EE_{\overline{a}_i}\biggl[\biggl\|\frac{\ud\hat{a}_{i,n}}{\ud \theta} - \frac{\ud\overline{a}_{i}}{\ud \theta} \biggr\|_2\biggr] &\leq \Tilde{L}_{\hat{f}}^i\Tilde{L}_\pi^{i+1}\Bigl(e + 4i\cdot \Tilde{L}_{\hat{f}}\Tilde{L}_\pi\cdot\hat{K}(i-1) + 2i\cdot \Tilde{L}_{\hat{f}}L_\theta\Bigr) + 2L_\pi \hat{K}(i) + 2L_\theta.
\$
\end{lemma}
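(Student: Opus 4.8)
The plan is to prove both inequalities by a joint induction on the timestep $i$. Write $D_s(i) = \EE_{\overline{s}_i}\bigl[\|\ud\hat{s}_{i,n}/\ud\theta - \ud\overline{s}_i/\ud\theta\|_2\bigr]$ and $D_a(i) = \EE_{\overline{a}_i}\bigl[\|\ud\hat{a}_{i,n}/\ud\theta - \ud\overline{a}_i/\ud\theta\|_2\bigr]$, so that the lemma claims $D_s(i) \leq \Tilde{L}_{\hat{f}}^i\Tilde{L}_\pi^i M(i)$ and $D_a(i) \leq \Tilde{L}_{\hat{f}}^i\Tilde{L}_\pi^{i+1} M(i) + 2L_\pi\hat{K}(i) + 2L_\theta$, where $M(i) = e + 4i\Tilde{L}_{\hat{f}}\Tilde{L}_\pi\hat{K}(i-1) + 2i\Tilde{L}_{\hat{f}}L_\theta$. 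I would first dispatch the action bound, showing it follows from the state bound at the same $i$: differentiating $\hat{a}_{i,n} = \pi_\theta(\hat{s}_{i,n},\varsigma_n)$ and $\overline{a}_i = \pi_\theta(\overline{s}_i,\varsigma_n)$ as in \eqref{a_theta_dyn}, subtracting, and inserting the cross term $\nabla_s\pi_\theta(\hat{s}_{i,n},\varsigma_n)\,\ud\overline{s}_i/\ud\theta$ gives three pieces; the first is $\leq L_\pi D_s(i)$, and for the other two I bound the score/Jacobian differences crudely by twice their individual norm bounds ($L_\pi$ and $L_\theta$) and use $\|\ud\overline{s}_i/\ud\theta\|_2 \leq \hat{K}(i)$ from \eqref{K_d}, obtaining $D_a(i) \leq L_\pi D_s(i) + 2L_\pi\hat{K}(i) + 2L_\theta$; since $L_\pi \leq \Tilde{L}_\pi$, this is exactly the claimed action bound once the state bound is available. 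So the substance is the recursion for $D_s(i)$.

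For that I would use the combined-Jacobian form \eqref{s_theta_}: for both the fixed trajectory and the reference trajectory, $\ud\hat{s}_{i,n}/\ud\theta = G_i\,\ud\hat{s}_{i-1,n}/\ud\theta + B_i\,\partial\hat{a}_{i-1,n}/\partial\theta$, where $G_i$ is the total Jacobian of $\hat{f}_\psi(\cdot,\pi_\theta(\cdot,\varsigma_n),\xi_n)$ in its state argument (so $\|G_i\|_2 \leq L_{\hat{f}}\Tilde{L}_\pi$ by Assumption~\ref{lip_assumption}, as in \eqref{s_theta_}) and $B_i = \partial\hat{s}_{i,n}/\partial\hat{a}_{i-1,n}$ (so $\|B_i\|_2 \leq L_{\hat{f}}$). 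Subtracting the two copies and inserting the intermediate terms $\overline{G}_i\,\ud\hat{s}_{i-1,n}/\ud\theta$ and $\overline{B}_i\,\partial\hat{a}_{i-1,n}/\partial\theta$ splits $\ud\hat{s}_{i,n}/\ud\theta - \ud\overline{s}_i/\ud\theta$ into a contraction part $G_i(\ud\hat{s}_{i-1,n}/\ud\theta - \ud\overline{s}_{i-1}/\ud\theta) + B_i(\partial\hat{a}_{i-1,n}/\partial\theta - \partial\overline{a}_{i-1}/\partial\theta)$ and a drift part $(G_i - \overline{G}_i)\,\ud\overline{s}_{i-1}/\ud\theta + (B_i - \overline{B}_i)\,\partial\overline{a}_{i-1}/\partial\theta$. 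The contraction part is at most $L_{\hat{f}}\Tilde{L}_\pi D_s(i-1) + 2L_{\hat{f}}L_\theta$ in expectation (the score difference being $\leq 2L_\theta$), and for the drift part, since we only control $\|G_i\|_2,\|\overline{G}_i\|_2 \leq L_{\hat{f}}\Tilde{L}_\pi$ and $\|B_i\|_2,\|\overline{B}_i\|_2 \leq L_{\hat{f}}$, I use $\|G_i - \overline{G}_i\|_2 \leq 2L_{\hat{f}}\Tilde{L}_\pi$, $\|B_i - \overline{B}_i\|_2 \leq 2L_{\hat{f}}$, together with $\|\ud\overline{s}_{i-1}/\ud\theta\|_2,\|\partial\overline{a}_{i-1}/\partial\theta\|_2 \leq \hat{K}(i-1)$ from \eqref{K_d}. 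Collecting terms yields the affine recursion $D_s(i) \leq L_{\hat{f}}\Tilde{L}_\pi D_s(i-1) + 2L_{\hat{f}}\Tilde{L}_\pi\hat{K}(i-1) + 4L_{\hat{f}}L_\theta$, with base value $D_s(0) \leq e$ by the definition of $e$.

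Finally I would unroll this recursion with the geometric-sum identity \eqref{eg_seq}: $D_s(i) \leq (L_{\hat{f}}\Tilde{L}_\pi)^i e + \sum_{j=1}^{i}(L_{\hat{f}}\Tilde{L}_\pi)^{i-j}\bigl(2L_{\hat{f}}\Tilde{L}_\pi\hat{K}(j-1) + 4L_{\hat{f}}L_\theta\bigr)$. Replacing every $L_{\hat{f}}$ by $\Tilde{L}_{\hat{f}} \geq 1$ makes $\Tilde{L}_{\hat{f}}\Tilde{L}_\pi \geq 1$, so each of the $i$ summands is at most its value at $j = i$ (with $\hat{K}$ read in its monotone $\Tilde{L}_{\hat{f}}$-form), and bounding the sum by $i$ times that value and factoring out $\Tilde{L}_{\hat{f}}^i\Tilde{L}_\pi^i$ gives $D_s(i) \leq \Tilde{L}_{\hat{f}}^i\Tilde{L}_\pi^i\bigl(e + 4i\Tilde{L}_{\hat{f}}\Tilde{L}_\pi\hat{K}(i-1) + 2i\Tilde{L}_{\hat{f}}L_\theta\bigr)$; substituting into the action inequality from the first paragraph closes the induction. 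The step I expect to be the main obstacle --- and the conceptual crux of the whole variance analysis --- is the treatment of the drift terms $G_i - \overline{G}_i$ and $B_i - \overline{B}_i$: Assumption~\ref{lip_assumption} gives only first-order (Lipschitz) control of $\hat{f}_\psi$ and $\pi_\theta$, not smoothness of their Jacobians, so these differences cannot be made small in terms of the distance between the two trajectories and instead must be absorbed as a constant (twice the operator-norm bound) multiplying the already-accumulated gradient magnitude $\hat{K}(i-1)$. This is precisely the mechanism that injects the $\hat{K}$-type source terms into the recursion and, after unrolling, produces the factors that grow polynomially in $i$ and exponentially in $i$ through $\Tilde{L}_{\hat{f}}\Tilde{L}_\pi$ --- the same exploding-variance effect the paper later controls with spectral normalization. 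A minor point requiring attention is the $i=0$ base case, where one must check that the discrepancy between $\ud\hat{s}_{0,n}/\ud\theta$ and $\ud\overline{s}_0/\ud\theta$ is entirely captured by the single constant $e$ (which is zero for deterministic initial states) and does not reappear as a fresh contribution at later steps.
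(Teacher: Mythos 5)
Your proposal is correct and follows essentially the same route as the paper's proof: insert cross terms, bound the Jacobian discrepancies crudely by twice their Lipschitz/norm bounds times the accumulated gradient magnitude $\hat{K}$, obtain an affine recursion in the state-derivative discrepancy with homogeneous coefficient $L_{\hat{f}}\Tilde{L}_\pi$, unroll it via \eqref{eg_seq} with base constant $e$, and then read off the action bound from the state bound via \eqref{a_theta_dyn}. Your use of the combined state-to-state Jacobian from \eqref{s_theta_} merely pre-packages the paper's step of plugging \eqref{int_a_bias} into \eqref{int_s_bias}, and your slightly different constants ($2i\Tilde{L}_{\hat{f}}\Tilde{L}_\pi\hat{K}(i-1)+4i\Tilde{L}_{\hat{f}}L_\theta$ versus the stated $4i\Tilde{L}_{\hat{f}}\Tilde{L}_\pi\hat{K}(i-1)+2i\Tilde{L}_{\hat{f}}L_\theta$) are dominated by the lemma's bound since $\Tilde{L}_\pi\hat{K}(i-1)\geq L_\theta$.
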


\begin{proof}
Firstly, from \eqref{s_theta_dyn}, we obtain for any $i\geq 1$ that
\#\label{int_s_bias}
&\EE_{\overline{s}_{i}}\biggl[\bigg\|\frac{\ud\hat{s}_{i,n}}{\ud \theta} - \frac{\ud\overline{s}_{i}}{\ud \theta}\bigg\|_2\biggr]\notag\\
&\quad= \EE\biggl[\bigg\|\frac{\partial\hat{s}_{i,n}}{\partial\hat{s}_{i-1,n}}\cdot\frac{\ud\hat{s}_{i-1,n}}{\ud \theta} + \frac{\partial\hat{s}_{i,n}}{\partial\hat{a}_{i-1,n}}\cdot\frac{\ud\hat{a}_{i-1,n}}{\ud \theta} - \frac{\partial\overline{s}_{i}}{\partial\overline{s}_{i-1}}\cdot\frac{\ud\overline{s}_{i-1}}{\ud \theta} - \frac{\partial\overline{s}_{i}}{\partial\overline{a}_{i-1}}\cdot\frac{\ud\overline{a}_{i-1}}{\ud \theta}\bigg\|_2\biggr]\notag\\
\intertext{According to the triangle inequality, we further have}
&\quad\leq \EE\biggl[\bigg\|\frac{\partial\hat{s}_{i,n}}{\partial\hat{s}_{i-1,n}}\cdot\frac{\ud\hat{s}_{i-1,n}}{\ud \theta} - \frac{\partial\overline{s}_{i}}{\partial\overline{s}_{i-1}}\cdot\frac{\ud\hat{s}_{i-1,n}}{\ud \theta}\bigg\|_2\biggr] + \EE\biggl[\bigg\|\frac{\partial\overline{s}_{i}}{\partial\overline{s}_{i-1}}\cdot\frac{\ud\hat{s}_{i-1,n}}{\ud \theta} - \frac{\partial\overline{s}_{i}}{\partial\overline{s}_{i-1}}\cdot\frac{\ud\overline{s}_{i-1}}{\ud \theta}\bigg\|_2\biggr]\notag\\
&\quad\qquad + \EE\biggl[\bigg\|\frac{\partial\hat{s}_{i,n}}{\partial\hat{a}_{i-1,n}}\cdot\frac{\ud\hat{a}_{i-1,n}}{\ud \theta} - \frac{\partial\overline{s}_{i}}{\partial\overline{a}_{i-1}}\cdot\frac{\ud\hat{a}_{i-1, n}}{\ud \theta}\bigg\|_2\biggr] + \EE\biggl[\bigg\|\frac{\partial\overline{s}_{i}}{\partial\overline{a}_{i-1}}\cdot\frac{\ud\hat{a}_{i-1, n}}{\ud \theta} - \frac{\partial\overline{s}_{i}}{\partial\overline{a}_{i-1}}\cdot\frac{\ud\overline{a}_{i-1}}{\ud \theta}\bigg\|_2\biggr]\notag\\
&\quad\leq 2L_{\hat{f}}\cdot\biggl(\bigg\|\frac{\ud\hat{s}_{i-1,n}}{\ud \theta}\bigg\|_2 + \bigg\|\frac{\ud\hat{a}_{i-1,n}}{\ud \theta}\bigg\|_2\biggr) + L_{\hat{f}}\cdot\EE_{\overline{s}_{i-1}}\biggl[\bigg\|\frac{\ud\hat{s}_{i-1,n}}{\ud \theta} - \frac{\ud\overline{s}_{i-1}}{\ud \theta}\bigg\|_2\biggr]\notag\\
&\quad\qquad + L_{\hat{f}}\cdot\EE_{\overline{a}_{i-1}}\biggl[\bigg\|\frac{\ud\hat{a}_{i-1,n}}{\ud \theta} - \frac{\ud\overline{a}_{i-1}}{\ud \theta}\bigg\|_2\biggr].
\#

Similarly, we have from \eqref{a_theta_dyn} that
\#\label{int_a_bias}
&\EE_{\overline{a}_i}\bigg[\biggl\|\frac{\ud\hat{a}_{i,n}}{\ud \theta} - \frac{\ud\overline{a}_{i}}{\ud \theta} \biggr\|_2\bigg]\notag\\
&\quad = \EE\biggl[\bigg\|\frac{\partial \hat{a}_{i,n}}{\partial\hat{s}_{i,n}}\cdot\frac{\ud\hat{s}_{i,n}}{\ud \theta} + \frac{\partial\hat{a}_{i,n}}{\partial\theta} - \frac{\partial \overline{a}_{i}}{\partial\overline{s}_{i}}\cdot\frac{\ud\overline{s}_{i}}{\ud \theta} - \frac{\partial\overline{a}_{i}}{\partial\theta}\bigg\|_2\biggr]\notag\\
&\quad\leq \EE\biggl[\bigg\|\frac{\partial \hat{a}_{i,n}}{\partial\hat{s}_{i,n}}\cdot\frac{\ud\hat{s}_{i,n}}{\ud \theta} - \frac{\partial \overline{a}_{i}}{\partial\overline{s}_{i}}\cdot\frac{\ud\hat{s}_{i,n}}{\ud \theta}\bigg\|_2\biggr] + \EE\biggl[\bigg\|\frac{\partial \overline{a}_{i}}{\partial\overline{s}_{i}}\cdot\frac{\ud\hat{s}_{i,n}}{\ud \theta} - \frac{\partial \overline{a}_{i}}{\partial\overline{s}_{i}}\cdot\frac{\ud\overline{s}_{i}}{\ud \theta}\bigg\|_2\biggr] + \EE\biggl[\biggl\|\frac{\partial\hat{a}_{i,n}}{\partial\theta} - \frac{\partial\overline{a}_{i}}{\partial\theta}\biggr\|_2\biggr]\notag\\
&\quad\leq 2L_\pi\cdot\EE\biggl[\bigg\|\frac{\ud\hat{s}_{i,n}}{\ud \theta}\bigg\|\biggr] + L_\pi\cdot\EE\biggl[\bigg\|\frac{\ud\hat{s}_{i,n}}{\ud \theta} - \frac{\ud\overline{s}_{i}}{\ud \theta}\bigg\|_2\biggr] + 2L_\theta.
\#

Plugging \eqref{int_a_bias} back to \eqref{int_s_bias}, we obtain
\$
&\EE_{\overline{s}_{i}}\biggl[\bigg\|\frac{\ud\hat{s}_{i,n}}{\ud \theta} - \frac{\ud\overline{s}_{i}}{\ud \theta}\bigg\|_2\biggr]\notag\\
&\quad\lesssim 4L_{\hat{f}}\Tilde{L}_\pi\cdot\biggl(\bigg\|\frac{\ud\hat{s}_{i-1,n}}{\ud \theta}\bigg\|_2 + \bigg\|\frac{\ud\hat{a}_{i-1,n}}{\ud \theta}\bigg\|_2\biggr) + L_{\hat{f}}\Tilde{L}_\pi\cdot\EE_{\overline{s}_{i-1}}\biggl[\bigg\|\frac{\ud\hat{s}_{i-1,n}}{\ud \theta} - \frac{\ud\overline{s}_{i-1}}{\ud \theta}\bigg\|_2\biggr] + 2L_{\hat{f}}L_\theta\notag\\
&\quad\leq 4L_{\hat{f}}\Tilde{L}_\pi\cdot\hat{K}(i-1) + L_{\hat{f}}\Tilde{L}_\pi\cdot\EE_{\overline{s}_{i-1}}\biggl[\bigg\|\frac{\ud\hat{s}_{i-1,n}}{\ud \theta} - \frac{\ud\overline{s}_{i-1}}{\ud \theta}\bigg\|_2\biggr] + 2L_{\hat{f}}L_\theta,
\$
where the last inequality follows from the definition of $\hat{K}$ in \eqref{K_d}.

Applying this recursion gives
\$
\EE_{\overline{s}_{i}}\biggl[\bigg\|\frac{\ud\hat{s}_{i,n}}{\ud \theta} - \frac{\ud\overline{s}_{i}}{\ud \theta}\bigg\|_2\biggr] &\leq e\bigl(L_{\hat{f}}\Tilde{L}_\pi\bigr)^i + \bigl(4L_{\hat{f}}\Tilde{L}_\pi\cdot\hat{K}(i-1) + 2L_{\hat{f}}L_\theta\bigr)\cdot\sum_{j=0}^{i-1}\bigl(L_{\hat{f}}\Tilde{L}_\pi\bigr)^j\notag\\
&\leq \Tilde{L}_{\hat{f}}^i\Tilde{L}_\pi^i\Bigl(e + 4i\cdot \Tilde{L}_{\hat{f}}\Tilde{L}_\pi\cdot\hat{K}(i-1) + 2i\cdot \Tilde{L}_{\hat{f}}L_\theta\Bigr),
\$
where the first equality follows from \eqref{eg_seq}.

As a consequence, we have from \eqref{int_a_bias} that
\$
\EE_{\overline{a}_i}\bigg[\biggl\|\frac{\ud\hat{a}_{i,n}}{\ud \theta} - \frac{\ud\overline{a}_{i}}{\ud \theta} \biggr\|_2\bigg]\leq \Tilde{L}_{\hat{f}}^i\Tilde{L}_\pi^{i+1}\Bigl(e + 4i\cdot \Tilde{L}_{\hat{f}}\Tilde{L}_\pi\cdot\hat{K}(i-1) + 2i\cdot \Tilde{L}_{\hat{f}}L_\theta\Bigr) + 2L_\pi \hat{K}(i) + 2L_\theta.
\$
This concludes the proof.
\end{proof}

\subsection{Proof of Proposition \ref{prop_grad_bias}}
\begin{proof}
The analysis of gradient bias differs from that of gradient variance as it involves not only the distribution of approximate states but also the recurrent dependencies of the true value on future timesteps, which must be given extra attention.

In the following analysis, we will first apply similar techniques as those outlined in the previous section to establish an upper bound on the decomposed reward terms in the gradient bias. Afterward, we will address the distribution mismatch issue caused by the recursive structure of $V^{\pi_\theta}$ and the non-recursive structure of the value approximation $\hat{V}_{\omega_t}$.

\paragraph{Step 1: Bound the cumulative reward terms in the gradient bias.}

To begin with, we decompose the bias of the reward gradient at timestep $i\geq 0$ as follows,
\#\label{each_timestep}
&\EE_{(s_i,a_i)\sim\mathbb{P}(s_i, a_i), (\hat{s}_{i,n},\hat{a}_{i,n})\sim\mathbb{P}(\hat{s}_i, \hat{a}_i)}\biggl[\bigg\|\frac{\ud r(\hat{x}_{i,n})}{\ud \theta} - \frac{\ud r(x_i)}{\ud \theta}\bigg\|_2\biggr]\notag\\
&\quad = \EE\biggl[\bigg\|\frac{\ud r}{\ud \hat{x}_{i,n}}\cdot\frac{\ud\hat{x}_{i,n}}{\ud \theta} - \frac{\ud r}{\ud x_i}\cdot\frac{\ud x_i}{\ud \theta}\bigg\|_2\biggr]\notag\\
&\quad \leq \EE\biggl[\bigg\|\frac{\ud r}{\ud \hat{x}_{i,n}}\cdot\frac{\ud\hat{x}_{i,n}}{\ud \theta} - \frac{\ud r}{\ud x_i}\cdot\frac{\ud \hat{x}_{i,n}}{\ud \theta}\bigg\|_2 + \bigg\|\frac{\ud r}{\ud x_i}\cdot\frac{\ud \hat{x}_{i,n}}{\ud \theta} - \frac{\ud r}{\ud x_i}\cdot\frac{\ud x_i}{\ud \theta}\bigg\|_2\biggr]\notag\\
&\quad\leq 2L_r\cdot\hat{K}(i) + L_r \cdot\biggl(\EE\biggl[\bigg\|\frac{\ud\hat{s}_{i,n}}{\ud \theta} - \frac{\ud s_{i}}{\ud \theta}\bigg\|_2\biggr] + \EE\biggl[\bigg\|\frac{\ud\hat{a}_{i,n}}{\ud \theta} - \frac{\ud a_{i}}{\ud \theta} \bigg\|_2\biggr]\biggr),
\#
where $\mathbb{P}(s_i, a_i)$ and $\mathbb{P}(\hat{s}_i, \hat{a}_i)$ are defined in \eqref{error_model_def} with respect to $s_0\sim\nu_\pi$, $\hat{s}_{0}\sim\nu_\pi$.

We have from \eqref{a_theta_dyn} that for any $i\geq 1$,
\#\label{intermediate_a_bias}
&\EE\biggl[\bigg\|\frac{\ud\hat{a}_{i,n}}{\ud \theta} - \frac{\ud a_{i}}{\ud \theta}\bigg\|_2\biggr]\notag\\
&\quad = \EE\biggl[\bigg\|\frac{\partial \hat{a}_{i,n}}{\partial\hat{s}_{i,n}}\cdot\frac{\ud\hat{s}_{i,n}}{\ud\theta} + \frac{\partial\hat{a}_{i,n}}{\partial\theta} - \frac{\partial a_{i}}{\partial s_{i}}\cdot\frac{\ud s_{i}}{\ud\theta} - \frac{\partial a_{i}}{\partial\theta}\bigg\|_2\biggr]\notag\\
\intertext{By the triangle inequality and the Lipschitz assumption, it then follows that}
&\quad\leq \EE\biggl[\bigg\|\frac{\partial \hat{a}_{i,n}}{\partial\hat{s}_{i,n}}\cdot\frac{\ud\hat{s}_{i,n}}{\ud\theta} - \frac{\partial a_{i}}{\partial s_{i}}\cdot\frac{\ud \hat{s}_{i,n}}{\ud\theta}\bigg\|_2\biggr] + \EE\biggl[\bigg\|\frac{\partial a_{i}}{\partial s_{i}}\cdot\frac{\ud \hat{s}_{i,n}}{\ud\theta} - \frac{\partial a_{i}}{\partial s_{i}}\cdot\frac{\ud s_{i}}{\ud\theta}\bigg\|_2\biggr] + \EE\biggl[\bigg\|\frac{\partial\hat{a}_{i,n}}{\partial\theta} - \frac{\partial a_{i}}{\partial\theta}\bigg\|_2\biggr]\notag\\
&\quad\leq 2L_\pi\cdot\EE\biggl[\bigg\|\frac{\ud\hat{s}_{i,n}}{\ud\theta}\bigg\|_2\biggr] + L_\pi\cdot\EE\biggl[\bigg\|\frac{\ud\hat{s}_{i,n}}{\ud \theta} - \frac{\ud s_{i}}{\ud \theta}\bigg\|_2\biggr] + 2L_\theta.
\#
Similarly, we have from \eqref{s_theta_dyn} that for any $i\geq 1$,
\#\label{eq_app_s_theta_a}
&\EE\biggl[\bigg\|\frac{\ud\hat{s}_{i,n}}{\ud \theta} - \frac{\ud s_{i}}{\ud \theta}\bigg\|_2\biggr]\notag\\
&\quad= \EE\biggl[\bigg\|\frac{\partial\hat{s}_{i,n}}{\partial\hat{s}_{i-1,n}}\cdot\frac{\ud\hat{s}_{i-1,n}}{\ud \theta} + \frac{\partial\hat{s}_{i,n}}{\partial\hat{a}_{i-1,n}}\cdot\frac{\ud\hat{a}_{i-1,n}}{\ud \theta} - \frac{\partial s_{i}}{\partial s_{i-1}}\cdot\frac{\ud s_{i-1}}{\ud \theta} - \frac{\partial s_{i}}{\partial a_{i-1}}\cdot\frac{\ud a_{i-1}}{\ud \theta}\bigg\|_2\biggr]\notag\\
\intertext{Applying the triangle inequality to extract the $\epsilon_{f,t}$ term defined in \eqref{error_model_def}, we proceed by}
&\quad\leq \EE\biggl[\bigg\|\frac{\partial\hat{s}_{i,n}}{\partial\hat{s}_{i-1,n}}\cdot\frac{\ud\hat{s}_{i-1,n}}{\ud \theta} - \frac{\partial s_{i}}{\partial s_{i-1}}\cdot\frac{\ud\hat{s}_{i-1,n}}{\ud \theta}\bigg\|_2\biggr] + \EE\biggl[\bigg\|\frac{\partial s_{i}}{\partial s_{i-1}}\cdot\frac{\ud\hat{s}_{i-1,n}}{\ud \theta} - \frac{\partial s_{i}}{\partial s_{i-1}}\cdot\frac{\ud s_{i-1}}{\ud \theta}\bigg\|_2\biggr]\notag\\
&\quad\qquad + \EE\biggl[\bigg\|\frac{\partial\hat{s}_{i,n}}{\partial\hat{a}_{i-1,n}}\cdot\frac{\ud\hat{a}_{i-1,n}}{\ud \theta} - \frac{\partial s_{i}}{\partial a_{i-1}}\cdot\frac{\ud\hat{a}_{i-1, n}}{\ud \theta}\bigg\|_2\biggr] + \EE\biggl[\bigg\|\frac{\partial s_{i}}{\partial a_{i-1}}\cdot\frac{\ud\hat{a}_{i-1, n}}{\ud \theta} - \frac{\partial s_{i}}{\partial a_{i-1}}\cdot\frac{\ud a_{i-1}}{\ud \theta}\bigg\|_2\biggr]\notag\\
&\quad\leq \epsilon_{f,t}\cdot\EE\biggl[\bigg\|\frac{\ud\hat{s}_{i-1,n}}{\ud \theta}\bigg\|_2 + \bigg\|\frac{\ud\hat{a}_{i-1,n}}{\ud \theta}\bigg\|_2\biggr] + L_f\cdot\EE\biggl[\bigg\|\frac{\ud\hat{s}_{i-1,n}}{\ud \theta} - \frac{\ud s_{i-1}}{\ud \theta}\bigg\|_2\biggr]\notag\\
&\quad\qquad + L_f\cdot\EE\biggl[\bigg\|\frac{\ud\hat{a}_{i-1,n}}{\ud \theta} - \frac{\ud a_{i-1}}{\ud \theta}\bigg\|_2\biggr]\notag\\
&\quad\leq  \epsilon_{f,t}\cdot\hat{K}(i-1) + L_f\cdot\EE\biggl[\bigg\|\frac{\ud\hat{s}_{i-1,n}}{\ud \theta} - \frac{\ud s_{i-1}}{\ud \theta}\bigg\|_2\biggr] + L_f\cdot\EE\biggl[\bigg\|\frac{\ud\hat{a}_{i-1,n}}{\ud \theta} - \frac{\ud a_{i-1}}{\ud \theta}\bigg\|_2\biggr],
\#
where the last inequality follows from the definition of $\hat{K}(i-1)$ in \eqref{K_d}.

Combining \eqref{intermediate_a_bias} and \eqref{eq_app_s_theta_a}, we have
\#\label{bias_s_theta}
\EE\biggl[\bigg\|\frac{\ud\hat{s}_{i,n}}{\ud \theta} - \frac{\ud s_{i}}{\ud \theta}\bigg\|_2\biggr]&\lesssim (\epsilon_{f,t} + 2L_f L_\pi)\cdot\hat{K}(i-1) + L_f\Tilde{L}_\pi\cdot\EE\biggl[\bigg\|\frac{\ud\hat{s}_{i-1,n}}{\ud \theta} - \frac{\ud s_{i-1}}{\ud \theta}\bigg\|_2\biggr] + 2L_{f}L_\theta\notag\\
&= \bigl((\epsilon_{f,t} + 2L_f L_\pi)\cdot\hat{K}(i-1) + 2L_{f}L_\theta\bigr)\cdot\sum_{j=0}^{i-1}L_f^j \Tilde{L}_\pi^j\notag\\
&\leq \bigl((\epsilon_{f,t} + 2L_f L_\pi)\cdot\hat{K}(i-1) + 2L_{f}L_\theta\bigr)\cdot i\cdot\Tilde{L}_f^i\Tilde{L}_\pi^i,
\#
where the last inequality follows from \eqref{eg_seq} and the fact that $s_0, \hat{s}_{0, n}$ are sampled from the same initial distribution, and the equality holds by applying the recursion.

Plugging \eqref{bias_s_theta} into \eqref{intermediate_a_bias}, we obtain
\#\label{bias_a_theta}
\EE\biggl[\bigg\|\frac{\ud\hat{a}_{i,n}}{\ud \theta} - \frac{\ud a_{i}}{\ud \theta}\bigg\|_2\biggr]\leq \bigl[(\epsilon_{f,t} + 2L_f L_\pi)\cdot\hat{K}(i-1) + 2L_{f}L_\theta\bigr]\cdot i\cdot\Tilde{L}_f^i\Tilde{L}_\pi^{i+1} + 2L_\pi\hat{K}(i) + 2L_\theta.
\#

\paragraph{Step 2: Address the state distribution mismatch issue.}
The next step is to address the distribution mismatch issue caused by the recursive structure of the value function and the non-recursive structure of the value approximation, i.e., the critic.

We define $\overline{\sigma}_1(s, a) = \mathbb{P}(s_h = s, a_h=a)$ where $s_0\sim \nu_\pi$, $a_i\sim\pi(\cdot\,|\,s_i)$, and $s_{i+1}\sim f(\cdot\,|\,s_i, a_i)$. In a similar way, we define $\hat{\sigma}_1(s, a) = \mathbb{P}(\hat{s}_h = s, \hat{a}_h=a)$ where $\hat{s}_0\sim \nu_\pi$, $\hat{a}_i\sim\pi(\cdot\,|\,\hat{s}_i)$, and $\hat{s}_{i+1}\sim \hat{f}(\cdot\,|\,\hat{s}_i, \hat{a}_i)$. 

Now we are ready to bound the gradient bias. From Lemma \ref{lemma_estimated_grad}, we know that
\#\label{eq_app_bt_rhs}
b_t&\leq\kappa\kappa'\cdot\EE_{s_0\sim\nu_\pi,\hat{s}_{0,n}\sim\nu_\pi}\biggl[\bigg\|\nabla_\theta\sum_{i=0}^{h-1}\gamma^i\cdot r(s_{i}, a_{i}) - \nabla_\theta\sum_{i=0}^{h-1}\gamma^i\cdot r(\hat{s}_{i,n}, \hat{a}_{i,n})\bigg\|_2\biggr]\notag\\
&\quad + \kappa'\gamma^h\cdot\EE_{(s_h, a_{h})\sim\overline{\sigma}_1, (\hat{s}_{h,n}, \hat{a}_{h,n})\sim\hat{\sigma}_1}\biggl[\bigg\| \frac{\partial Q^{\pi_\theta}}{\partial a_h}\cdot\frac{\ud a_h}{\ud \theta} + \frac{\partial Q^{\pi_\theta}}{\partial s_h}\cdot\frac{\ud s_h}{\ud \theta} - \nabla_\theta\hat{Q}_t(\hat{s}_{h,n}, \hat{a}_{h,n})\bigg\|_2\biggr],
\#
where recall that $\kappa' = \beta + \kappa\cdot(1 - \beta)$.

From Lemma \ref{lemma_app_lip_value}, we have for any policy $\pi_\theta$ that the state-action value function is $L_Q$-Lipschitz continuous, which gives for any $s\in\mathcal{S}$ and $a\in\mathcal{A}$ that
\#\label{eq_app_value_der}
\bigg\|\frac{\partial Q^{\pi_\theta}}{\partial a}\bigg\|_2 \leq L_Q,\quad
\bigg\|\frac{\partial Q^{\pi_\theta}}{\partial s}\bigg\|_2 \leq L_Q.
\#

The bias brought by the critic, i.e., the last term on the right-hand side of \eqref{eq_app_bt_rhs}, can be further bounded by
\#\label{q_gradient_error}
&\EE_{(s_h, a_{h})\sim\overline{\sigma}_1, (\hat{s}_{h,n}, \hat{a}_{h,n})\sim\hat{\sigma}_1}\biggl[\bigg\|\frac{\partial Q^{\pi_\theta}}{\partial a_h}\cdot\frac{\ud a_h}{\ud \theta} + \frac{\partial Q^{\pi_\theta}}{\partial s_h}\cdot\frac{\ud s_h}{\ud \theta} - \nabla_\theta\hat{Q}_t(\hat{s}_{h,n}, \hat{a}_{h,n})\bigg\|_2\biggr]\notag\\
&\quad= \EE_{\overline{\sigma}_1, \hat{\sigma}_1}\biggl[\bigg\|\frac{\partial Q^{\pi_\theta}}{\partial a_h}\cdot\frac{\ud a_h}{\ud \theta} + \frac{\partial Q^{\pi_\theta}}{\partial s_h}\cdot\frac{\ud s_h}{\ud \theta} - \frac{\partial \hat{Q}_t}{\partial\hat{a}_{h,n}}\cdot\frac{\ud\hat{a}_{h,n}}{\ud \theta} - \frac{\partial \hat{Q}_t}{\partial\hat{s}_{h,n}}\cdot\frac{\ud\hat{s}_{h,n}}{\ud \theta}\bigg\|_2\biggr]\notag\\
&\quad\leq \EE_{\overline{\sigma}_1, \hat{\sigma}_1}\biggl[\bigg\|\frac{\partial Q^{\pi_\theta}}{\partial a_h}\cdot\frac{\ud a_h}{\ud \theta} - \frac{\partial Q^{\pi_\theta}}{\partial a_h}\cdot\frac{\ud\hat{a}_{h,n}}{\ud \theta}\bigg\|_2 + \bigg\|\frac{\partial Q^{\pi_\theta}}{\partial a_h}\cdot\frac{\ud\hat{a}_{h,n}}{\ud \theta} - \frac{\partial \hat{Q}_t}{\partial\hat{a}_{h,n}}\cdot\frac{\ud\hat{a}_{h,n}}{\ud \theta}\bigg\|_2\notag\\
&\quad\qquad + \bigg\|\frac{\partial Q^{\pi_\theta}}{\partial s_h}\cdot\frac{\ud s_h}{\ud \theta} - \frac{\partial Q^{\pi_\theta}}{\partial s_h}\cdot\frac{\ud\hat{s}_{h,n}}{\ud \theta}\bigg\|_2 + \bigg\|\frac{\partial Q^{\pi_\theta}}{\partial s_h}\cdot\frac{\ud\hat{s}_{h,n}}{\ud \theta} - \frac{\partial \hat{Q}_t}{\partial\hat{s}_{h,n}}\cdot\frac{\ud\hat{s}_{h,n}}{\ud \theta}\bigg\|_2\biggr]\notag\\
&\quad\leq L_Q\cdot\biggl(\EE_{\overline{\sigma}_1, \hat{\sigma}_1}\biggl[\bigg\|\frac{\ud a_h}{\ud \theta} - \frac{\ud\hat{a}_{h,n}}{\ud \theta}\bigg\|_2 + \bigg\|\frac{\ud s_h}{\ud \theta} - \frac{\ud\hat{s}_{h,n}}{\ud \theta}\bigg\|_2\biggr]\biggr) + \Bigl(\frac{\gamma^h}{1 -\gamma}\Bigr)^2 \hat{K}(h)\cdot\epsilon_{v,t},
\#
where the equality follows from the chain rule and the fact that the critic $\hat{Q}_t$ has a non-recursive structure, the last inequality follows from \eqref{eq_app_value_der}, \eqref{K_d} and the definition of $\epsilon_{v,t}$ in \eqref{error_critic_def}.

Plugging \eqref{each_timestep} and \eqref{q_gradient_error} into \eqref{eq_app_bt_rhs}, we obtain
\#\label{eq_app_b_final}
b_t &\leq \kappa\kappa'\cdot h\cdot\biggl(L_r \cdot\biggl(\EE_{\overline{\sigma}_1, \hat{\sigma}_1}\biggl[\bigg\|\frac{\ud\hat{s}_{h,n}}{\ud \theta} - \frac{\ud s_{h}}{\ud \theta} \bigg\|_2\biggr] + \EE_{\overline{\sigma}_1, \hat{\sigma}_1}\biggl[\bigg\|\frac{\ud\hat{a}_{h,n}}{\ud \theta} - \frac{\ud a_{h}}{\ud \theta} \bigg\|_2\biggr]\biggr) + 2L_r\cdot\hat{K}(h)\biggr)\notag\\
& + \kappa'\gamma^h\cdot\biggl(L_Q \cdot \biggl(\EE_{\overline{\sigma}_1, \hat{\sigma}_1}\biggl[\bigg\|\frac{\ud\hat{s}_{h,n}}{\ud \theta} - \frac{\ud s_{h}}{\ud \theta} \bigg\|_2\biggr] + \EE_{\overline{\sigma}_1, \hat{\sigma}_1}\biggl[\bigg\|\frac{\ud\hat{a}_{h,n}}{\ud \theta} - \frac{\ud a_{h}}{\ud \theta} \bigg\|_2\biggr]\biggr) + \hat{K}(h)\cdot\Bigl(\frac{\gamma^h}{1 -\gamma}\Bigr)^2\epsilon_{v,t}\biggr),
\#

Plugging \eqref{bias_s_theta}, \eqref{bias_a_theta}, and \eqref{K_d} into the \eqref{eq_app_b_final}, we conclude the proof by obtaining
\#
\label{eq::b_t_bound_DP}
b_t = O\Bigl(\kappa\kappa'h^2\frac{1 - \gamma^h}{1 - \gamma} \Tilde{L}_{\hat{f}}^{h} \Tilde{L}_{f}^{h} \Tilde{L}_\pi^{2h}\epsilon_{f,t} + \kappa' h\gamma^h \Bigl(\frac{\gamma^h}{1 -\gamma}\Bigr)^2\Tilde{L}_{\hat{f}}^{h}\Tilde{L}_\pi^h\epsilon_{v,t}\Bigr).
\#
\end{proof}

\begin{lemma}
\label{lemma::v_rec}
The expected value gradient over the state distribution $\mathbb{P}(s_h)$ can be represented by
\$
\EE_{s_h\sim\mathbb{P}(s_h)}\bigl[\nabla_\theta V^{\pi_\theta}(s_h)\bigr] = \EE_{(s,a)\sim\overline{\sigma}_1}\biggl[\frac{\partial Q^{\pi_\theta}}{\partial a}\cdot\frac{\ud a}{\ud \theta} + \frac{\partial Q^{\pi_\theta}}{\partial s}\cdot\frac{\ud s}{\ud \theta}\biggr],
\$
where $\mathbb{P}(s_h)$ is the state distribution at timestep $h$ when $s_0\sim \zeta$, $a_i\sim\pi(\cdot\,|\,s_i)$, and $s_{i+1}\sim f(\cdot\,|\,s_i, a_i)$.
\end{lemma}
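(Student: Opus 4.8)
The plan is to read the identity off the relation $V^{\pi_\theta}(s)=\EE_{a\sim\pi_\theta(\cdot\,|\,s)}\bigl[Q^{\pi_\theta}(s,a)\bigr]$ between the state and state--action value functions, combined with a change of the sampling distribution. Reading $\nabla_\theta V^{\pi_\theta}(s_h)$ as the total derivative $\ud V^{\pi_\theta}(s_h)/\ud\theta$ along a trajectory (consistent with \eqref{eq_app_rec_theta}), and using the simulation property \eqref{simu_cont} to write $V^{\pi_\theta}(s)=\EE_{\varsigma}\bigl[Q^{\pi_\theta}(s,\pi_\theta(s,\varsigma))\bigr]$, I would first fix a state $s_h$ at timestep $h$ generated by $s_0\sim\zeta$, $a_i\sim\pi_\theta(\cdot\,|\,s_i)$, $s_{i+1}\sim f(\cdot\,|\,s_i,a_i)$, and interchange $\ud/\ud\theta$ with the expectation over $\varsigma$, obtaining $\ud V^{\pi_\theta}(s_h)/\ud\theta=\EE_{\varsigma}\bigl[\ud Q^{\pi_\theta}(s_h,\pi_\theta(s_h,\varsigma))/\ud\theta\bigr]$. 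This interchange is licensed by Assumption \ref{continuous} (so the map is differentiable in all arguments) together with the Lipschitz structure of Assumption \ref{lip_assumption} and the Lipschitz continuity of $Q^{\pi_\theta}$ (Lemma \ref{lemma_app_lip_value}), which make the integrand and its $\theta$-derivative uniformly integrable, so dominated convergence applies.

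Next I would expand $\ud Q^{\pi_\theta}(s_h,a_h)/\ud\theta$, with $a_h=\pi_\theta(s_h,\varsigma)$, by the chain rule, separating the routes through which $\theta$ reaches the arguments of $Q^{\pi_\theta}$. The state argument contributes $(\partial Q^{\pi_\theta}/\partial s)\cdot(\ud s_h/\ud\theta)$, where $\ud s_h/\ud\theta$ is the total derivative accumulated over the first $h$ transitions as in \eqref{s_theta_dyn}; the action argument contributes $(\partial Q^{\pi_\theta}/\partial a)\cdot(\ud a_h/\ud\theta)$, where $\ud a_h/\ud\theta=(\partial\pi_\theta/\partial s)(\ud s_h/\ud\theta)+\partial\pi_\theta/\partial\theta$ bundles the state-mediated and the direct dependence exactly as in \eqref{a_theta_dyn}. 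Collecting these two terms produces the summand $(\partial Q^{\pi_\theta}/\partial a)(\ud a_h/\ud\theta)+(\partial Q^{\pi_\theta}/\partial s)(\ud s_h/\ud\theta)$ appearing on the right-hand side of the claim.

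Finally I would take $\EE_{s_h\sim\mathbb{P}(s_h)}$ on both sides and fold in the expectation over $\varsigma$. Since $a_h=\pi_\theta(s_h,\varsigma)$ with $s_h\sim\mathbb{P}(s_h)$ and $\varsigma\sim p$ is, again by \eqref{simu_cont}, a draw $a_h\sim\pi_\theta(\cdot\,|\,s_h)$, the pair $(s_h,a_h)$ is distributed as $\overline{\sigma}_1$ by its definition $\overline{\sigma}_1(s,a)=\mathbb{P}(s_h=s,a_h=a)$. Applying the law of the unconscious statistician to rewrite $\EE_{s_h\sim\mathbb{P}(s_h),\,\varsigma}[\,\cdot\,]$ as $\EE_{(s,a)\sim\overline{\sigma}_1}[\,\cdot\,]$ then yields the stated equality.

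I expect the only genuine difficulty to be the careful bookkeeping of the several channels by which $\theta$ acts --- through the first-$h$-step rollout that produces $s_h$, through the direct and the state-mediated dependence of $a_h$, and through the policy $\pi_\theta$ appearing in the superscript of $Q^{\pi_\theta}$ --- so that the expansion of $\ud Q^{\pi_\theta}(s_h,a_h)/\ud\theta$ is organized consistently with the recursive value-gradient convention of \eqref{value_grad} and no contribution is dropped or double-counted. Once this accounting is pinned down, the remaining interchange-of-limits and change-of-measure steps are routine.
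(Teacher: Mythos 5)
Your proposal has a genuine gap, and it sits exactly where the lemma has its content. When you expand $\ud Q^{\pi_\theta}\bigl(s_h,\pi_\theta(s_h,\varsigma)\bigr)/\ud\theta$ you collect only the two routes $(\partial Q^{\pi_\theta}/\partial a)\cdot(\ud a_h/\ud\theta)$ and $(\partial Q^{\pi_\theta}/\partial s)\cdot(\ud s_h/\ud\theta)$, but there is a third route: the dependence of $Q^{\pi_\theta}$ on $\theta$ through the policy superscript, i.e.\ $\nabla_{\theta'}Q^{\pi_{\theta'}}(s_h,a_h)\big|_{\theta'=\theta}$ with the arguments held fixed. You mention this channel in your closing paragraph, but your middle step silently drops it. It is not zero: by the Bellman equation it equals $\gamma\cdot\EE\bigl[\nabla_\theta V^{\pi_\theta}(s_{h+1})\bigr]$ (this is the recursive term in \eqref{eq_app_rec_theta} and \eqref{eq_app_rec_v_}), and it is precisely the term whose recursive unrolling produces contributions from every timestep $i\geq h$, leading to \eqref{nabla_v_int}.

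The second, linked error is the change of measure at the end. You identify the law of $(s_h,a_h)$ with $s_h\sim\mathbb{P}(s_h)$ started from $\zeta$ as being $\overline{\sigma}_1$ ``by definition,'' but the paper defines $\overline{\sigma}_1(s,a)=\mathbb{P}(s_h=s,a_h=a)$ with $s_0\sim\nu_\pi$, not $s_0\sim\zeta$; equivalently, as shown in the paper's proof, $\overline{\sigma}_1=\overline{\sigma}_2$, the discounted occupancy over timesteps $i\geq h$ of the trajectory started from $\zeta$. If your two-term chain rule were the whole expansion, the right-hand side would be an average over the time-$h$ marginal under $\zeta$, and the stated identity would be false in general; it is exactly the recursive term you dropped that, once unrolled, converts the time-$h$-under-$\zeta$ average into the $\overline{\sigma}_1$ (equivalently $\overline{\sigma}_2$) average. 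So the argument cannot be completed by a single application of the chain rule at timestep $h$; you need the Bellman-recursion unrolling and the occupancy-measure identification, which is the route the paper takes.
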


\begin{proof}
At state $s_h$, the value gradient can be rewritten as
\#\label{eq_app_v_int}
\nabla_\theta V^{\pi_\theta}(s_h) &= \nabla_\theta\EE\biggl[r(s_h, a_h) + \gamma\cdot\int_{\mathcal{S}}f\bigl(s_{h+1}|s_{h},  a_h\bigr)\cdot V^\pi(s_{h+1}) \ud s_{h+1}\biggr]\notag\\
& = \nabla_\theta\EE\bigl[r(s_h, a_h) \bigr] + \gamma\cdot\EE\biggl[\nabla_\theta\int_{\mathcal{S}}f\bigl(s_{h+1}|s_{h},  a_h\bigr)\cdot V^\pi(s_{h+1}) \ud s_{h+1}\biggr]\notag\\
& = \EE\biggl[\frac{\partial r_h}{\partial a_h}\cdot\frac{\ud a_h}{\ud \theta} + \frac{\partial r_h}{\partial s_h}\cdot\frac{\ud s_h}{\ud \theta}\notag\\
&\qquad + \gamma\int_{\mathcal{S}}\Bigl(\nabla_\theta f\bigl(s_{h+1}|s_h, a_h\bigr)\cdot V^\pi(s_{h+1}) + f\bigl(s_{h+1}|s_{h},  a_h\bigr)\cdot \nabla_\theta V^\pi(s_{h+1})\Bigr) \ud s_{h+1}\biggr]\notag\\
&= \EE\biggl[\frac{\partial r_h}{\partial a_h}\cdot\frac{\ud a_h}{\ud \theta} + \frac{\partial r_h}{\partial s_h}\cdot\frac{\ud s_h}{\ud \theta} + \gamma\int_{\mathcal{S}}\Bigl(\nabla_a f(s_{h+1}|s_{h}, a)\cdot\frac{\ud a_h}{\ud \theta}\cdot V^\pi(s_{h+1})\notag\\
&\qquad + \nabla_s f(s_{h+1}|s_h, a_h)\cdot\frac{\ud s_h}{\ud \theta}\cdot V^\pi(s_{h+1}) + f\bigl(s_{h+1}|s_{h},  a_h\bigr)\cdot \nabla_\theta V^\pi(s_{h+1})\Bigr)\ud s_{h+1}\biggr],
\#
where the first equation follows from the Bellman equation and the last two equations hold due to the chain rule. Here, it is worth noting that when $h\geq 1$, both $a_h$ and $s_h$ have dependencies on all previous timesteps. For any $h\geq 1$, we have from the chain rule that $\nabla_\theta r(s_h, a_h) = \partial r_h / \partial a_h\cdot \ud a_h / \ud \theta + \partial r_h / \partial s_h\cdot\ud s_h / \ud \theta$. This differs from the case when $h=0$, e.g., in the deterministic policy gradient theorem \citep{silver2014deterministic}, where we can simply write $\nabla_\theta r(s_h, a_h) = \partial r_h / \partial a_h\cdot \partial a_h / \partial\theta$.

Rearranging terms in \eqref{eq_app_v_int} gives
\#\label{eq_app_rec_v_}
\nabla_\theta V^{\pi_\theta}(s_h)& = \EE\biggl[\nabla_a\biggl(r(s_h, a_h) + \gamma\int_{\mathcal{S}}f(s_{h+1}|s_{h}, a_h)\cdot V^\pi(s_{h+1})\ud s_{h+1}\biggr)\cdot \frac{\ud a_h}{\ud \theta}\notag\\
&\qquad + \nabla_s\biggl(r(s_h, a_h) + \gamma\int_{\mathcal{S}}f(s_{h+1}|s_{h}, a_h)\cdot V^\pi(s_{h+1})\ud s_{h+1}\biggr)\cdot\frac{\ud s_h}{\ud \theta}\notag\\
&\qquad + \gamma\int_{\mathcal{S}}f\bigl(s_{h+1}|s_{h},  a_h\bigr)\cdot \nabla_\theta V^\pi(s_{h+1}) \ud s_{h+1}\biggr]\notag\\ 
&=  \EE\biggl[\frac{\partial Q^{\pi_\theta}}{\partial a_h}\cdot\frac{\ud a_h}{\ud \theta} + \frac{\partial Q^{\pi_\theta}}{\partial s_h}\cdot\frac{\ud s_h}{\ud \theta} + \gamma\int_{\mathcal{S}}f\bigl(s_{h+1}|s_{h},  a_h\bigr)\cdot \nabla_\theta V^\pi(s_{h+1}) \ud s_{h+1}\biggr],
\#
where the last equation holds since $Q^{\pi_\theta}(s_h, a_h) = r(s_h, a_h) + \gamma\int_{\mathcal{S}}f(s_{h+1}|s_{h}, a_h)\cdot V^\pi(s_{h+1})\ud s_{h+1}$.

By recursively applying \eqref{eq_app_rec_v_}, we obtain
\#\label{nabla_v_int}
\nabla_\theta V^{\pi_\theta}(s_h) &=  \EE\biggl[\int_{\mathcal{S}}\sum_{i=h}^\infty\gamma^{i-h}\cdot f\bigl(s_{i+1}|s_i, a_i\bigr)\cdot\Bigl(\frac{\partial Q^{\pi_\theta}}{\partial a_i} \cdot \frac{\ud a_i}{\ud \theta} + \frac{\partial Q^{\pi_\theta}}{\partial s_i}\cdot\frac{\ud s_i}{\ud \theta}\Bigr)\ud s_{i+1}\biggr].
\#

Let $\overline{\sigma}_2(s, a) = (1-\gamma)\cdot\sum_{i=h}^\infty\gamma^{i - h} \cdot\mathbb{P}(s_i = s, a_i=a)$, where $s_0\sim \zeta$, $a_i\sim\pi(\cdot\,|\,s_i)$, and $s_{i+1}\sim f(\cdot\,|\,s_i, a_i)$. By definition we have
\$
\sigma(s, a) &= (1-\gamma)\cdot\sum_{i=0}^{h-1}\gamma^{i} \cdot\mathbb{P}(s_i = s, a_i=a) + \gamma^h\cdot\overline{\sigma}_1(s, a) \\
&= (1-\gamma)\cdot\sum_{i=0}^{h-1}\gamma^{i} \cdot\mathbb{P}(s_i = s, a_i=a) + \gamma^h\cdot\overline{\sigma}_2(s, a).
\$
Therefore we have the equivalence $\overline{\sigma}_1(s, a) = \overline{\sigma}_2(s, a)$.

By taking the expectation over $s_h$ in \eqref{nabla_v_int}, we have the stated result, i.e.,
\$
\EE_{s_h\sim\mathbb{P}(s_h)}\bigl[\nabla_\theta V^{\pi_\theta}(s_h)\bigr] &= \EE_{(s,a)\sim\overline{\sigma}_2}\biggl[\frac{\partial Q^{\pi_\theta}}{\partial a}\cdot\frac{\ud a}{\ud \theta} + \frac{\partial Q^{\pi_\theta}}{\partial s}\cdot\frac{\ud s}{\ud \theta}\biggr]= \EE_{(s,a)\sim\overline{\sigma}_1}\biggl[\frac{\partial Q^{\pi_\theta}}{\partial a}\cdot\frac{\ud a}{\ud \theta} + \frac{\partial Q^{\pi_\theta}}{\partial s}\cdot\frac{\ud s}{\ud \theta}\biggr].
\$
\end{proof}

\begin{lemma}
\label{lemma_estimated_grad}
Recall that the state distribution $\mu_\pi$ where $\hat{s}_{0,n}$ is sampled from is of the form $\mu_\pi(s) = \beta\cdot\nu_\pi(s) + (1 - \beta)\cdot\zeta(s)$. The gradient bias $b_t$ at any iteration $t$ satisfies
\$
b_t&\leq\kappa\bigl(\beta + \kappa\cdot(1 - \beta)\bigr)\cdot\EE_{s_0\sim\nu_\pi,\hat{s}_{0,n}\sim\nu_\pi}\biggl[\bigg\|\nabla_\theta\sum_{i=0}^{h-1}\gamma^i\cdot r(s_{i}, a_{i}) - \nabla_\theta\sum_{i=0}^{h-1}\gamma^i\cdot r(\hat{s}_{i,n}, \hat{a}_{i,n})\bigg\|_2\biggr]\notag\\
&\quad +\bigl(\beta + \kappa\cdot(1 - \beta)\bigr)\gamma^h\cdot\notag\\
&\quad\qquad\EE_{(s_h, a_{h})\sim\overline{\sigma}_1, (\hat{s}_{h,n}, \hat{a}_{h,n})\sim\hat{\sigma}_1}\biggl[\bigg\|\frac{\partial Q^{\pi_\theta}}{\partial a_h}\cdot\frac{\ud a_h}{\ud \theta} + \frac{\partial Q^{\pi_\theta}}{\partial s_h}\cdot\frac{\ud s_h}{\ud \theta} - \nabla_\theta\hat{Q}_t(\hat{s}_{h,n}, \hat{a}_{h,n})\bigg\|_2\biggr].
\$
\end{lemma}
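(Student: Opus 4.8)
The plan is to massage both $\nabla_\theta J(\pi_{\theta_t})$ and $\EE[\hat{\nabla}_\theta J(\pi_{\theta_t})]$ into a common ``length-$h$ reward gradient $+$ step-$h$ tail'' form and then absorb the mismatch between the true visitation and the estimator's rollout distribution into the density ratio $\kappa$. First I would write $\nabla_\theta J(\pi_{\theta_t}) = \EE_{s_0\sim\zeta}[\nabla_\theta V^{\pi_{\theta_t}}(s_0)]$ and apply the $h$-step Bellman unrolling $V^{\pi_\theta}(s_0) = \EE[\sum_{i=0}^{h-1}\gamma^i r(s_i,a_i) + \gamma^h V^{\pi_\theta}(s_h)]$, differentiated through the pathwise representation, so that $\nabla_\theta V^{\pi_\theta}(s_0) = \nabla_\theta\sum_{i=0}^{h-1}\gamma^i r(s_i,a_i) + \gamma^h\nabla_\theta V^{\pi_\theta}(s_h)$ as total derivatives along the true dynamics. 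Taking the expectation, I would invoke Lemma \ref{lemma::v_rec} to collapse the recursive tail into the non-recursive form $\gamma^h\EE[\nabla_\theta V^{\pi_\theta}(s_h)] = \gamma^h\EE_{(s,a)\sim\overline{\sigma}_1}[\partial_a Q^{\pi_\theta}\cdot\ud a/\ud\theta + \partial_s Q^{\pi_\theta}\cdot\ud s/\ud\theta]$ over the step-$h$ (tail) visitation $\overline{\sigma}_1$ --- exactly the quantity the critic gradient has to be compared against. On the estimator side, \eqref{DP_ge}--\eqref{DR_ge} give $\EE[\hat{\nabla}_\theta J(\pi_{\theta_t})] = \EE_{\hat{s}_0\sim\mu_\pi}[\nabla_\theta\sum_{i=0}^{h-1}\gamma^i r(\hat{s}_{i},\hat{a}_{i}) + \gamma^h\nabla_\theta\hat{Q}_t(\hat{s}_{h},\hat{a}_{h})]$ along a model rollout $\hat{s}_{i+1} = \hat{f}_{\psi_t}(\hat{s}_i,\hat{a}_i,\xi)$.

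\paragraph{Reducing to the two stated terms.}
Next I would exploit the mixture structure $\mu_\pi = \beta\cdot\nu_\pi + (1-\beta)\cdot\zeta$ to split $\EE[\hat{\nabla}_\theta J]$ into a $\beta$-weighted rollout started from $\nu_\pi$ and a $(1-\beta)$-weighted rollout started from $\zeta$, split $\nabla_\theta J$ trivially with the same weights, and apply the triangle inequality. Within each weighted piece, a further triangle inequality separates the length-$h$ reward gradients from the step-$h$ tail, leaving a reward discrepancy of the form $\|\nabla_\theta\sum_{i=0}^{h-1}\gamma^i r(s_i,a_i) - \nabla_\theta\sum_{i=0}^{h-1}\gamma^i r(\hat{s}_{i},\hat{a}_{i})\|_2$ and a tail discrepancy of the form $\|\partial_a Q^{\pi_\theta}\cdot\ud a_h/\ud\theta + \partial_s Q^{\pi_\theta}\cdot\ud s_h/\ud\theta - \nabla_\theta\hat{Q}_t(\hat{s}_{h},\hat{a}_{h})\|_2$. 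Wherever the true side still sits on a $\zeta$-initialized trajectory --- the whole $(1-\beta)$ piece, and the reward part of the $\beta$ piece, since the first-$h$-step visitation carried by $\nabla_\theta J$ lives on $\zeta$-initialized paths --- I would pass to a $\nu_\pi$-expectation by inserting the Radon--Nikodym derivative $\ud\zeta/\ud\nu_\pi$ and invoking Cauchy--Schwarz against its definition $\kappa = \sup_\pi\EE_{\nu_\pi}[(\ud\zeta/\ud\nu_\pi)^2]^{1/2}$; the $\beta$-weighted $\nu_\pi$-rollout contributes no density factor. Accounting for the weights, the $L^2(\nu_\pi)$ triangle inequality $\|\beta + (1-\beta)\,\ud\zeta/\ud\nu_\pi\|_{L^2(\nu_\pi)}\leq \beta + \kappa(1-\beta) = \kappa'$ produces the factor $\kappa'$ on the tail term, and one further reweighting (to realign the true reward path to $\nu_\pi$) produces the factor $\kappa\kappa'$ on the reward term, matching the claimed bound and identifying $\overline{\sigma}_1,\hat{\sigma}_1$ as the step-$h$ marginals of the true and model $\nu_\pi$-initialized rollouts.

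\paragraph{The main obstacle.}
The Bellman unrolling, the triangle inequalities, and the Cauchy--Schwarz change of measure are all routine; the crux is reconciling the recursive true value gradient with the non-recursive critic gradient. The tail of $\nabla_\theta J$ is genuinely an infinite sum over all timesteps $\geq h$, and only after taking the expectation (Lemma \ref{lemma::v_rec}) does it collapse to a single integral of $\partial_a Q^{\pi_\theta}\cdot\ud a/\ud\theta + \partial_s Q^{\pi_\theta}\cdot\ud s/\ud\theta$ against $\overline{\sigma}_1$; hence the ``reward part $+$ tail part'' decomposition must be carried out at the level of expectations rather than pathwise, and one must track carefully which sampling distribution each branch of the triangle inequality inherits so that the residuals are exactly the reward and tail terms appearing in the statement, with the right powers of $\kappa$. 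A lesser subtlety is that Cauchy--Schwarz nominally delivers an $L^2(\nu_\pi)$ norm of the per-path discrepancy, which is written here as the $L^1(\nu_\pi)$ expectation $\EE_{\nu_\pi}[\|\cdot\|_2]$ --- harmless downstream, since in the proof of Proposition \ref{prop_grad_bias} every such term is in turn dominated by deterministic Lipschitz quantities --- and that the coupling of the true and model rollouts is the identity coupling for RP-DR and the independent fixed-noise coupling for RP-DP, with the estimate insensitive to this choice.
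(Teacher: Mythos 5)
Your proposal follows essentially the same route as the paper's own proof: expand the true gradient via the $h$-step Bellman unrolling, collapse the recursive tail with Lemma \ref{lemma::v_rec} into the $\overline{\sigma}_1$-expectation of $\partial_a Q^{\pi_\theta}\cdot\ud a/\ud\theta + \partial_s Q^{\pi_\theta}\cdot\ud s/\ud\theta$, split the estimator's expectation over $\mu_\pi$ by the law of total expectation into the $\beta$- and $(1-\beta)$-weighted pieces, and apply triangle inequalities plus Radon--Nikodym reweighting to land the factor $\kappa'$ on the tail term and $\kappa\kappa'$ on the reward term. Even the subtlety you flag (the change of measure nominally yielding an $L^2(\nu_\pi)$ rather than $L^1(\nu_\pi)$ quantity) mirrors the paper's own treatment, so the argument matches in both structure and conclusions.
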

\begin{proof}
To begin, we decompose the gradient bias by
\#\label{b_rhs}
b_t &= \Big\| \nabla_{\theta}J(\pi_{\theta_t}) - \EE\bigl[\hat{\nabla}_{\theta} J(\pi_{\theta_t})\bigr] \Big\|_2\notag\\
&= \Big\|\EE\bigl[\nabla_{\theta}J(\pi_{\theta_t}) - \hat{\nabla}_{\theta} J(\pi_{\theta_t})\bigr] \Big\|_2\\
&= \bigg\|\EE_{s_0\sim\zeta, \hat{s}_{0,n}\sim\mu_\pi}\biggl[\nabla_\theta\sum_{i=0}^{h-1}\gamma^i\cdot r(s_{i}, a_{i}) + \gamma^h\cdot\nabla_\theta V^{\pi_\theta}(s_{h}) - \nabla_\theta\sum_{i=0}^{h-1}\gamma^i\cdot r(\hat{s}_{i,n}, \hat{a}_{i,n})\notag\\
&\quad\qquad - \gamma^h\cdot\nabla_\theta\hat{V}_t(\hat{s}_{h,n})\biggr]\bigg\|_2,\notag
\#
where we note that $s_0$ and $\hat{s}_{0,n}$ are sampled from $\zeta$ and $\mu_\pi$ following the definition of the RL objective and the form of gradient estimator, respectively.

For $\mu_\pi(s) = \beta\cdot\nu_\pi(s) + (1 - \beta)\cdot\zeta(s)$, let $Z$ be the random variable satisfying $\mathbb{P}(Z=0) = \beta$ and $\mathbb{P}(Z=1) = 1-\beta$, i.e., the event $Z=0$ and $Z = 1$ corresponds to that the state $s$ is sampled from $\nu_\pi$ and $\zeta$, respectively. For any random variable $Y$, following the law of total expectation, we know that
\#\label{te_law}
\EE_{\mu_\pi}[Y] &= \EE[\EE[Y|Z]] = \EE[Y|Z=0]\mathbb{P}(Z=0) + \EE[Y|Z=1]\mathbb{P}(Z=1) \notag\\
&= \beta\EE[Y|Z=0] + (1-\beta)\EE[Y|Z=1]\notag\\
&= \beta\EE_{\nu_\pi}[Y] + (1-\beta)\EE_{\zeta}[Y].
\#

Therefore, we have from \eqref{b_rhs} that
\#\label{eq_app_b_initt}
b_t &\leq \EE_{\hat{s}_{0,n}\sim\mu_\pi}\biggl[\bigg\|\EE_{s_0\sim\zeta}\biggl[\nabla_\theta\sum_{i=0}^{h-1}\gamma^i\cdot r(s_{i}, a_{i}) + \gamma^h\cdot\nabla_\theta V^{\pi_\theta}(s_{h}) - \nabla_\theta\sum_{i=0}^{h-1}\gamma^i\cdot r(\hat{s}_{i,n}, \hat{a}_{i,n}) \notag\\
&\quad\qquad- \gamma^h\cdot\nabla_\theta\hat{V}_t(\hat{s}_{h,n})\biggr]\bigg\|_2\biggr]\notag\\
&\leq \beta\EE_{\hat{s}_{0,n}\sim\nu_\pi}\biggl[\bigg\|\EE_{s_0\sim\zeta}\biggl[\nabla_\theta\sum_{i=0}^{h-1}\gamma^i\cdot r(s_{i}, a_{i}) + \gamma^h\cdot\nabla_\theta V^{\pi_\theta}(s_{h}) - \nabla_\theta\sum_{i=0}^{h-1}\gamma^i\cdot r(\hat{s}_{i,n}, \hat{a}_{i,n}) \notag\\
&\quad\qquad - \gamma^h\cdot\nabla_\theta\hat{V}_t(\hat{s}_{h,n})\biggr]\bigg\|_2\biggr] + (1-\beta)\EE_{\hat{s}_{0,n}\sim\zeta}\biggl[\bigg\|\EE_{s_0\sim\zeta}\biggl[\nabla_\theta\sum_{i=0}^{h-1}\gamma^i\cdot r(s_{i}, a_{i}) + \gamma^h\cdot\nabla_\theta V^{\pi_\theta}(s_{h}) \notag\\
&\quad\qquad - \nabla_\theta\sum_{i=0}^{h-1}\gamma^i\cdot r(\hat{s}_{i,n}, \hat{a}_{i,n}) - \gamma^h\cdot\nabla_\theta\hat{V}_t(\hat{s}_{h,n})\biggr]\bigg\|_2\biggr],
\#
where the first inequality holds since $\|\EE[\cdot]\|_2 \leq \EE[\|\cdot\|_2]$ and the second inequality holds due to \eqref{te_law}.

Using the result from Lemma \ref{lemma::v_rec}, we know that
\$
&\EE_{s_0\sim\zeta}\biggl[\nabla_\theta\sum_{i=0}^{h-1}\gamma^i\cdot r(s_{i}, a_{i}) + \gamma^h\cdot\nabla_\theta V^{\pi_\theta}(s_{h}) - \nabla_\theta\sum_{i=0}^{h-1}\gamma^i\cdot r(\hat{s}_{i,n}, \hat{a}_{i,n}) - \gamma^h\cdot\nabla_\theta\hat{V}_t(\hat{s}_{h,n})\biggr]\\
&\quad = \underbrace{\EE_{s_0\sim\zeta}\biggl[\nabla_\theta\sum_{i=0}^{h-1}\gamma^i\cdot r(s_{i}, a_{i}) - \nabla_\theta\sum_{i=0}^{h-1}\gamma^i\cdot r(\hat{s}_{i,n}, \hat{a}_{i,n})\biggr]}_{B_r} \notag\\
&\qquad + \underbrace{\gamma^h\EE_{(s_h, a_{h})\sim\overline{\sigma}_1}\biggl[\frac{\partial Q^{\pi_\theta}}{\partial a_h}\cdot\frac{\ud a_h}{\ud \theta} + \frac{\partial Q^{\pi_\theta}}{\partial s_h}\cdot\frac{\ud s_h}{\ud \theta} - \nabla_\theta\hat{V}_t(\hat{s}_{h,n})\biggr]}_{B_v}.
\$

Here, the shorthand notation $B_r$ denotes the bias introduced by the $h$-step model expansion and $B_v$ denotes the bias introduced by using a critic for tail estimation. Then we may rewrite \eqref{eq_app_b_initt} as
\#\label{b_rhss}
b_t &\leq \beta\cdot\EE_{\hat{s}_{0,n}\sim\nu_\pi}\bigl[\big\|B_r + B_v\big\|_2\bigr] + (1-\beta)\cdot\EE_{\hat{s}_{0,n}\sim\zeta}\bigl[\big\|B_r + B_v\big\|_2\bigr]\notag\\
&\leq \Bigl(\beta\cdot\EE_{\hat{s}_{0,n}\sim\nu_\pi}\bigl[\big\|B_r\big\|_2\bigr] + (1-\beta)\cdot\EE_{\hat{s}_{0,n}\sim\zeta}\bigl[\big\|B_r\big\|_2\bigr]\Bigr) \notag\\
&\qquad +  \Bigl(\beta\cdot\EE_{\hat{s}_{0,n}\sim\nu_\pi}\bigl[\big\|B_v\big\|_2\bigr] + (1-\beta)\cdot\EE_{\hat{s}_{0,n}\sim\zeta}\bigl[\big\|B_v\big\|_2\bigr]\Bigr).
\#

For the first term on the right-hand side of \eqref{b_rhss}, we have
\#\label{eq_app_brh1}
&\beta\cdot\EE_{\hat{s}_{0,n}\sim\nu_\pi}\bigl[\big\|B_r\big\|_2\bigr] + (1-\beta)\cdot\EE_{\hat{s}_{0,n}\sim\zeta}\bigl[\big\|B_r\big\|_2\bigr] \notag\\
&\quad= \beta\cdot\EE_{\hat{s}_{0,n}\sim\nu_\pi}\biggl[\bigg\|\EE_{s_0\sim\zeta}\Bigl[\nabla_\theta\sum_{i=0}^{h-1}\gamma^i\cdot r(s_{i}, a_{i}) - \nabla_\theta\sum_{i=0}^{h-1}\gamma^i\cdot r(\hat{s}_{i,n}, \hat{a}_{i,n})\Bigr]\bigg\|_2\biggr] + (1 - \beta)\cdot\notag\\
&\quad\qquad\EE_{\hat{s}_{0,n}\sim\nu_\pi}\biggl[\bigg\|\EE_{s_0\sim\zeta}\Bigl[\nabla_\theta\sum_{i=0}^{h-1}\gamma^i\cdot r(s_{i}, a_{i}) - \nabla_\theta\sum_{i=0}^{h-1}\gamma^i\cdot r(\hat{s}_{i,n}, \hat{a}_{i,n})\Bigr]\bigg\|_2\biggr]\cdot \biggl\{\EE_{\nu_\pi}\biggl[\Bigl(\frac{\ud\zeta}{\ud\nu_\pi}(s)\Bigr)^2\biggr]\biggr\}^{1/2}\notag\\
&\quad\leq \bigl(\beta + \kappa\cdot(1 - \beta)\bigr)\cdot\EE_{\hat{s}_{0,n}\sim\nu_\pi}\biggl[\bigg\|\EE_{s_0\sim\zeta}\Bigl[\nabla_\theta\sum_{i=0}^{h-1}\gamma^i\cdot r(s_{i}, a_{i}) - \nabla_\theta\sum_{i=0}^{h-1}\gamma^i\cdot r(\hat{s}_{i,n}, \hat{a}_{i,n})\Bigr]\bigg\|_2\biggr]\notag\\
&\quad\leq \kappa\bigl(\beta + \kappa\cdot(1 - \beta)\bigr)\cdot\EE_{s_0\sim\nu_\pi,\hat{s}_{0,n}\sim\nu_\pi}\biggl[\bigg\|\nabla_\theta\sum_{i=0}^{h-1}\gamma^i\cdot r(s_{i}, a_{i}) - \nabla_\theta\sum_{i=0}^{h-1}\gamma^i\cdot r(\hat{s}_{i,n}, \hat{a}_{i,n})\bigg\|_2\biggr],
\#
where the first and second inequalities follow from the definition of $\kappa$ in Proposition \ref{prop_grad_bias}.

Similarly, for the second term on the right-hand side of \eqref{b_rhss}, we have
\#\label{eq_app_brh2}
&\beta\cdot\EE_{\hat{s}_{0,n}\sim\nu_\pi}\bigl[\big\|B_v\big\|_2\bigr] + (1-\beta)\cdot\EE_{\hat{s}_{0,n}\sim\zeta}\bigl[\big\|B_v\big\|_2\bigr]\notag\\
&\quad = \beta\cdot\EE_{\hat{s}_{0,n}\sim\nu_\pi}\biggl[\gamma^h\cdot\bigg\|\EE_{(s_h, a_{h})\sim\overline{\sigma}_1}\biggl[\frac{\partial Q^{\pi_\theta}}{\partial a_h}\cdot\frac{\ud a_h}{\ud \theta} + \frac{\partial Q^{\pi_\theta}}{\partial s_h}\cdot\frac{\ud s_h}{\ud \theta} - \nabla_\theta\hat{V}_t(\hat{s}_{h,n})\biggr]\bigg\|_2\biggr]\notag\\
&\quad\qquad +(1 - \beta)\cdot \EE_{\hat{s}_{0,n}\sim\zeta}\biggl[\gamma^h\cdot\bigg\|\EE_{(s_h, a_{h})\sim\overline{\sigma}_1}\biggl[\frac{\partial Q^{\pi_\theta}}{\partial a_h}\cdot\frac{\ud a_h}{\ud \theta} + \frac{\partial Q^{\pi_\theta}}{\partial s_h}\cdot\frac{\ud s_h}{\ud \theta} - \nabla_\theta\hat{V}_t(\hat{s}_{h,n})\biggr]\bigg\|_2\biggr]\notag\\
&\quad\leq \bigl(\beta + \kappa\cdot(1 - \beta)\bigr)\gamma^h\cdot\EE_{(s_h, a_{h})\sim\overline{\sigma}_1,\hat{s}_{0,n}\sim\nu_\pi}\biggl[\bigg\|\frac{\partial Q^{\pi_\theta}}{\partial a_h}\cdot\frac{\ud a_h}{\ud \theta} + \frac{\partial Q^{\pi_\theta}}{\partial s_h}\cdot\frac{\ud s_h}{\ud \theta} - \nabla_\theta\hat{V}_t(\hat{s}_{h,n})\bigg\|_2\biggr]\notag\\
&\quad = \bigl(\beta + \kappa\cdot(1 - \beta)\bigr)\gamma^h\cdot\notag\\
&\quad\qquad\EE_{(s_h, a_{h})\sim\overline{\sigma}_1, (\hat{s}_{h,n}, \hat{a}_{h,n})\sim\hat{\sigma}_1}\biggl[\bigg\|\frac{\partial Q^{\pi_\theta}}{\partial a_h}\cdot\frac{\ud a_h}{\ud \theta} + \frac{\partial Q^{\pi_\theta}}{\partial s_h}\cdot\frac{\ud s_h}{\ud \theta} - \nabla_\theta\hat{Q}_t(\hat{s}_{h,n}, \hat{a}_{h,n})\bigg\|_2\biggr].
\#
Plugging \eqref{eq_app_brh1} and \eqref{eq_app_brh2} into \eqref{b_rhss} completes the proof.
\end{proof}

\begin{lemma}[Lipschitz Value Function \citep{rachelson2010locality} Theorem 1]
\label{lemma_app_lip_value}
Under Assumption \ref{lip_assumption}, for $\gamma L_f(1 + L_\pi) <1$, then the state-action value function is $L_Q$-Lipschitz continuous, such that for any policy $\pi_\theta$, state $s_1, s_2\in\mathcal{S}$ and action $a_1, a_2\in\mathcal{A}$, $\big|Q^{\pi_\theta}(s_1, a_1) - Q^{\pi_\theta}(s_2, a_2)\big|\leq L_{Q} \cdot\big\|(s_1 - s_2, a_1 - a_2)\big\|_2$, and
\$
L_Q=L_r / (1 - \gamma L_f(1 + L_\pi)).
\$
\end{lemma}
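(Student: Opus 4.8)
The plan is to realize $Q^{\pi_\theta}$ as the unique fixed point of the Bellman operator $\mathcal{T}^{\pi_\theta}$ and to propagate a Lipschitz modulus through that operator, so the fixed point inherits the limiting modulus. Recall that $Q^{\pi_\theta}$ is the unique solution of $Q=\mathcal{T}^{\pi_\theta}Q$, and, by the reparameterizations $s'=f(s,a,\xi^*)$ and $a'=\pi_\theta(s',\varsigma)$ permitted by Assumption~\ref{continuous},
\[
\mathcal{T}^{\pi_\theta}Q(s,a) = (1-\gamma)\cdot r(s,a) + \gamma\cdot\EE_{\xi^*,\varsigma}\bigl[\,Q\bigl(f(s,a,\xi^*),\,\pi_\theta(f(s,a,\xi^*),\varsigma)\bigr)\,\bigr].
\]
Since $|r|$ is bounded, $\mathcal{T}^{\pi_\theta}$ is a $\gamma$-contraction in $\|\cdot\|_\infty$ on bounded functions, so starting from $Q_0\equiv 0$ the iterates $Q_{k+1}=\mathcal{T}^{\pi_\theta}Q_k$ converge uniformly, hence pointwise, to $Q^{\pi_\theta}$.

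The heart of the argument is to see how the Lipschitz modulus changes under one application of $\mathcal{T}^{\pi_\theta}$. Suppose $Q$ is $\ell$-Lipschitz in $(s,a)$; fix two pairs $(s_1,a_1)$ and $(s_2,a_2)$ and \emph{couple} the two rollouts by feeding the same noise $\xi^*,\varsigma$ to both. Writing $s_i'=f(s_i,a_i,\xi^*)$ and using $\|(x,y)\|_2\le\|x\|_2+\|y\|_2$ together with Assumption~\ref{lip_assumption},
\[
\bigl\|\bigl(s_1'-s_2',\,\pi_\theta(s_1',\varsigma)-\pi_\theta(s_2',\varsigma)\bigr)\bigr\|_2 \le (1+L_\pi)\,\|s_1'-s_2'\|_2 \le L_f(1+L_\pi)\,\|(s_1-s_2,a_1-a_2)\|_2 .
\]
Combining this with the $L_r$-Lipschitzness of $r$ and the bound $|\EE[X]-\EE[Y]|\le\EE[|X-Y|]$ shows $\mathcal{T}^{\pi_\theta}Q$ is $\ell'$-Lipschitz with $\ell'=(1-\gamma)L_r+\gamma L_f(1+L_\pi)\,\ell$.

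It remains to iterate this modulus recursion and pass to the limit. Setting $\ell_0=0$ and $\ell_{k+1}=(1-\gamma)L_r+\gamma L_f(1+L_\pi)\ell_k$, the hypothesis $\gamma L_f(1+L_\pi)<1$ makes the affine map $\ell\mapsto(1-\gamma)L_r+\gamma L_f(1+L_\pi)\ell$ a contraction, so $\ell_k$ increases to $\ell^\star:=(1-\gamma)L_r/(1-\gamma L_f(1+L_\pi))\le L_r/(1-\gamma L_f(1+L_\pi))=L_Q$. Each iterate $Q_k$ is $\ell_k$-Lipschitz and $Q_k\to Q^{\pi_\theta}$ pointwise, so for any two pairs
\[
\bigl|Q^{\pi_\theta}(s_1,a_1)-Q^{\pi_\theta}(s_2,a_2)\bigr| = \lim_{k\to\infty}\bigl|Q_k(s_1,a_1)-Q_k(s_2,a_2)\bigr| \le \lim_{k\to\infty}\ell_k\,\|(s_1-s_2,a_1-a_2)\|_2 \le L_Q\,\|(s_1-s_2,a_1-a_2)\|_2 ,
\]
which is the claimed Lipschitz bound, in agreement with \cite{rachelson2010locality}.

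The step I expect to be the main obstacle is the modulus-propagation step above. Without the reparameterization of Assumption~\ref{continuous}, comparing $\EE_{s'\sim f(\cdot\,|\,s_1,a_1)}[\,\cdot\,]$ with $\EE_{s'\sim f(\cdot\,|\,s_2,a_2)}[\,\cdot\,]$ is not elementary and would require a Wasserstein/transport estimate on the transition kernel; feeding common random numbers $\xi^*,\varsigma$ to both rollouts is precisely what collapses the comparison to a deterministic composition bound inside a single expectation. A minor bookkeeping point is the $(1-\gamma)$ normalization in this paper's definition of $Q^{\pi_\theta}$, which only tightens the estimate and may be discarded to match the stated constant $L_Q$.
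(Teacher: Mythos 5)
Your proof is correct; note, though, that the paper itself contains no proof of this lemma --- it is imported verbatim from \citep{rachelson2010locality} (Theorem 1), so the only comparison available is with that original argument. Relative to it, your route differs in one substantive way: Rachelson and Lagoudakis work with Lipschitz MDPs whose transition kernel is Lipschitz in the Kantorovich/Wasserstein sense, and the propagation of the Lipschitz modulus through the Bellman operator there is done via a transport estimate comparing the two next-state distributions. You instead exploit the stronger pathwise form of Assumption \ref{lip_assumption} in this paper --- Lipschitzness of $f(s,a,\xi^*)$ for each fixed noise $\xi^*$ --- which lets you couple the two rollouts with common random numbers $(\xi^*,\varsigma)$ and collapse the kernel comparison to a deterministic composition bound, exactly the simplification you flag as the key step. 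The remaining structure (fixed point of the $\gamma$-contraction $\mathcal{T}^{\pi_\theta}$, the modulus recursion $\ell_{k+1}=(1-\gamma)L_r+\gamma L_f(1+L_\pi)\ell_k$, and transferring the Lipschitz property to the pointwise limit) mirrors the induction-on-value-iteration argument of the cited theorem and is sound, and your observation that the $(1-\gamma)$ normalization of $Q^{\pi_\theta}$ used in this paper actually yields the tighter constant $(1-\gamma)L_r/(1-\gamma L_f(1+L_\pi))\le L_Q$ is correct, so the stated bound holds a fortiori. One minor caveat: your appeal to bounded rewards for uniform convergence of the iterates relies on the boundedness assumption stated in Proposition \ref{prop1} rather than in Assumption \ref{lip_assumption}; this is harmless in context but worth making explicit.
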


\subsection{Proof of Proposition \ref{prop::optimal_h}}
\begin{proof}
When $\gamma\approx 1$, we have
\$
\frac{1 - \gamma^h}{1 - \gamma} = \sum_{i=0}^{h-1}\gamma^i \approx h,\quad \frac{\gamma^h}{1 - \gamma} = \frac{1}{1 - \gamma} - \frac{1 - \gamma^h}{1 - \gamma} \approx \frac{1}{1 - \gamma} - h.
\$

We denote by $H= 1 / (1 - \gamma) = \sum_{i=0}^\infty \gamma^i$ the effective task horizon.

To find the optimal unroll length $h^*$ that minimizes the upper bound of the convergence, we define $g(h)$ as follows,
\$
g(h) = c\cdot(2\delta\cdot b_{t}' +  \frac{\eta}{2}\cdot v_{t}'^2) + b_t'^2 + v_t'^2.
\$
Here, $v_t'^2$ and $b_t'$ are the leading terms in the variance, bias bound (i.e., \eqref{eq::v_t_bound_DP} and \eqref{eq::b_t_bound_DP}) when $L_f$, $L_{\hat{f}}$, and $L_\pi$ are less than or equal to $1$. Formally, $v_t' = h^3$ and $b_t' =  h^3 \epsilon_{f,t} + h(H-h)^2 \epsilon_{v,t}$. We consider the terms that are only dependent on $h$, $H$, $\epsilon_{f,t}$, and $\epsilon_{v,t}$ to simplify the analysis and determine the order of $h^*$.

Our first problem is to find the optimal model unroll $h'^*$ that minimizes $g(h)$. We notice that $g(h)$ increases monotonically with respect to $b_t'$ and $v_t'$ when they are non-negative. This further simplifies the problem to find 
\#\label{sim_obj}
h'^* = \argmin_h b_t' + c' v_t' = \argmin_h\underbrace{h^3(\epsilon_{f,t} + c') + h(H-h)^2 \epsilon_{v,t}}_{g_1(h)},
\#
where $c'$ is some constant that does not affect the order of $h'^*$. 

By taking the derivative of the right-hand side of \eqref{sim_obj} with respect to $h$ and setting it to zero, we obtain
\#\label{qua_equ}
\frac{\partial}{\partial h}g_1(h) = 3h^2\cdot(\epsilon_{f,t} + c') + (3h^2 -4H h + H^2)\cdot\epsilon_{v,t} = 0.
\#

Solve the above quadratic equation with respect to $h$, we have the two non-negative roots $h'^*_1$ and $h'^*_2$ as follows,
\$
h'^*_1 = \frac{4H\epsilon_{v,t} + \sqrt{(4H\epsilon_{v,t})^2 - 12c_1\epsilon_{v,t} H^2}}{6c_1},\quad h'^*_2 = \frac{4H\epsilon_{v,t} - \sqrt{(4H\epsilon_{v,t})^2 - 12c_1 \epsilon_{v,t} H^2}}{6c_1},
\$
where we define $c_1 = \epsilon_{f,t} + \epsilon_{v,t} + c'$. 

Now we study the resulting two cases. If $(4H\epsilon_{v,t})^2 - 12c_1\epsilon_{v,t} H^2 \geq 0$, we have
\$
h'^* = h'^*_1 = O\bigl(\epsilon_{v,t} / (\epsilon_{f,t} + \epsilon_{v,t})\cdot H\bigr).
\$ 

We can verify that $h'^*_1$ is indeed the minimum by calculating the second-order derivative at $h'^*_1$ as follows,
\$
\frac{\partial^2 g_1(h'^*_1)}{\partial h^2} &= \frac{4H\epsilon_{v,t} + \sqrt{(4H\epsilon_{v,t})^2 - 4c_1\cdot H^2}}{6c_1}*6(\epsilon_{f,t} + \epsilon_{v,t} + c') - 4H\epsilon_{v,t} \notag\\
&= \sqrt{(4H\epsilon_{v,t})^2 - 4c_1\cdot H^2} >0.
\$ 

The other case is $(4H\epsilon_{v,t})^2 - 12c_1\epsilon_{v,t} H^2 < 0$. When this happens, \eqref{qua_equ} does not have a real solution $h'^*$ and we set $h^*$ to $0$. This concludes the proof of Proposition \ref{prop::optimal_h}.
\end{proof}

\subsection{Proof of Corollary \ref{coro_con}}
\begin{proof}
We let the learning rate $\eta=1/\sqrt{T}$. Then for $T \geq 4L^2$, we have $c=(\eta - L\eta^2)^{-1}\leq 2\sqrt{T}$ and $L\eta\leq 1/2$. By setting $N = O(\sqrt{T})$, we obtain
\$
\min_{t\in[T]}\EE\Bigl[\big\|\nabla_{\theta}J(\pi_{\theta_t})\big\|_2^2\Bigr] &\leq \frac{4}{T}\cdot \biggl(\sum_{t=0}^{T-1} c\cdot(2\delta\cdot b_{t} +  \frac{\eta}{2}\cdot v_{t}) + b_t^2 + v_t\biggr) + \frac{4c}{T}\cdot\EE\big[J(\pi_{\theta_T}) - J(\pi_{\theta_1})\big]\notag\\
&\leq \frac{4}{T}\biggl(\sum_{t=0}^{T-1}4\sqrt{T}\delta\cdot b_t + b_t^2 + 2v_t\biggr) + \frac{8}{\sqrt{T}}\cdot\EE\big[J(\pi_{\theta_T}) - J(\pi_{\theta_1})\big]\notag\\
&\leq \frac{4}{T}\biggl(\sum_{t=0}^{T-1}4\sqrt{T}\delta\cdot b_t + b_t^2\biggr) + O\bigl(1 / \sqrt{T}\bigr)\notag\\
&\leq \frac{16\delta}{\sqrt{T}}\varepsilon(T)+ \frac{4}{T}\varepsilon^2(T) + O\bigl(1 / \sqrt{T}\bigr).
\$
This concludes the proof.
\end{proof}

\section{Experimental Details}
\label{more_exp}
\subsection{Implementations and Comparisons with More RL Baselines}
\label{compare_rp_lr}
For the model-based baseline Model-Based Policy Optimization (MBPO) \citep{janner2019trust}, we use the implementation in the Mbrl-lib \citep{pineda2021mbrl}. For all other model-free baselines, we use the implementations in Tianshou \citep{weng2021highly} that have state-of-the-art results. 

We observe that the RP-DP has competitive performance in all the evaluation tasks compared to the popular baselines, suggesting the importance of studying model-based RP PGMs. In experiments, we implement RP-DR as the on-policy SVG(1) \citep{heess2015learning}. We observe that the training can be unstable when using the off-policy SVG implementation, which requires a carefully chosen policy update rate as well as a proper size of the experience replay buffer. This is because when the learning rate is large, the magnitude of the inferred policy noise (from the previous data samples in the experience replay) can be huge. Implementing an on-policy version of RP-DR can avoid such an issue, following \cite{heess2015learning}. This, however, can degrade the performance of RP-DR compared to the off-policy RP-DP algorithm in several tasks. We conjecture that implementing the off-policy version of RP-DR can boost its performance, which requires techniques to stabilize training and we leave it as future work. For RP-DP, we implement it as Model-Augmented Actor-Critic (MAAC) \citep{clavera2020model} with entropy regularization \citep{haarnoja2018soft}, as suggested by \cite{amos2021model}. RP(0) represents setting $h=0$ in the RP PGM formulas \citep{amos2021model}, which is a model-free algorithm that is a stochastic counterpart of deterministic policy gradients.

For model-free baselines, we compare with Likelihood Ratio (LR) policy gradient methods (c.f. \eqref{lr_form}), including REINFORCE \citep{sutton1999policy}, Natural Policy Gradient (NPG) \citep{kakade2001natural}, Advantage Actor Critic (A2C), Actor Critic using Kronecker-Factored Trust Region (ACKTR) \citep{wu2017scalable}, and Proximal Policy Optimization (PPO) \citep{schulman2017proximal}. We also evaluate algorithms that are built upon DDPG \citep{lillicrap2015continuous}, including Soft Actor-Critic (SAC) \citep{haarnoja2018soft} and Twin Delayed Deep Deterministic policy gradient (TD3) \citep{fujimoto2018addressing}.

\subsection{Implementation of Spectral Normalization}
\label{app_sn}
In experiments, we use Multilayer Perceptrons (MLPs) for the critic, policy, and model. Besides, we adopt Gaussian dynamical models and policies as the source of stochasticity. To test the benefit of smooth function approximations in model-based RP policy gradient algorithms, spectral normalization is applied to all layers of the policy MLP and all except the final layers of the model MLP. The number of layers for the policy and the dynamics model is $4$ and $5$, respectively. 

Our code is based on PyTorch \citep{paszke2019pytorch}, which has an out-of-the-shelf implementation of spectral normalization. Thus, applying SN to the MLP is pretty simple and no additional lines of code are needed. Specifically, we only need to import and apply SN to each layer:
\begin{lstlisting}
from torch.nn.utils.parametrizations import spectral_norm
layer = [spectral_norm(nn.Linear(in_dim, hidden_dim)), nn.ReLU()]
\end{lstlisting}

\subsection{Ablation on Spectral Normalization}
In this section, we conduct ablation studies on the spectral normalization applied to model-based RP PGMs. Specifically, we aim to answer the following two questions: (1) What are the effects of SN when applied to different NN components of model-based RP PGMs? (2) Does SN improve other MBRL algorithms by smoothing the models?

\paragraph{What are the effects of SN when applied to different NN components of model-based RP PGMs?} We study the following NN components that spectral normalization is applied to: both the model and the policy (default setting as suggested by our theory); only the model; only the policy; no SN is applied (vanilla setting). The results are shown in Figure \ref{fig_abl_sn_func}.

\begin{figure}[H]
\centering
\subfigure{
    \begin{minipage}[t]{0.45\linewidth}
        \centering
        \includegraphics[width=0.8\textwidth]{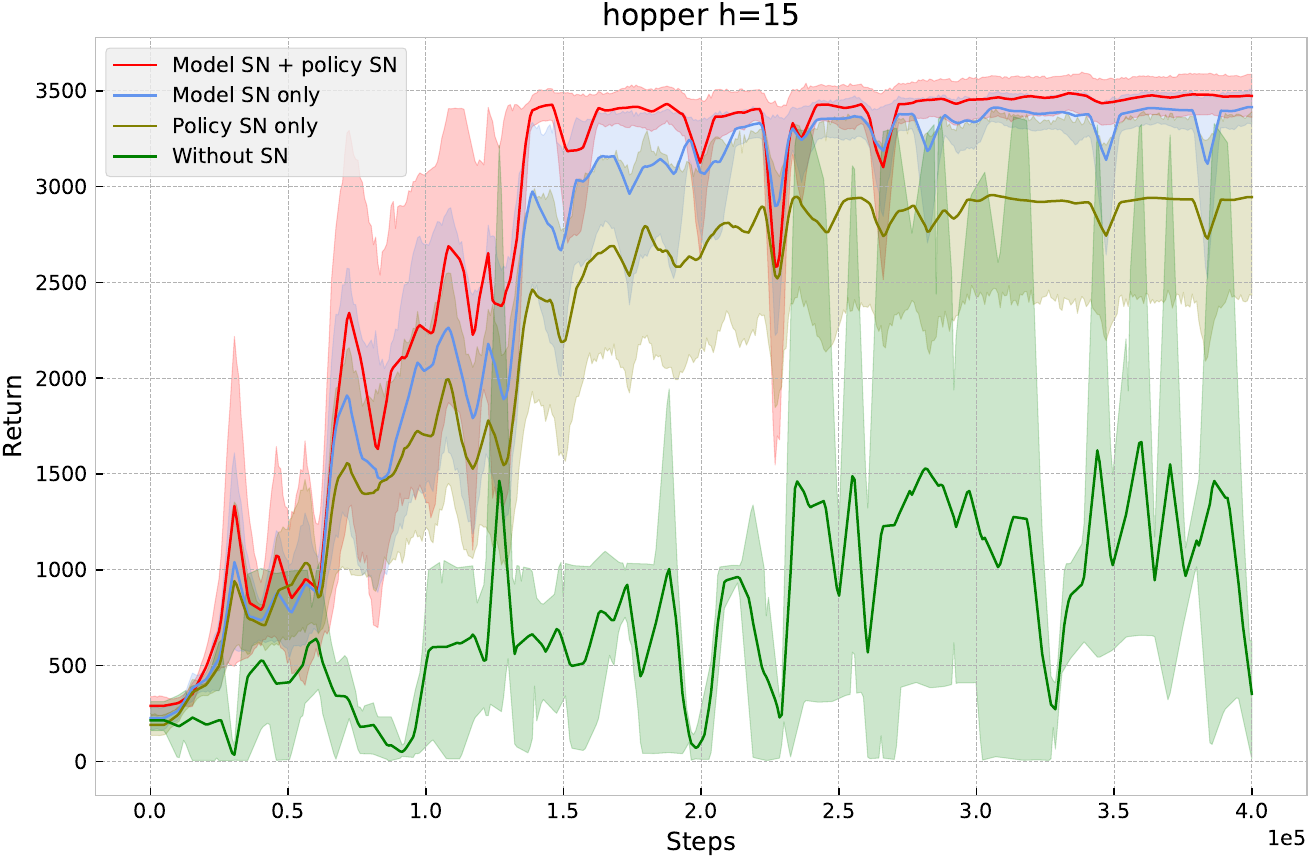}\\
    \end{minipage}
}
\subfigure{
    \begin{minipage}[t]{0.45\linewidth}
        \centering
        \includegraphics[width=0.8\textwidth]{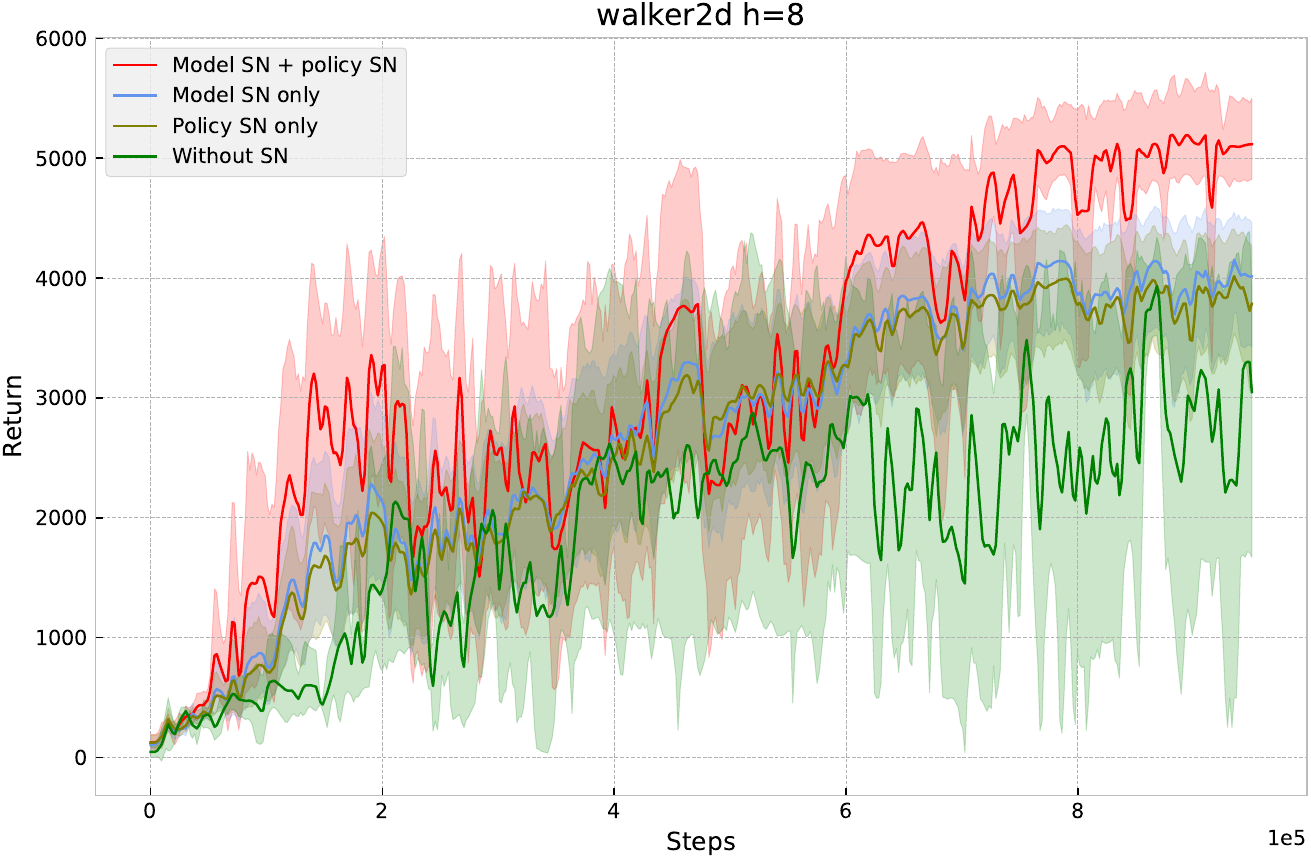}\\
    \end{minipage}
}
\caption{Ablation on the effects of spectral normalization when applied to different NN components of model-based RP PGMs.}
\label{fig_abl_sn_func}
\end{figure}
We observe in Fig. \ref{fig_abl_sn_func} that for both the hopper and the walker2d tasks, applying SN to the model and policy simultaneously achieves the best performance, which supports our theoretical results. Besides, learning a smooth transition kernel by applying SN to the neural network model only is slightly better than only applying SN to the policy. At the same time, the vanilla implementation of model-based RP PGM fails to give acceptable results.

\paragraph{Does SN improve other MBRL algorithms by smoothing the models?} We have established that smoothness regularization, such as SN, in model-based RP PGMs can reduce the gradient variance and improve their convergence and performance. However, it is not necessarily the case for other model-based RL methods. In this part, we investigate whether SN can improve previous MBRL algorithms due to a smoothed model. Specifically, we evaluate MBPO \cite{janner2019trust}, a popular MBRL algorithm when SN is added to different numbers of layers in the model neural network. The results are shown in Figure \ref{fig_sn_mbpo}. We observe that SN has a negative influence when applied, which is in contrast to our findings that SN is beneficial to RP PGMs.
\begin{figure}[H]
\centering
\subfigure{
    \begin{minipage}[t]{0.45\linewidth}
        \centering
        \includegraphics[width=0.8\textwidth]{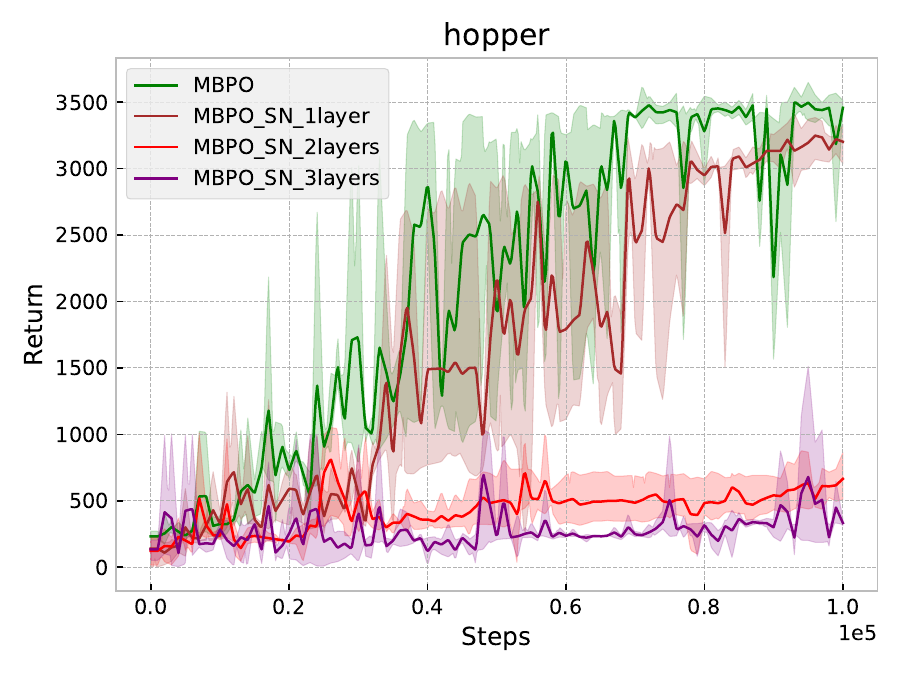}\\
    \end{minipage}
}
\subfigure{
    \begin{minipage}[t]{0.45\linewidth}
        \centering
        \includegraphics[width=0.8\textwidth]{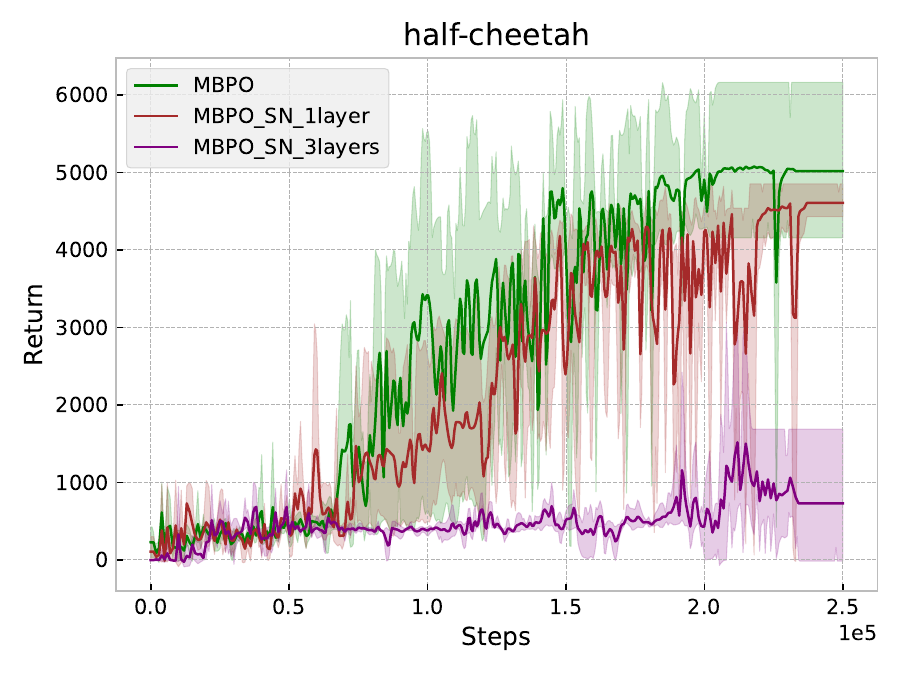}\\
    \end{minipage}
}
\caption{Ablation on the effect of SN in other MBRL algorithms. Spectral normalization has a negative effect when applied to MBPO, indicating that SN by smoothing the model, does \textit{not} necessarily lead to better gradient estimation or improve the performance for MBRL methods other than model-based RP PGMs.}
\label{fig_sn_mbpo}
\end{figure}

\subsection{Ablation on Different Model Learners}
Our main theoretical results in Section \ref{sec_main_result} depend on the model error defined in \eqref{error_model_def}, which, however, cannot directly serve as the model training objective. For this reason, we evaluate different model learners: single- and multi-step ($h$-step) state prediction models, as well as multi-step predictive models integrated with the directional derivative error \citep{li2021gradient}. The results are reported in Figure \ref{fig::model_learner}. 

\begin{figure}[H]
\centering
\subfigure{
    \begin{minipage}[t]{0.45\linewidth}
        \centering
        \includegraphics[width=0.8\textwidth]{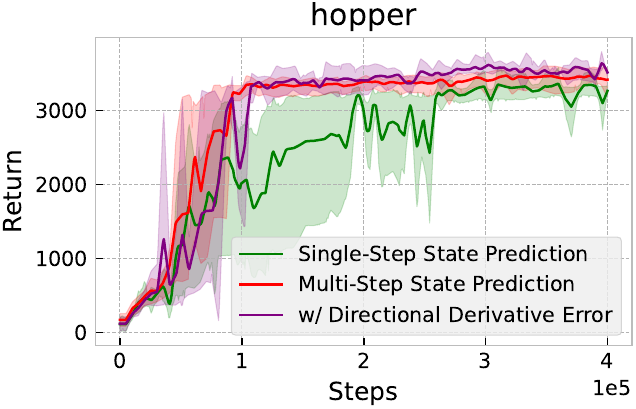}\\
    \end{minipage}
}
\subfigure{
    \begin{minipage}[t]{0.45\linewidth}
        \centering
        \includegraphics[width=0.8\textwidth]{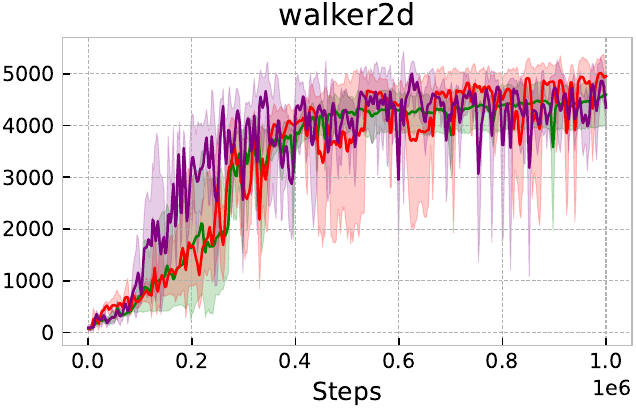}\\
    \end{minipage}
}
\caption{Ablation on different model learners: single-step and multi-step state prediction models, and multi-step state prediction models trained with an additional directional derivative error.}
\label{fig::model_learner}
\end{figure}

We observe that enlarging the prediction steps benefits training. The algorithm also converges faster in walker2d when considering derivative error, which approximately minimizes \eqref{error_model_def} and supports our analysis. However, calculating the directional derivative error by searching $k$ nearest points in the buffer significantly increases the computational cost, for which reason we use $h$-step state predictive models as default in experiments.

\end{document}